\newcommand{\ie}{i.e., }
\newcommand{\eg}{e.g., }
\newcommand{\iid}{i.\,i.\,d.\;}
\newcommand{\wrt}{\textit{w}.\textit{r}.\textit{t}.\;}
\newcommand{\etal}{\textit{et} \textit{al}.\;}
\newcommand{\C}{\mathcal{C}}
\newcommand{\D}{\mathcal{D}}
\newcommand{\V}{\mathcal{V}}
\newcommand{\W}{\mathcal{W}}
\newcommand{\E}{\mathcal{E}}
\newcommand{\U}{\mathcal{U}}
\renewcommand{\P}{\mathcal{P}}
\newcommand{\PM}{\mathcal{P}_\mathcal{M}}
\newcommand{\Q}{\mathcal{Q}}
\newcommand{\F}{\mathcal{F}}
\newcommand{\M}{\mathcal{M}}
\newcommand{\G}{\mathcal{G}}
\newcommand{\GM}{\mathcal{G}_\mathcal{M}}
\newcommand{\expectation}[2][]{\mathbb{E}_{#1}\left[#2\right]}
\newcommand{\powerset}[1]{\mathbb{P}({#1})}
\newcommand{\X}{\textbf{X}}
\newcommand{\Confounder}[2]{U_{\{{#1}, {#2}\}}}
\newcommand{\Confounders}[1]{\U_{\{{#1}, \cdot\}}}
\newcommand{\confounders}[1]{u_{\{{#1}, \cdot\}}}
\newcommand{\sample}[2]{{#1}^{({#2})}}
\newcommand\indep{\protect\mathpalette{\protect\independenT}{\perp}}
\def\independenT#1#2{\mathrel{\rlap{$#1#2$}\mkern2mu{#1#2}}}
\renewcommand{\max}[1]{\underset{{#1}}{\textrm{max}}\,}
\renewcommand{\emptyset}{\varnothing}
\newcommand{\noarrow}{\begin{tikzcd}[ampersand replacement=\&, sep=small, cramped]{}\arrow[r,-,"\smash{?}"]\&{}\end{tikzcd}}
\newcommand{\Fr}{Fr_\G}
\theoremstyle{plain}
\newtheorem{theorem}{Theorem}[section]
\newtheorem{proposition}[theorem]{Proposition}
\newtheorem{lemma}[theorem]{Lemma}
\newtheorem{corollary}[theorem]{Corollary}
\theoremstyle{definition}
\newtheorem{definition}[theorem]{Definition}
\newtheorem{assumption}[theorem]{Assumption}
\theoremstyle{remark}
\newtheorem{remark}[theorem]{Remark}
\title{Practical do-Shapley Explanations with Estimand-Agnostic Causal Inference}
\author{%
\textbf{Álvaro Parafita}$^{1}$ \quad \textbf{Tomas Garriga}$^{1, 2}$ \quad \textbf{Axel Brando}$^1$ \quad \textbf{Francisco J. Cazorla}$^1$ \\
$^1$Barcelona Supercomputing Center \quad $^2$Novartis\\
\texttt{\{alvaro.parafita,axel.brando,francisco.cazorla\}@bsc.es}\\
\texttt{tomas.garriga\_dicuzzo@novartis.com}
}
\begin{document}

\maketitle

\begin{abstract}
  Among explainability techniques, SHAP stands out as one of the most popular, but often overlooks the causal structure of the problem. In response, do-SHAP employs interventional queries, but its reliance on estimands hinders its practical application. To address this problem, we propose the use of estimand-agnostic approaches, which allow for the estimation of any identifiable query from a single model, making do-SHAP feasible on complex graphs. We also develop a novel algorithm to significantly accelerate its computation at a negligible cost, as well as a method to explain inaccessible Data Generating Processes. We demonstrate the estimation and computational performance of our approach, and validate it on two real-world datasets, highlighting its potential in obtaining reliable explanations.
\end{abstract}

\section{Introduction}
\label{sec:intro}

\begin{wrapfigure}{r}{0.3\textwidth}
\centering
\begin{tikzpicture}[node distance={4em}, thick, main/.style = {draw, circle}]
\node[main] (A) {A};
\node[main] (S) [right of=A] {S}; 
\node[main] (E) [above of=S] {E};
\node[main] (Y) [right of=S] {Y}; 
\draw[->] (A) -- (S); 
\draw[->] (A) -- (E); 
\draw[->] (E) -- (S); 
\draw[->] (E) -- (Y); 
\draw[->] (S) -- (Y); 
\end{tikzpicture}
\caption{\textit{Salary} causal graph: Age (A), Education (E), Seniority (S) and Salary (Y).}
\label{graph:salary}
\end{wrapfigure}
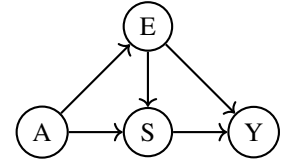

% Why do we need explainability?
The widespread adoption of Machine Learning (ML) systems has raised concerns about their limitations: models can replicate human biases \citep{angwin2016propublica}, base their outcomes on spurious correlations \citep{neuhaus2023spurious}, or be vulnerable to malicious adversarial attacks \citep{szegedy2014intriguing}. Since most of these systems are black-boxes, there is an ever-increasing need for explainability techniques to make sense of the model. This is especially relevant \wrt fairness, the right to explanation \citep{euro2016gdpr}, debugging, auditing, and fostering user trust in the system.

In response to this pressing need, the field of explainability has steadily gained traction, resulting in several approaches \citep{zhang2021expl_survey} to explain model predictions. Among them, the Shapley value (SV, or SHAP) \citep{strumbelj2014explaining} is one of the most popular, being the unique attribution strategy fulfilling a set of axioms aligned with human intuition (see appendix \ref{appendix:shap_axioms}). SVs are derived from a \textit{value function} $\nu$ measuring the effect of a subset (\textit{coalition}) $\textbf{S}$ of features $\X$ on the model's prediction; each $\nu$ results in a different kind of SV, the most common being \emph{marginal} and \emph{conditional} SHAP \citep{chen2023shap_survey}.

However, both of these options ignore the \emph{causal structure} underlying the data; for instance, \cref{graph:salary} represents the salary $Y$ of an employee of age $A$ with a certain education level $E$ and seniority level $S$. Let $f$ be a ML model $f(\X) \approx \expectation{Y \mid \X}$, learning $Y$ given inputs $\X = \{A, E, S\}$. Consider $\nu(\{E\})$. In marginal SHAP, $\nu$ assigns values $\{E=e\}$ and marginalizes the complementary set $(a, s) \sim \P(A, S)$ regardless of how the coalition's values causally affect them ($E \rightarrow S$). Conditional SHAP does consider these effects, but conditionally, $(a, s) \sim \P(A, S \mid E=e)$, producing anti-causal effects to $A$ (\ie we cannot change \textit{age} by granting them a degree). Thus, we cannot ignore the problem's causal structure. Please refer to appendix \ref{appendix:salary} for an extended discussion on this example. % In summary, only by taking into account the causal structure of the data, can we obtain reliable explanations.
% In summary, \textbf{we can only obtain actionable\footnote{Actionable explanations suggest \textit{interventions} whose real-world outcome aligns with the predicted result.} explanations by taking into account the causal structure of the data.}

% Causal SHAP strategies
Several works \citep{frye2020asymmetric_shap, heskes2020causal_shap, lauritzen2002chaingraph, janzing2020feature} discussed the limitations of non-causal SHAP and proposed limited approaches with a causal interpretation. Jung \etal \citep{jung2022do_shap} proposed do-SHAP, defining $\nu$ as a causal, interventional query estimated with Causal Inference: first transforming the query into a probabilistic formula (the \emph{estimand}) only containing terms from the observational distribution $P(\X)$, training ML models on these terms to estimate them and bringing it back into the formula for the final estimation. The main drawback of \textit{estimand-based} (EB) estimation is that do-SVs require computing up to $2^{|\X|}$ causal queries, one for each coalition $\textbf{S} \subseteq \X$, with different estimands and ML models to estimate their terms. As a result, the process becomes too manual and impractical for complex graphs. In fact, do-SHAP's authors stated they ``are not aware of any general causal effect estimators suitable for estimating the expression". %, which makes it challenging to apply their technique to complex graphs.

Here lies our first contribution: by employing the \textbf{estimand-agnostic (EA)} approach \citep{parafita2022dcg}, based on Structural Causal Models (SCMs), any causal effect required by do-SHAP can be estimated from a single model following a general procedure instead of query-specific estimands, thereby enabling the computation of do-SVs in a general, practical way (\cref{subsec:identifiability,subsec:doshap}). Secondly, we propose the \textbf{Frontier-Reducibility Algorithm} (FRA), which substantially reduces the number of coalitions that need to be evaluated. Although FRA retains the exponential complexity of exact do-SHAP, it delivers significant speed-ups at virtually no additional cost (\cref{subsec:doshap_properties}). Thirdly, we devise a do-SHAP explainability strategy, not only for accessible ML systems, but also for natural, \textbf{\textit{inaccessible} Data Generating Processes} (\cref{subsec:doshap_expl}). We validate the estimation capabilities of the EA approach on do-SVs, demonstrate the speedup of FRA, and showcase these techniques on two real-world datasets to illustrate the power of do-SHAP explanations (\cref{sec:experiments} and appendix \ref{appendix:applications}). Finally, we address the limitations of our approach in \cref{sec:limitations} and finish with our conclusions in \cref{sec:conclusion}.

\section{Related work}
\label{sec:related}

% Attribution techniques
Among many explainability techniques (see the survey in \citep{zhang2021expl_survey}), we are particularly interested in feature attributions, particularly Shapley values \citep{lundberg2017shap}. There is a vast literature on SHAP, discussing the choice of the value function $\nu$ and estimation strategies for the SV (\eg permutation sampling, adaptive sampling or model-specific strategies); refer to \citep{chen2023shap_survey} for an extensive survey on the topic. Our main focus is in SHAP approaches that leverage the underlying causal structure of the data, operating from Pearl's Structural Causal Models perspective \citep{pearl2009causality}. Asymmetric Shapley values \citep{frye2020asymmetric_shap} employ a topological order of the graph to restrict which permutations are considered in the computation of conditional-SHAP, thereby granting more attribution to ancestors of other features. Causal Shapley values \citep{heskes2020causal_shap} properly considers the impact of causal interventions on Shapley attributions, but assumes a partial causal ordering of the graph in order to avoid dealing with latent confounders. do-SHAP \citep{jung2022do_shap} does require a full causal graph, but provides a full method to compute attribution on all variables, as long as an estimand can be found for every causal query. Finally, in a different direction, Shapley flow \citep{wang2021shapley} studies causal attributions on the causal graph's edges instead of its nodes/variables.

% SCM approaches: Neural Autoregressive, DCG, Xia, Pawlowski, CAREFL, VACA, CNF
In order to avoid do-SHAP's limitations, we propose EA methods \citep{parafita2022dcg}, which train a SCM modelling the data distribution to estimate causal queries from it. This approach is explored in the Neural Causal Models framework \citep{xia2021neural}. In this line, recent contributions employ Deep Learning for SCM modeling: CausalGAN \citep{kocaoglu2017causalgan} uses Generative Adversarial Networks \citep{goodfellow2020gan} to model images in an SCM containing descriptive factors of the image; Parafita \& Vitrià \citep{parafita2019dcn} propose the Distributional Causal Node as a way to model mixed-type distributions (\ie with discrete and continuous random variables) and expand their framework with Deep Causal Graphs \citep{parafita2022dcg}; Pawlowski \etal \citep{pawlowski2020deep} propose Normalizing Flows (NFs) \citep{papamakarios2019normalizing} and Variational AutoEncoders \citep{kingma2013vae} for SCMs with medical image nodes; and Diffusion-based Causal Models \citep{chao2023interventional} use Diffusion Models \citep{ho2020denoising} to train their SCMs on high-dimensional data. A promising alternative models SCMs not node by node as all previous works, but the graph as a whole with a single function of its noise signals, thereby avoiding error propagation when sampling. Two of these approaches are VACA \citep{sanchez2021vaca}, which uses Graph Neural Networks \citep{zhou2020gnn}, and Causal Normalizing Flows \citep{javaloy2023causal}, with a single NF for the whole graph.

\section{Preliminaries}
\label{sec:preliminaries}

This section establishes the concepts and notations needed throughout this work. We start with Structural Causal Models (SCMs), a transparent and concise framework to define the causal assumptions of a data distribution, followed by a discussion on how (identifiable) causal queries can be estimated through SCMs, even in the presence of latent confounders. We then define the general Shapley value, from which we can derive do-SHAP.

\paragraph{Notation}

Sets are represented by boldface letters (\X) and their elements by simple letters ($X \in \X$), unless clearly distinguishable. Let the power set of $\X$ be denoted by $\mathbb{P}(\X) := \{\varnothing \subseteq \textbf{S} \subseteq \X\}$, $[K]:=\{1, \dots, K\}$ an index set, $\Pi(\textbf{S})$ the set of permutations of elements of $\textbf{S}$ and $<_\pi$ the order entailed by $\pi$ (\eg $3 <_\pi 2$ in $\pi = (3, 1, 2)$).
Given a graph $\G = (\textbf{V}, \textbf{E})$ and a subset of vertices $\textbf{X} \subseteq \textbf{V}$, $An(\textbf{X})$ denotes the set of ancestors of $\textbf{X}$ (including $\textbf{X}$) and $De(\textbf{X})$ the set of descendants (including $\textbf{X}$). For a certain node $X \in \textbf{V}$, let $Pa_X$ denote the set of parents of $X$ (not including $X$).
Random variables (r.v.s) are denoted in uppercase ($X$) with realizations in lowercase ($x$). Let $x \sim \P(X)$ denote the generation of a new sample $x$ from the distribution $\P(X)$.

\subsection{Structural Causal Models}

Let $\M=(\V, \W, \P, \F)$ be a Structural Causal Model (SCM), consisting of a set of \emph{measured} r.v.s $\V = (V_1, \dots, V_K)$, a set of \emph{latent} r.v.s $\W$, their \emph{priors} $\P(\W) = \prod_{W \in \W} \P(W)$ (all mutually independent), and a set of \emph{functions} $\F := \{f_k\}_{k \in [K]}$ for each measured variable. The set of latent variables consists of $\W := \E \cup \U$, with $\E := (E_1, \dots, E_K)$ the \emph{exogenous noise signals}, $E_k$ corresponding to $V_k$, and $\U$ a set of \emph{confounders} $\Confounder{k}{l}$ affecting distinct $\V$-nodes $V_k$ and $V_l$.\footnote{
    The case when $\U$ is empty, \ie no latent confounders, is known as the \emph{causal sufficiency assumption}.
} Finally, each $f_k \in \F$ is a deterministic function $V_k = f_k(Pa_k, \Confounders{k}, E_k)$ that returns $V_k$'s realizations given its measured parents $Pa_k \subseteq \V \setminus \{V_k\}$, confounders $\Confounders{k} := \{\Confounder{k}{l} \in \U \mid l \in [K]\}$ and the corresponding $E_k$. Let $Pa'_k := Pa_k \cup \Confounders{k}$.

Let $\GM = (\textbf{V}, \textbf{E})$ be the \emph{directed graph} associated to $\M$, with vertices, nodes or variables $\textbf{V} := \V \cup \W$ and edges $\textbf{E} := \{X \rightarrow V_k \mid \forall V_k \in \V, X \in Pa'_k \cup \{E_k\}\}$.\footnote{
    When drawing $\GM$, we usually omit $\W$: $\E$ is implicit, and any confounders $\Confounder{k}{l}$ are denoted by $V_k \leftrightarrow V_l$.\label{footnote:no_latents}
} If $\GM$ is a \emph{Directed Acyclic Graph} (DAG), there is a \emph{topological order}\footnote{
    We say $\V = (V_1, \dots, V_K)$ is in a \emph{topological order} of the DAG $\G$ if $\forall k,l\in [K], V_k \in An(V_l) \Rightarrow k \leq l$. Let $<_{\G}$ represent the particular order defined by $\G$: $X <_{\G} Y$.
} for $\V$. In that case, we can define $\M$'s probability distribution $\PM(\V)$ from the $SCM$'s \emph{sampling procedure}: it starts by sampling any latent variable $E_X \in \E, U \in \U$ from their priors $\varepsilon_X \sim \P(E_X), u \sim \P(U)$; then, following the topological order $k=1..K$, it samples every $V_k \in \V$ by applying $v_k = f_k(pa_k, \confounders{k}, \varepsilon_k)$.
% A fundamental property of SCMs is that we can factorize their joint distribution if there are no latent confounders $\U$ (\textbf{Markovian SCM}):
% \begin{equation}
% P(V_1, \dots, V_k) = \prod_{k\in[K]} P(V_k \mid Pa_k),
% \label{eq:markovian_likelihood}
% \end{equation}
% or when there are, by marginalizing over their prior $\P(\U)$ (\textbf{Semi-Markovian SCM}):
% \begin{equation}
% P(V_1, \dots, V_k) = \expectation[\U]{\prod_{k\in[K]} P(V_k \mid Pa_k, \Confounders{k})}.
% \label{eq:semimarkovian_likelihood}
% \end{equation}
% These properties emerge from the application of \textit{$d$-separability}, a set of rules for the estimation of conditional independence queries. See \cref{subsec:docalc} and \citep{pearl2009causality} for more details.

We define an intervention $do(X=x)$, also denoted $\widehat{x}$, on a variable $X \in \V$ as the replacement of $f_x$ with the assignment $X \gets x$. $X$ takes this value independently of its parents, but any descendants may be affected by this change. Note that this transforms the SCM $\M$ into an \emph{intervened model} $\M_x$, \emph{graph} $\G_x := \G_{\M_x}$ (without any edges pointing to $X$), and \emph{distribution} $\P_x(\V) := \P_{\M_x}(\V)$. We can also define interventions on sets of variables $do(\X=\textbf{x})$ by the replacement of each of the corresponding functions $\{f_X \mid X \in \X\}$. The terms $P_{\textbf{x}}(Y) = P(Y \mid do(\textbf{X}=\textbf{x})) = P(Y \mid \widehat{\textbf{x}})$ are used interchangeably, for clarity or economy of notation depending on the case.

\subsection{Identifiability and the estimand-agnostic approach}
\label{subsec:identifiability}

Given r.v.s $\V$ and an \iid dataset $\D = (\sample{v}{i})_{i\in[N]} \sim \P(\V)$ sampled from an unknown \emph{Data Generating Process} (DGP) with a strictly positive probability measure $\P(\V)$, assume that $\P(\V)$ follows an unknown SCM $\M$, but whose graph $\GM$ is known. For example, in \cref{graph:salary}, $\V = (A, E, S, Y)$ and $\U = \varnothing$. Let us estimate the \emph{causal query} $\Q := \expectation[Y]{Y \mid \widehat{e}}$ by transforming it into an observational formula through the rules of do-calculus \citep{pearl2009causality} (see appendix \ref{appendix:causality}). At the end of this process, we obtain the final formula, the \emph{estimand}: in the example, $\Q = \expectation[Y]{Y \mid \widehat{e}} = \expectation[A]{\expectation[Y]{Y \mid e, A}}$. If such a formula exists, the query is said to be (non-parametrically) \emph{identifiable} in $\GM$. Fortunately, there are algorithms to automatically determine identifiability and obtain the corresponding estimand \citep{shpitser2006interventional, shpitser2006interventionalconditional}, implemented in R \citep{tikka2017identifiability} and Python \citep{pedemonte2021identifiability}.

The EB approach employs ML models to approximate each of the probabilistic terms in an estimand; in the example, we can train a model for $f(E, A) \approx \expectation[Y]{Y \mid E, A}$ and then estimate the formula with Monte Carlo for the final estimation. However, this approach does not scale, since, for each and every query, we need to 1) derive the corresponding estimand for that query; 2) train ML models to estimate each term in the formula; and 3) put it all together to arrive at an answer for the query. Even with algorithms to automatically extract the estimand, it is not trivial to compute these formulas, especially if we need to answer exponentially many queries, as will be the case for do-SHAP.

However, if we had access to the original SCM $\M$, we could simply take $N$ \iid samples from the intervened distribution, $(\sample{y}{i})_{i\in[N]} \sim \P_e(Y)$, using $\M_e$'s sampling procedure. Regrettably, we rarely have access to the underlying DGP. Instead, let us consider a family of proxy SCMs $\M_\Theta=(\V, \W, \P', \F_\Theta)$ following the same graph $\G_{\M_\Theta} = \GM$ and whose $\F_\Theta$ depends on a set of parameters $\Theta$ (\eg an untrained neural network with parameter space $\Theta$). Irrespective of our choice of prior $\P'$ and functions $\F_\Theta$, if both are expressive enough, we can train $\M_\Theta$ to find a value $\theta$ so that it models the data distribution exactly, $\P_{\M_\theta}(\V) = \P(\V)$ (in an infinite data setting). Then, by the application of the sampling procedure on the intervened learned model, we could estimate the query through the proxy SCM procedure, without ever using the estimand. If the query was identifiable in $\G$, the result is guaranteed to be the same as if we had used the original SCM $\M$.

It is trivial to see why\footnote{
    Please refer to \cite{xia2021neural}, Corollary 2, for a formal proof.  In their terms, our SCMs are $\G$-constrained, $L_1$-consistent Neural Causal Models that can effectively estimate interventional queries (such as do-SHAP's $\nu(\textbf{S})$ coalition values) as long as they are identifiable by using the mutilation process (see appendix \ref{subsec:do_shap_estimation}).
}: since our identifiable query $Q$'s value is derived from the observational formula of the estimand, it depends exclusively on observational terms resulting from the joint distribution $\P_M(\V)$, which we assume is correctly represented by our trained proxy $\M_\theta$. Therefore, as long as we derive its value from the distribution entailed by the proxy, we will necessarily arrive at the same result as with $\M$; otherwise, $\P_{\M_\theta}(\V) \neq \PM(\V)$. In other words, even though its latent priors and functional assignments may be different, we can still compute the causal query through the proxy SCM \textit{because} there is an estimand for $\Q$ in $\GM$. Hence, this results in an alternative approach for causal query estimation, the EA approach \citep{parafita2022dcg}: define a trainable SCM $\M'$ with the underlying SCM's graph $\G$, train it to learn the observational distribution $\PM(\V)$, and compute any identifiable queries from that single model using the SCM's procedures, not the specific estimand for each query. This will become essential for the computation of do-Shapley values.

\subsection{The Shapley value}
\label{subsec:shap}
Consider a set of $K$ players $\X$ and a \emph{value function} $\nu: \powerset{\X} \rightarrow \mathbb{R}$. Let $\Delta_\nu(\textbf{S}) := \nu(\textbf{S}) - \nu(\varnothing)$ be the corresponding \emph{coalitional} (cooperative) \emph{game}. We define the Shapley value \citep{shapley1953shap} $\phi_{\Delta_\nu}(X)$ for a player $X \in \X$, denoted by $\phi_\nu(X)$ or simply $\phi_X$ unless when leading to ambiguity, as:
% Consider a set of $K$ players $\X$ and a \emph{value function} $\nu: \powerset{\X} \rightarrow \mathbb{R}$. We can define the corresponding \emph{coalitional} (cooperative) \emph{game} $\Delta_\nu(\textbf{S}) := \nu(\textbf{S}) - \nu(\varnothing), \forall \textbf{S} \in \powerset{\X}$ such that $\Delta_\nu(\varnothing) = 0$. We define the Shapley value \citep{shapley1953shap} $\phi_{\Delta_\nu}(X)$ for a player $X \in \X$, denoted by $\phi_\nu(X)$ or simply $\phi_X$ unless when leading to ambiguity, as:
\begin{align}
    \phi_X 
    :=&\ \sum_{\textbf{S} \subseteq \X \setminus \{X\}} \frac{1}{K} {K - 1 \choose |\textbf{S}|}^{-1} (\nu(\textbf{S} \cup \{X\}) - \nu(\textbf{S})) \label{eq:shap_coalitions} \\
    =&\ \frac{1}{K!}\sum_{\pi \in \Pi(\X)} (\nu(\X_{\leq_\pi X}) - \nu(\X_{<_\pi X})), \label{eq:shap_permutations}
\end{align}
where $\X_{<_\pi X} := \{X' \in \X \mid X' <_\pi X\}$, and equivalently for $\X_{\leq_\pi X}$ and $\leq_\pi$. Both equations are equivalent given that the sum over weighted subsets $\textbf{S}$ results from the average over all permutations of the set of players $\X$. Note that SVs fulfill \emph{efficiency}: $\sum_{X \in \X} \phi_X = \nu(\X) - \nu(\varnothing) = \Delta_\nu(\X)$ (\ie SHAP \emph{attributions} add up to the contributions of the whole set $\X$).

\subsection{Shapley value estimation}
\label{subsec:approx_shap}

Even though \cref{eq:shap_permutations} requires $2 \cdot K!$ computations of $\nu$, we can consider each permutation $\pi \in \Pi([K])$ as a sample from the uniform distribution over the set of permutations, $\pi \sim \mathcal{U}(\Pi([K]))$, resulting in $\phi_X = \expectation[{\pi \sim \mathcal{U}(\Pi([K]))}]{\nu(\X_{\leq_\pi X}) - \nu(\X_{<_\pi X})}$, which can be approximated by Monte Carlo, sampling $N$ i.i.d. permutations and averaging their results \citep{mann1960approxshap}. Quasi-random and adaptive sampling strategies can also be employed for faster convergence; please refer to \citep{strumbelj2014explaining} for more details.

On the other hand, both methods result in a significant number of subset collisions, making it worthwhile to cache the $\nu(\textbf{S})$ values to avoid unnecessary computations. We derive the expected number of coalitions sampled after $N$ permutations in appendix \ref{appendix:cache}, which let us define a computational budget (\ie how many permutations to sample) based on desired coalition coverage.

\section{Method}
\label{sec:method}

In the following, we present our contributions: we propose EA techniques to make do-SHAP practical on complex graphs; we present the Frontier-Reducibility Algorithm (FRA), an efficient procedure to significantly reduce the number of coalitions to evaluate during do-SHAP (regardless of using EB or EA methods); finally, we present a theorem allowing for do-Shapley explanations on inaccessible DGPs. Please refer to appendix \ref{appendix:doSHAP_example} for a detailed example illustrating the application of our approach from start to finish.

Please note that throughout this work, we do not claim polynomial-time tractability of do-SHAP. Any discussion of computational \textit{efficiency} refers solely to FRA in isolation, and should not be interpreted as implying polynomial-time performance for do-SHAP.

\subsection{The do-Shapley value}
\label{subsec:doshap}

Consider an SCM $\M = (\V, \W, \P, \F)$, a target r.v. $Y \in \V$, a subset of $K$ variables $\X \subseteq \V \setminus \{Y\}$ and a certain sample $\textbf{x} \sim \P(\X)$ we wish to explain. Given a coalition $\textbf{S} \in \powerset{\X}$ with realizations $\textbf{s}$ (a subset of $\textbf{x}$), let us define the \textit{value function} $\nu_\textbf{x}(\textbf{S})$:
\begin{equation}
    \nu_\textbf{x}(\textbf{S}) := \expectation{Y \mid do(\textbf{S}=\textbf{s})}
\label{eq:value}
\end{equation}
Then, the do-Shapley value (do-SV) \citep{jung2022do_shap} over variables $\X$ with realizations $\textbf{x} \sim \P(\X)$ on a variable $X \in \X$ is $\phi_X := \phi_{\nu_\textbf{x}}(X)$. For economy of notation, we will simply write $\nu := \nu_\textbf{x}$. 

\begin{assumption}
Let us assume an unknown SCM $\M$ but with known DAG $\GM$\footnote{
    This is a standard assumption in the SCM community. If the graph is not known, Causal Discovery algorithms \citep{spirtes2016discovery} can derive a potential graph, later refined with domain expertise and/or randomized experiments.
}. Further assume its do-SVs are \textit{identifiable} in $\GM$ (\ie $\nu(\textbf{S})$ is identifiable\footnote{
    Running the identifiability algorithms (\cref{subsec:identifiability}) on all $2^{|\X|}$ terms a priori is unnecessary. Instead, when using the approximation method discussed in \cref{subsec:approx_shap}, we can test identifiability for each new sampled query, caching results for repeated coalitions. If any coalition is found to be non-identifiable during this process, an error state should halt do-SHAP immediately; otherwise, if no non-identifiable coalition is found, our do-SV estimation will be valid. Moreover, certain graph structures (\eg no latent confounders) make do-SVs trivially identifiable; a general graphical criterion for do-SV identifiability is left for future work.
} $\forall \textbf{S} \subseteq \X$) and that the resulting $\P(\V)$ is strictly positive. Note that $\GM$ may include latent confounders as long as its do-SVs are identifiable.
\end{assumption}

Jung \etal \citep{jung2022do_shap} employed the EB approach, using an estimand for each term $\nu(\textbf{S})$. This makes do-SHAP impractical, requiring different ML models for the estimand's terms and an ad-hoc computation following the formula. In response, we propose to use the \textbf{EA approach}: 1) train a single SCM to learn $\P(\V)$; 2) for each query $\nu(\textbf{S})$, determine if it is identifiable (as we do in the EB approach); and 3) use general SCM-based procedures, not the estimand, to estimate the query. An illustrative example of this SCM strategy is provided in appendix \ref{appendix:doSHAP_example}; for further details, please refer to \citep{kocaoglu2017causalgan, pawlowski2020deep, parafita2022dcg, javaloy2023causal}).

\subsection{Efficient estimation of the do-Shapley value}
\label{subsec:doshap_properties}
In this section, we derive a novel algorithm to accelerate do-SHAP. We leave all proofs and the more efficient version of the algorithm to appendix \ref{appendix:proofs}.

\begin{proposition}
For any non-ancestor $X$ of $Y$, $\phi_X = 0$.
\label{prop:doshap0}
\end{proposition}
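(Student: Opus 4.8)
The plan is to show that intervening on a non-ancestor $X$ of $Y$ never changes the value function: I claim $\nu(\textbf{S} \cup \{X\}) = \nu(\textbf{S})$ for every coalition $\textbf{S} \subseteq \X \setminus \{X\}$. Granting this, substituting $\nu(\textbf{S}\cup\{X\}) - \nu(\textbf{S}) = 0$ into the coalitional form \cref{eq:shap_coalitions} of the Shapley value makes every summand vanish (the weights $\tfrac{1}{K}\binom{K-1}{|\textbf{S}|}^{-1}$ are finite), so $\phi_X = 0$. Hence the whole proof reduces to this one equality.

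The first ingredient is an elementary graph fact: non-ancestry is preserved under interventions on other variables. Since the mutilated graph $\G_\textbf{s}$ is obtained from $\GM$ by deleting edges, any directed path in $\G_\textbf{s}$ is also a directed path in $\GM$, so $An(Y)$ computed in $\G_\textbf{s}$ is contained in $An(Y)$ computed in $\GM$; in particular $X \notin An(Y)$ in $\G_\textbf{s}$ (and likewise in $\G_{\textbf{s},x}$). The second ingredient is the SCM sampling argument for $\nu(\textbf{S}\cup\{X\}) = \nu(\textbf{S})$, i.e. $\expectation{Y \mid \widehat{\textbf{s}}, \widehat{x}} = \expectation{Y \mid \widehat{\textbf{s}}}$: run the sampling procedures of $\M_{\textbf{s},x}$ and of $\M_{\textbf{s}}$ with the same draw of all latent variables $\E, \U$. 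Following the topological order, the realization of $Y$ depends only on the structural assignments of the nodes in $An(Y)$ (within the corresponding mutilated graph). By the previous paragraph $X$ is not among those nodes; moreover, since ancestry is transitive, no ancestor $Z$ of $Y$ has $X$ as a parent, so for every ancestor of $Y$ the assignment coincides in the two intervened models — it is either $f_k$ evaluated on parents and confounders drawn from the same priors, or the constant $s_k$ when $Z \in \textbf{S}$ (with the same $s_k$), and shared confounders $\Confounder{X}{Z}$ still feed $Z$ identically in both models. By induction along the topological order every ancestor of $Y$, and therefore $Y$ itself, gets the same realization in $\M_{\textbf{s},x}$ and $\M_{\textbf{s}}$; hence the induced laws of $Y$ coincide and so do the expectations. (Equivalently, this is Rule 3 of do-calculus: in the graph where all edges into $\textbf{S}$ and into $X$ are removed, $X$ is a source with no directed path to $Y$, hence $d$-separated from $Y$, giving $P(Y \mid \widehat{\textbf{s}}, \widehat{x}) = P(Y \mid \widehat{\textbf{s}})$.)

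Putting the pieces together yields $\phi_X = 0$ as above. The main obstacle is not deep: it is the careful bookkeeping that (i) interventions on $\textbf{S}$ cannot turn $X$ into an ancestor of $Y$, and (ii) latent confounders shared between $X$ and ancestors of $Y$ do not break the equality of the two intervened distributions. Once this is checked, the Shapley identity does the rest.
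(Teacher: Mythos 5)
Your proof is correct, and while the overall skeleton matches the paper's (reduce $\phi_X = 0$ to the claim $\nu(\textbf{S} \cup \{X\}) = \nu(\textbf{S})$ for every $\textbf{S} \subseteq \X \setminus \{X\}$, then let the Shapley weights annihilate the sum), you establish the key equality by a genuinely different route. The paper proves it graphically: it invokes Rule 3 of do-calculus and shows $(Y \indep X \mid \textbf{S})_{\G_{\overline{\textbf{S}X}}}$ by contradiction, arguing that any unblocked path would have to be free of colliders (since $\textbf{S}$ has no proper ancestors in $\G_{\overline{\textbf{S}X}}$) and hence be a directed path from $X$ to $Y$, contradicting non-ancestry. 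You instead give a direct coupling argument on the SCM's sampling procedure: fixing the latent draws, the realization of $Y$ in $\M_{\textbf{s},x}$ and $\M_{\textbf{s}}$ is computed by identical assignments on $An(Y)$, since no ancestor of $Y$ has $X$ as a parent and shared confounders are unaffected by the intervention. Your argument is more elementary and self-contained (no d-separation or do-calculus needed) and in fact yields the stronger conclusion $\P_{\textbf{s},x}(Y) = \P_{\textbf{s}}(Y)$ as a pathwise identity; the paper's graphical argument has the advantage of reusing the same R3 machinery that powers the later frontier results (\cref{prop:frontier_simplifies}). One small caveat: your parenthetical "hence $d$-separated" is too quick as stated, since it only accounts for directed paths and not for collider paths out of $X$ — but since that remark is offered only as an equivalent reformulation and your main coupling argument is complete, this is not a gap.
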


Consequently, we can restrict $\G$ to $Y$'s ancestors, since every other do-SV will necessarily be 0.

\begin{assumption}
Given an SCM $\M=(\V, \W, \P, \F)$ and a target r.v. $Y \in \V$, we assume $\M$ to be the projected SCM $\M [An(Y)]$ (see \cref{def:projected_scm}) and simply denote it $\M$. From now on, $\V = \X \cup \{Y\}$ with $\X := An(Y) \setminus \{Y\} = (V_0, \dots, V_{K-1})$ in a topological order. Let $Y := V_K$.
\label{assume:projection}
\end{assumption}

Let us introduce the concept of \emph{frontiers}, with which we will reduce coalitions.

\begin{definition}
    Given any node $X \in \X$, a subset $\textbf{S} \subseteq \X$ is a \textit{frontier} between $X$ and $Y$ if $X \not \in \textbf{S}$ and all directed paths $p=(X, \dots, Y)$ from $X$ to $Y$ are blocked by $\textbf{S}$, \ie $\exists Z \in \textbf{S}$ s.t. $Z \in p$. We denote the set of frontiers between $X$ and $Y$ in $\G$ as $\Fr(X, Y)$.
    \label{def:frontier}
\end{definition}

\begin{proposition}
    Given $X \in \X$ and $Y$, and a subset $\textbf{S} \in \Fr(X, Y)$, then $\nu(\textbf{S} \cup \{X\}) = \nu(\textbf{S})$.\label{prop:frontier_simplifies}
\end{proposition}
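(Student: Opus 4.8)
The plan is to show that intervening on $X$ in addition to a frontier $\textbf{S}$ has no effect on the distribution of $Y$, because every directed path along which $X$ could influence $Y$ is already severed by the intervention on $\textbf{S}$. Concretely, I want to compare the intervened models $\M_{\textbf{s}}$ and $\M_{\textbf{s}, x}$ and argue that $\P_{\textbf{s}}(Y) = \P_{\textbf{s},x}(Y)$ for the relevant value $x$ of $X$ (the one coming from $\textbf{x}$), which immediately gives $\nu(\textbf{S} \cup \{X\}) = \expectation{Y \mid do(\textbf{S}=\textbf{s}, X=x)} = \expectation{Y \mid do(\textbf{S}=\textbf{s})} = \nu(\textbf{S})$. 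Since by Assumption~\ref{assume:projection} every node in $\V$ is an ancestor of $Y$, and $X \in \X = An(Y)\setminus\{Y\}$, any way $X$ transmits information to $Y$ in the intervened model $\M_{\textbf{s}}$ must be through a directed path $X \to \cdots \to Y$ that avoids the intervened set $\textbf{S}$ (interventions on $\textbf{S}$ cut all incoming edges to $\textbf{S}$-nodes, so no path passing through an $\textbf{S}$-node is active). But $\textbf{S} \in \Fr(X,Y)$ means by Definition~\ref{def:frontier} that every directed path from $X$ to $Y$ contains a node of $\textbf{S}$; hence in $\M_{\textbf{s}}$ there is no directed path from $X$ to $Y$ at all, i.e. $Y \notin De(X)$ in the mutilated graph $\G_{\textbf{s}}$.

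The key steps, in order: (i) pass to the intervened graph $\G_{\textbf{s}}$ obtained by deleting all edges into $\textbf{S}$-nodes, and observe that directed paths $X \to \cdots \to Y$ in $\G_{\textbf{s}}$ are exactly those directed paths in $\G$ from $X$ to $Y$ that do not pass through any node of $\textbf{S}$ (a path through a node $Z \in \textbf{S}$ would need an edge into $Z$, which has been removed); (ii) invoke the frontier hypothesis to conclude there are no such paths, so $Y$ is not a descendant of $X$ in $\G_{\textbf{s}}$; (iii) now perform the further intervention $do(X=x)$ on $\M_{\textbf{s}}$ — this only removes edges into $X$ and reassigns $X$'s value, and since $Y \notin De(X)$ in $\G_{\textbf{s}}$, the structural assignments determining $Y$ (and all of $Y$'s ancestors in $\G_{\textbf{s}}$) are untouched; (iv) conclude that the sampling procedures of $\M_{\textbf{s}}$ and $\M_{\textbf{s},x}$ induce the same marginal on the ancestors of $Y$, hence $\P_{\textbf{s}}(Y)=\P_{\textbf{s},x}(Y)$ and the expectations coincide. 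Alternatively, step (iii)–(iv) can be replaced by a one-line do-calculus argument: Rule~3 lets us drop $do(X=x)$ from $P(Y \mid do(\textbf{s}), do(x))$ precisely when $Y$ and $X$ are $d$-separated in the graph with all incoming edges to $\textbf{S}\cup\{X\}$ removed — and $X$ is then a source node with no directed path to $Y$, so this separation holds.

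The main obstacle is being careful about what "the value of $X$" means and why interventional reattribution is harmless: $\nu(\textbf{S}\cup\{X\})$ sets $X$ to its observed value $x$ from the sample $\textbf{x}$, and I must make sure the argument does not secretly rely on $x$ being distributed as in $\P_{\textbf{s}}$ — it does not, precisely because $Y$'s generative mechanism in $\G_{\textbf{s}}$ ignores $X$ entirely, so \emph{any} fixed value of $X$ yields the same $\P_{\textbf{s}}(Y)$. A secondary subtlety is ruling out non-directed (confounding, $\leftrightarrow$) paths from $X$ to $Y$: but $do(X=x)$ removes all edges into $X$, including the confounder edges $\Confounder{X}{\cdot}$, so any back-door or bidirected path out of $X$ is blocked at $X$ itself; thus only directed paths $X \to \cdots \to Y$ could matter, and those are exactly what the frontier kills. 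Once these two points are nailed down, the equality $\nu(\textbf{S}\cup\{X\}) = \nu(\textbf{S})$ follows.
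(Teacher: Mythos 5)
Your proof is correct, but your primary route differs from the paper's. The paper proves the proposition by invoking Rule 3 of do-calculus and verifying the $d$-separation $(Y \indep X \mid \textbf{S})_{\G_{\overline{\textbf{S}X}}}$ directly: it splits paths into fully directed ones (killed by the frontier) and paths containing a collider $Z$, and then argues carefully that $Z$ cannot be in $\textbf{S}$ (it has an incoming edge) nor an ancestor of $\textbf{S}$ (since $\textbf{S}$ has no ancestors in the mutilated graph), so every collider blocks. Your main argument instead works at the level of the SCM's sampling procedure: you show $Y \notin De(X)$ in $\G_{\textbf{s}}$, so the recursive composition of structural assignments that determines $Y$ never involves $f_X$, and the further intervention $do(X=x)$ therefore leaves the pushforward of the latent priors onto $Y$ unchanged for \emph{any} $x$. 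This is a more elementary, first-principles derivation of exactly the instance of Rule 3 the paper needs; it buys you freedom from the collider case analysis entirely (confounders and back-door paths are irrelevant because only the marginal of $Y$ is at stake, not the joint with $X$), and it makes explicit the useful point that the observed value $x$ plays no role. One caveat: your "one-line" Rule-3 alternative is under-justified as stated --- "$X$ is a source node with no directed path to $Y$" does not by itself give $d$-separation given $\textbf{S}$, since a collider with a descendant in $\textbf{S}$ could in principle open a path; you would need the paper's observation that $\textbf{S}$ has no ancestors other than itself in $\G_{\overline{\textbf{S}X}}$ to close that case. Since you offer that route only as an alternative and your primary argument is complete, the proof stands.
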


\begin{remark}
    For any parent $X \in Pa_Y$, no subset $\textbf{S} \subseteq \X \setminus \{X\}$ is a frontier between $X$ and $Y$.
    \label{remark:parents}
\end{remark}

Finally, using these properties, let us reduce any given coalition $\textbf{S}$ into its \textit{irreducible subset} $\textbf{S'} \subseteq \textbf{S}$, \ie $\nu(\textbf{S})=\nu(\textbf{S'})$ and $\forall X \in \textbf{S'}, \textbf{S'} \setminus \{X\} \not\in \Fr(X, Y)$. While an alternative definition exists (see \cref{prop:irreducible_coalition}), the following form motivates an algorithm to efficiently reduce coalitions.

\begin{theorem}
Given a topological order $<_\G$ in $\G$ and $\textbf{S} \subseteq \X$, let $\textbf{Z} := \{X \in \textbf{S} \mid \textbf{S}_{>_\G X} \in \Fr(X, Y)\}$, with $\textbf{S}_{>_\G X} := \{Z \in \textbf{S} \mid Z >_\G X\}$. Then $\nu(\textbf{S}) = \nu(\textbf{S} \setminus \textbf{Z})$, and $\textbf{S} \setminus \textbf{Z}$ is irreducible.\label{prop:fr}
\end{theorem}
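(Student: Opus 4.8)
I would prove the two assertions of the theorem separately: the value identity $\nu(\textbf{S})=\nu(\textbf{S}\setminus\textbf{Z})$, and then irreducibility of $\textbf{S}\setminus\textbf{Z}$. The whole argument rests on \cref{prop:frontier_simplifies} together with two elementary observations that I would record first. (i) \emph{Monotonicity of frontiers:} if $\textbf{A}\in\Fr(X,Y)$, $\textbf{A}\subseteq\textbf{B}\subseteq\X$, and $X\notin\textbf{B}$, then $\textbf{B}\in\Fr(X,Y)$, since any directed $X$-$Y$ path blocked by $\textbf{A}$ is blocked by the larger set $\textbf{B}$. (ii) \emph{Topological monotonicity along paths:} on any directed path $p=(X,\dots,Y)$ the topological index strictly increases, so every node of $p$ other than $X$ satisfies $>_\G X$; in particular a node of $\textbf{S}$ lying on such a $p$ (other than $X$) lies in $\textbf{S}_{>_\G X}$.

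For the value identity, the key lemma I would establish is: for every $X\in\textbf{Z}$ we have $\textbf{S}\setminus\textbf{Z}\in\Fr(X,Y)$. Granting it, I order $\textbf{Z}=\{X_1,\dots,X_m\}$ arbitrarily and set $\textbf{T}_j:=(\textbf{S}\setminus\textbf{Z})\cup\{X_1,\dots,X_j\}$, so $\textbf{T}_0=\textbf{S}\setminus\textbf{Z}$ and $\textbf{T}_m=\textbf{S}$. For each $j$, $\textbf{S}\setminus\textbf{Z}\in\Fr(X_j,Y)$, $\textbf{S}\setminus\textbf{Z}\subseteq\textbf{T}_{j-1}$, and $X_j\notin\textbf{T}_{j-1}$ (as $X_j\in\textbf{Z}$ and the $X_i$ are distinct), so (i) gives $\textbf{T}_{j-1}\in\Fr(X_j,Y)$, and \cref{prop:frontier_simplifies} yields $\nu(\textbf{T}_j)=\nu(\textbf{T}_{j-1}\cup\{X_j\})=\nu(\textbf{T}_{j-1})$; chaining over $j$ gives $\nu(\textbf{S})=\nu(\textbf{S}\setminus\textbf{Z})$. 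To prove the lemma, fix $X\in\textbf{Z}$ and a directed path $p=(X,\dots,Y)$. Since $\textbf{S}_{>_\G X}\in\Fr(X,Y)$, some node of $\textbf{S}_{>_\G X}$ lies on $p$; let $W$ be the one closest to $Y$ along $p$ (equivalently, of largest topological index among such nodes). I claim $W\notin\textbf{Z}$: if $W\in\textbf{Z}$ then $\textbf{S}_{>_\G W}\in\Fr(W,Y)$, so the sub-path of $p$ from $W$ to $Y$ contains some $W'\in\textbf{S}_{>_\G W}$ with $W'\neq W$; but then $W'\in\textbf{S}$, $W'>_\G W>_\G X$, hence $W'\in\textbf{S}_{>_\G X}$, and $W'$ lies strictly after $W$ on $p$ — contradicting the choice of $W$. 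Thus $W\in\textbf{S}\setminus\textbf{Z}$ lies on $p$ and blocks it; since also $X\notin\textbf{S}\setminus\textbf{Z}$, this shows $\textbf{S}\setminus\textbf{Z}\in\Fr(X,Y)$.

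For irreducibility I must show that for every $X\in\textbf{S}\setminus\textbf{Z}$ the set $(\textbf{S}\setminus\textbf{Z})\setminus\{X\}$ is \emph{not} a frontier between $X$ and $Y$. Since $X\in\textbf{S}$ and $X\notin\textbf{Z}$, by definition of $\textbf{Z}$ we have $\textbf{S}_{>_\G X}\notin\Fr(X,Y)$, so there is a directed path $p=(X,\dots,Y)$ containing no node of $\textbf{S}_{>_\G X}$. I then claim $p$ also contains no node of $(\textbf{S}\setminus\textbf{Z})\setminus\{X\}$: any node of $p$ other than $X$ satisfies $>_\G X$ by (ii), so if it were in $\textbf{S}$ it would be in $\textbf{S}_{>_\G X}$, which is impossible; and $(\textbf{S}\setminus\textbf{Z})\setminus\{X\}\subseteq\textbf{S}$ while $X$ itself is excluded. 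Hence $p$ witnesses $(\textbf{S}\setminus\textbf{Z})\setminus\{X\}\notin\Fr(X,Y)$, as required. (I would also note for sanity that this is consistent with \cref{remark:parents}: a parent $X\in Pa_Y$ of $Y$ is never in $\textbf{Z}$, since the direct edge $X\to Y$ is a path blocked by no subset of $\X\setminus\{X\}$.)

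The main obstacle is the frontier lemma in the second paragraph, and specifically the step that the last blocker $W$ of a path cannot lie in $\textbf{Z}$: it is here that the maximality (closest-to-$Y$) choice of $W$ must be played off against the frontier condition at $W$, i.e.\ the real interaction between the topological order and $\Fr$. Everything else — monotonicity of frontiers, the peeling telescoping, and the short witness-path argument for irreducibility — is routine bookkeeping once facts (i) and (ii) and \cref{prop:frontier_simplifies} are in place.
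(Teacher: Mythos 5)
Your proof is correct, and both arguments are ultimately telescoping arguments built on \cref{prop:frontier_simplifies} plus monotonicity of frontiers under supersets (\cref{prop:fr_properties}), but you reach the value identity by a genuinely different route. The paper orders $\textbf{Z}$ ascending in $<_\G$ and peels smallest-first: at step $j$ none of the elements of $\textbf{S}$ lying above $X_{i_j}$ have been removed yet, so the current set still contains $\textbf{S}_{>_\G X_{i_j}}$, which is a frontier by the very definition of $\textbf{Z}$; \cref{prop:frontier_simplifies} then applies at once via \cref{cor:fr_iff}, with no auxiliary lemma. (The paper's write-up states the set identity $\textbf{S}\setminus\{X_{i_1},\dots,X_{i_j}\}=\textbf{S}_{>_\G X_{i_j}}$, which in general holds only as the containment $\textbf{S}\setminus\{X_{i_1},\dots,X_{i_j}\}\supseteq\textbf{S}_{>_\G X_{i_j}}$ when some elements of $\textbf{S}$ below $X_{i_j}$ are not in $\textbf{Z}$ --- the containment is all the argument needs, and your formulation sidesteps the issue entirely.) You peel in arbitrary order instead, which forces you to first prove the stronger lemma that $\textbf{S}\setminus\textbf{Z}$ is itself a frontier for every $X\in\textbf{Z}$, via the ``furthest blocker cannot lie in $\textbf{Z}$'' argument; that lemma is essentially the content of the paper's \cref{prop:irreducible_coalition} ($\textbf{S}\setminus\textbf{Z}=\textbf{S}\cap An_{\G_{\overline{\textbf{S}}}}(Y)$), which the paper establishes separately. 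So your route costs one genuinely combinatorial step but buys order-independence of the peeling and the characterization of the irreducible set in a single stroke, whereas the paper's topological-ordering trick makes every peeling step immediate at the price of fixing the order. The irreducibility halves are equivalent in substance: your explicit witness path avoiding $\textbf{S}_{>_\G X}$ is the same fact the paper obtains contrapositively from superset monotonicity.
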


Thanks to this theorem, we can derive the irreducible subset of any coalition. Then, we evaluate its $\nu$-value with our SCM and store it in a cache. If, during the do-SHAP calculation, we encounter another subset $\textbf{S}'$ with the same irreducible subset as the previous $\textbf{S}$, we can employ the cached $\nu$-value of that irreducible subset, thereby reducing the number of coalitions to evaluate.

\begin{algorithm}[t]
\caption{Frontier-Reducibility Algorithm (FRA) -- set version}\label{alg:fra}
\begin{multicols}{2}
\begin{algorithmic}[1]
\Require $\textbf{S} \subseteq \X$, coalition.
\Require $\texttt{Fr}$, a map: tuple[int] $\rightarrow$ bool.
\Require $\G$, causal graph.

\Procedure{FRA}{$\textbf{S}, \texttt{Fr}; \G$}

\State $\textsc{\texttt{Sort}}(\textbf{S}, <_\G)$
\State $\textbf{P} \gets \varnothing$
\State $\textbf{Z} \gets \varnothing$
\State $k \gets |\textbf{S}|$
\While{$k > 0$}
\State $X \gets \textbf{S}[k]$
\If{$X \not\in Pa_\G(Y)$}
    \State $\textbf{P'} \gets \textbf{P} \cap De_\G(X)$
    \State $\textbf{T} \gets (\textbf{P'} \setminus \textbf{Z}) \cup \{X\}$
    \If{$\textbf{T} \not\in \texttt{Fr}$}
        \State $\textbf{C} \gets \{X\}$
        \While{$\textbf{C} \neq \varnothing$ and $Y \not\in \textbf{C}$}
            \State $\textbf{P'} \gets \textbf{P'} \cup \textbf{C}$
            \State $\textbf{C} \gets \bigcup_{C \in \textbf{C}} Ch_\G(C) \setminus \textbf{P'}$
        \EndWhile
        \State $\texttt{Fr}[\textbf{T}] \gets (C = \varnothing)$
    \EndIf
    \If{\texttt{Fr}[\textbf{T}]}
        \State $\textbf{Z} \gets \textbf{Z} \cup \{X\}$
    \EndIf
\EndIf
\State $\textbf{P} \gets \textbf{P} \cup \{X\}$
\State $k \gets k - 1$
\EndWhile
\State \Return $\textbf{S} \setminus \textbf{Z}$
\EndProcedure
\end{algorithmic}
\end{multicols}
\end{algorithm}

From theorem \ref{prop:fr}, we derive the \textbf{Frontier-Reducibility Algorithm (FRA)} in \cref{alg:fra}, which determines the irreducible subset of any coalition $\textbf{S}$ efficiently. We start by sorting the coalition in the order defined by $\G$ (line 2) and then iterate through the nodes descendingly (lines 5--6, 24). Given a step $k$ and the corresponding node $X$ (line 7), $\textbf{P}$ contains the nodes following $X$ in $\textbf{S}$ (line 23). If $X$ is a parent of $Y$, it trivially has no frontiers, so it cannot be removed (line 8). Otherwise, if $\textbf{P}$ blocks all paths from $X$ to $Y$ (lines 12--17), it is a frontier for $X$, which means $X$ must be included in $\textbf{Z}$ (line 20), the set of superfluous nodes.

Let us exemplify with \cref{fig:fra_example}, where we try to reduce $\textbf{S} := \{A, C, E, F\}$. For instance, on $k=1$, $X=A$, and $\textbf{P} = \{C, E, F\}$. $A$ is not a parent of $Y$, so we need to determine if $\textbf{P} \in \Fr(A, Y)$. For that, we move through every path from $X$ to $Y$, and if no path reaches $Y$ without being interrupted by a node in $\textbf{P}$, then $\textbf{P}$ is a frontier between $X$ and $Y$, and $X$ can be removed by adding it to $\textbf{Z}$. Note that we use the $Fr$ map to cache these computations (lines 9--11, 17) using a reduced version of $\textbf{P}$, since $\textbf{P} \in \Fr(A, Y)$ iif $(\textbf{P} \cap De_\G(X)) \setminus \textbf{Z} \in \Fr(A, Y)$. $Fr$ stores whether the key $\textbf{T}$ can be reduced by its first (sorted) element $X$. Please refer to appendix \ref{subsec:algo} for an in-depth explanation of FRA and for \cref{alg:fr2_appendix}, an alternative, more efficient version using integer-encoded coalitions.

\begin{figure}[t]
\centering
\begin{minipage}{.4\textwidth}
    \begin{tikzpicture}[node distance={4em}, thick, main/.style = {draw, circle}]
\node[main] (A) {A};
\node[main] (B) [above right of=A] {B};
\node[main] (C) [below right of=A] {C};
\node[main] (D) [right of=C] {D};
\node[main] (E) [below right of=B] {E};
\node[main] (F) [right of=B] {F};
\node[main] (Y) [right of=E] {Y};

\draw[->] (A) -- (B);
\draw[->] (A) -- (C);
\draw[->] (A) -- (E);
\draw[->] (B) -- (F);
\draw[->] (C) -- (D);
\draw[->] (D) -- (E);
\draw[->] (E) -- (F);
\draw[->] (D) -- (Y);
\draw[->] (F) -- (Y);
\end{tikzpicture}
\end{minipage}
\begin{minipage}{.4\textwidth}
    \centering
    \includegraphics[width=\textwidth]{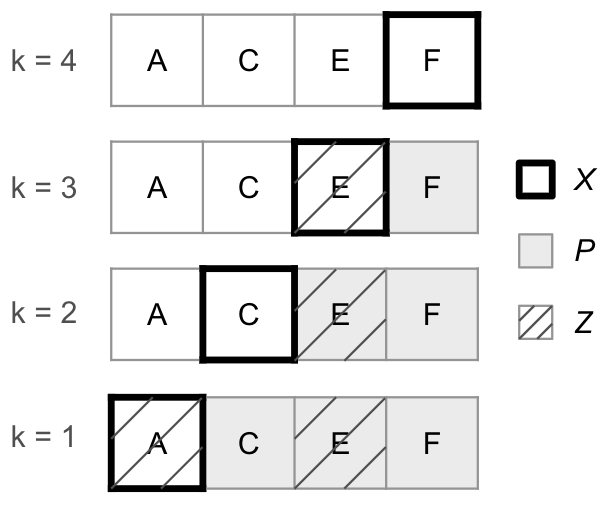}
\end{minipage}%
\caption{FRA execution example. a) Causal Graph with nodes in alphabetical order representing the selected topological order. b) FRA execution steps, with $k$ representing the loop step (lines 5--6, 24), $X$ the current node (line 7), $\textbf{P}$ the potential frontier for $X$, and $\textbf{Z}$ storing the nodes to be removed from $\textbf{S}$. The result of this execution is the coalition reduction $\{A, C, E, F\} \rightarrow \{C, F\}$.}
\label{fig:fra_example}
\end{figure}

To finish this subsection, let us draw parallels to Luther \etal's work \citep{luther2023sage}, in which, for conditional-SHAP, pairs $\nu(\textbf{S} \cup \{X\}) = \nu(\textbf{S})$ were identified to be skipped, since they resulted in a null differential in SHAP's formula. We move further by: 1) extending the method to do-SHAP, which requires the use of do-calculus to derive these properties; and 2) by describing and efficiently computing the irreducible subset $\textbf{S'}$ such that $\nu(\textbf{S}) = \nu(\textbf{S'})$, instead of only removing a single node. 

This results in a greater speed-up. As an example, consider the chain graph $A \rightarrow B \rightarrow Y$. For $\phi_A$, the pairs are $\nu(A) \neq \nu(\emptyset)$ and $\nu(A, B) = \nu(B)$, while for $\phi_B$, $\nu(B) \neq \nu(\emptyset)$ and $\nu(A, B) \neq \nu(A)$. FRA would only need to compute the irreducible sets $\emptyset, \{A\}$, and $\{B\}$, since $\nu(A, B) = \nu(B)$, while Luther's method needs to evaluate every single coalition eventually, even if $\nu(A, B) = \nu(B)$ could be skipped while computing $\phi_A$. Now, let us demonstrate this behavior in general:

\begin{proposition}
    Under assumption \ref{assume:projection}, $\forall\textbf{S}\subseteq\mathcal{X}, \exists Z\in\mathcal{X}$ s.t. at least one of the following is true:
    \begin{itemize}
        \item $Z\not\in\textbf{S}$ and $\textbf{S} \not \in \Fr(Z, Y)$.
        \item $Z\in\textbf{S}$ and $\textbf{S} \setminus \{Z\} \not\in \Fr(Z,Y)$.
    \end{itemize}
    \label{prop:luther}
\end{proposition}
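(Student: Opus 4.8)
The plan is to exhibit one uniform choice of $Z$ that works for \emph{every} coalition $\textbf{S}$, namely any parent of $Y$, and then to verify that this $Z$ satisfies whichever of the two bullets is applicable. The only ingredient needed is \cref{remark:parents}: if $Z \in Pa_Y$, then no subset of $\mathcal{X} \setminus \{Z\}$ is a frontier between $Z$ and $Y$ — intuitively because the edge $Z \to Y$ is itself a directed path with no interior vertex, so it cannot be blocked by any set that avoids $Z$ (cf. \cref{def:frontier}).

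First I would record that, under \cref{assume:projection}, $\mathcal{X} = An(Y) \setminus \{Y\}$, and hence $Pa_Y \subseteq \mathcal{X}$; moreover, setting aside the degenerate case $\mathcal{X} = \emptyset$ (in which $Y$ is a source node and there is nothing to attribute), we have $\mathcal{X} \neq \emptyset$ iff $Pa_Y \neq \emptyset$, so $Pa_Y$ is nonempty. Fix any $Z \in Pa_Y$.

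Next I would split on the position of $Z$ relative to $\textbf{S}$. If $Z \notin \textbf{S}$, then $\textbf{S} \subseteq \mathcal{X} \setminus \{Z\}$, so \cref{remark:parents} yields $\textbf{S} \notin \Fr(Z, Y)$, which is precisely the first bullet. If instead $Z \in \textbf{S}$, then $\textbf{S} \setminus \{Z\} \subseteq \mathcal{X} \setminus \{Z\}$, and \cref{remark:parents} again gives $\textbf{S} \setminus \{Z\} \notin \Fr(Z, Y)$, which is the second bullet. In either case the required $Z$ exists, completing the argument.

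I do not expect a genuine obstacle here: the substantive content is entirely carried by \cref{remark:parents}, and the argument is a one-line case split once the right $Z$ is chosen. The only point requiring a moment of care is the boundary case $\mathcal{X} = \emptyset$ (equivalently, $Y$ having no parents), where the existential statement is vacuously problematic but also irrelevant to do-SHAP; it is worth flagging explicitly but is not a real difficulty.
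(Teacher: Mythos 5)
Your proof is correct, and it takes a genuinely different and more economical route than the paper's. The paper argues by a four-way case split ($\textbf{S}=\X$, $\textbf{S}=\varnothing$, $\textbf{S}$ irreducible, $\textbf{S}$ reducible), invoking the irreducible-subset machinery of \cref{prop:fr_appendix} and \cref{prop:irreducible_coalition} to exhibit a witness $Z$ inside the irreducible subset $\textbf{S}'$ when $\textbf{S}$ is nonempty; only in the $\textbf{S}=\X$ case does it reach for a parent of $Y$. You instead observe that a single uniform witness --- any $Z \in Pa_Y$, which exists and lies in $\X$ whenever $\X \neq \varnothing$ under \cref{assume:projection} --- settles both bullets at once via \cref{remark:parents}, since no set avoiding $Z$ can block the edge $Z \rightarrow Y$. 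This is a strictly simpler argument for the existential claim as stated, and it is all the downstream discussion (that Luther's method must eventually evaluate every coalition) actually needs. What the paper's longer proof buys is a sharper structural fact that your version does not recover: \emph{every} element of the irreducible subset of $\textbf{S}$ is a valid witness for the second bullet, which ties the proposition back to the FRA narrative about which elements survive reduction. Your handling of the degenerate case $\X = \varnothing$ (where the existential claim is vacuously false but irrelevant) is appropriate to flag; note that the paper's own proof silently makes the same nonemptiness assumption in its $\textbf{S}=\varnothing$ case.
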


Consequently, for any pair $\nu(\textbf{S} \cup \{X\}) = \nu(\textbf{S})$ identified by Luther, we know that $\exists Z\in \X$ s.t. somewhere else in the (exact) do-SV computation, $\textbf{S}$ will reappear and cannot be skipped: either $Z \not\in \mathbf{S}, \nu(\mathbf{S} \cup \{Z\}) \neq \nu(\mathbf{S})$ or $Z \in \mathbf{S}, \nu(\mathbf{S}) \neq \nu(\mathbf{S} \setminus \{Z\})$ (non-parametrically). By the same argument, the same can be said about $\nu(\textbf{S}\cup\{X\})$. Therefore, Luther cannot omit these new pairs and will eventually evaluate all $\nu(\textbf{S})$, even if some were omitted in previous pairs. 

Now consider FRA. If we encountered a pair $\nu(\textbf{S} \cup \{X\}) = \nu(\textbf{S})$, both must necessarily result in the same irreducible subset, thereby can be skipped without evaluating them. However, even if we later encounter them in a non-skippable pair, their irreducible subsets can be evaluated once and cached for subsequent reappearances. In other words, with Luther's strategy, all $2^K$ coalitions will need to be evaluated eventually, regardless if they were omitted previously in another pair. In contrast, by determining the irreducible subsets and storing their values in a cache, less evaluations will be necessary. Naturally, in the case of SHAP's approximate method, we may not re-encounter every coalition in an unskippable pair, but even in that case, our method need not compute the skippable pair either. Therefore, FRA can only improve upon Luther's speed-up.

\subsection{do-Shapley explanations}
\label{subsec:doshap_expl}
So far, we have been talking about do-SHAP values \wrt a variable $Y \in \V$ in a certain SCM $\M$, but there are two use cases to consider in practice: either we want to explain a ML model that models $Y$ given some inputs $\textbf{X'} \subseteq \V \setminus \{Y\}$ or we want to explain the original variable $Y$ directly.

If we want to explain a ML model $f(\textbf{X'}) := \expectation{Y \mid \textbf{X'}}$, $Y$ is replaced by $Y' := f(\textbf{X'})$ (with no $E_Y$, since $f$ is deterministic); we then work on the projected SCM $\M [An(Y')]$ with $Pa_{Y'} = \textbf{X'}$. Note that this subgraph may contain variables other than those in $\textbf{X'}$, since any $X \in \X \setminus \textbf{X'}$ would be an ancestor to an $X' \in \textbf{X'}$, and therefore an ancestor of $Y$. With this SCM, we can apply EA procedures to estimate do-SHAP. We exemplify this case in the experiment in \cref{subsec:synthetic}.

% Inaccessible DGP
If, instead, we wanted to explain $Y$ directly, we simply employ do-SHAP on a proxy SCM, but note that for a particular $(\textbf{x}, y) \sim \P(\X, Y)$, $\sum_{X \in \X} \phi_{\nu_\textbf{x}}(X) = \expectation{Y \mid \widehat{\textbf{x}}} - \expectation{Y} \neq y - \expectation{Y}$ (unless $Y$ is a constant distribution). There is a \textit{gap} between the contribution of $\X$ ($\Delta_{\nu_{\textbf{x}}}(\textbf{X})$) and the actual value of $Y$, because our particular $\nu$, an interventional query, is essentially a population estimate, and as such aggregates for the whole distribution. In order to explain a particular outcome, we need some kind of \emph{counterfactual} value function $\nu$; this is a promising avenue of research, but is left for future work, since it is beyond the scope of this paper. As an alternative approach, the following theorem proves that, under additional assumptions, we can explain this gap through $E_Y$'s do-SV contribution.

\begin{theorem} \textbf{do-Shapley Value for the Noise}.\\
Assume that $\Confounders{Y} = \varnothing$ and that $f_Y \in \F$ follows an additive noise model, \ie $Y = f(Pa_Y) + E_Y$, for a certain $f$. Consider the do-Shapley game with players $\X \cup \{E_Y\}$; then, $\phi_{E_Y} = y - \expectation{Y \mid pa_Y}$, while $\phi_X$ for $X \in \X$ can be computed \wrt $\X$ only. Furthermore, $\sum_{X \in \X} \phi_X + \phi_{E_X} = y - \expectation{Y}$.
\label{theorem:shap_noise}
\end{theorem}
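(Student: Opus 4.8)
The plan is to exploit the additive–noise structure of $f_Y$ in order to split the value function on the enlarged player set $\X':=\X\cup\{E_Y\}$ into a game that depends only on $\X$ plus a game in which $E_Y$ is the unique non‑null player, and then to read off all three claims from linearity, efficiency and the null‑player property of the Shapley value. First I would fix the noise realization $\varepsilon_Y:=y-f(pa_Y)$ consistent with the observed pair $(\textbf{x},y)$ (well defined since $\Confounders{Y}=\varnothing$), extend $\nu$ to coalitions with $E_Y\in\textbf{S}$ by applying $do(E_Y=\varepsilon_Y)$, and write $\mu_E:=\expectation{E_Y}$.

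The core of the argument is the decomposition
\begin{equation*}
\nu(\textbf{S})=\nu_0(\textbf{S}\cap\X)+\mu_E+(\varepsilon_Y-\mu_E)\,\mathbf{1}[E_Y\in\textbf{S}],\qquad \nu_0(\textbf{T}):=\expectation{f(Pa_Y)\mid do(\textbf{T}=\textbf{t})},
\end{equation*}
which I would establish from two graph‑structural facts valid under Assumption \ref{assume:projection}: $E_Y$ is a root of $\GM$ whose only child is $Y$, and $Pa_Y\subseteq\X$. Consequently, in the mutilated model for any $do(\textbf{S}=\textbf{s})$ with $\textbf{S}\subseteq\X$, adding or removing the intervention $do(E_Y=\varepsilon_Y)$ leaves the interventional distribution of $Pa_Y$ unchanged, and $E_Y$ itself keeps its prior (so $\expectation{E_Y\mid do(\textbf{s})}=\mu_E$); plugging $Y=f(Pa_Y)+E_Y$ into $\nu$ and taking expectations then yields the displayed formula, with the $\mathbf{1}[E_Y\in\textbf{S}]$ term switching the $E_Y$‑summand between $\mu_E$ and $\varepsilon_Y$.

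With the decomposition in hand, the three conclusions follow by linearity of the Shapley value applied to the two cooperative games. For the summand $(\varepsilon_Y-\mu_E)\mathbf{1}[E_Y\in\textbf{S}]$, every $X\in\X$ is a null player while $E_Y$'s marginal contribution is the constant $\varepsilon_Y-\mu_E$, so by the null‑player axiom and efficiency this summand contributes $\varepsilon_Y-\mu_E$ to $\phi_{E_Y}$ and $0$ to each $\phi_X$. For the summand $\nu_0(\textbf{S}\cap\X)$, $E_Y$ is again a null player, and for $X\in\X$ the marginal contribution $\nu_0(\textbf{S}\cup\{X\})-\nu_0(\textbf{S})$ is independent of whether $E_Y\in\textbf{S}$; combining this with the elementary fact that a uniformly random permutation of $\X'$ restricts to a uniformly random permutation of $\X$, the permutation form \cref{eq:shap_permutations} of the Shapley value shows this summand contributes to $\phi_X$ exactly the do‑SV of $X$ computed on the $\X$‑only game $\nu|_\X(\textbf{T})=\nu_0(\textbf{T})+\mu_E$, and $0$ to $\phi_{E_Y}$. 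Adding the two contributions gives $\phi_{E_Y}=\varepsilon_Y-\mu_E$ and $\phi_X$ ($X\in\X$) unchanged by adjoining $E_Y$; identifying $\varepsilon_Y-\mu_E=(y-f(pa_Y))-\mu_E=y-(f(pa_Y)+\mu_E)=y-\expectation{Y\mid pa_Y}$, where the last equality uses $E_Y\indep Pa_Y$, yields the stated form of $\phi_{E_Y}$. Finally, efficiency on $\X'$ gives $\sum_{X\in\X}\phi_X+\phi_{E_Y}=\nu(\X')-\nu(\varnothing)$, with $\nu(\varnothing)=\expectation{f(Pa_Y)}+\mu_E=\expectation{Y}$ and $\nu(\X')=f(pa_Y)+\varepsilon_Y=y$, since intervening on all of $\X$ (hence on $Pa_Y$) and on $E_Y$ makes $Y$ deterministic and equal to $y$; this is the last claim.

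I expect the only genuinely delicate step to be the rigorous justification of the decomposition of $\nu$ — specifically the mutilation argument that intervening (or not) on the root noise $E_Y$ leaves the interventional distribution of $Pa_Y$ intact, which is precisely what makes $E_Y$'s marginal contribution constant across coalitions and decouples the two games. Once that is in place, the remainder is bookkeeping with the Shapley axioms together with the standard combinatorial identity that enlarging the player set by a coordinate whose marginal contributions are constant does not alter the Shapley values of the original players.
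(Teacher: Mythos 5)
Your proof is correct and rests on the same core decomposition as the paper's (appendix \ref{appendix:proof_noise}): both establish, from the graph-structural fact that $E_Y$ is a root whose only child is the sink $Y$ (formalized there via rule R3), that $\nu(\textbf{S} \cup \{E_Y\}) - \nu(\textbf{S})$ equals the constant $\varepsilon_Y - \expectation{E_Y}$ while the dependence of $\nu$ on $\textbf{S} \cap \X$ factors entirely through the interventional mean of $f(Pa_Y)$, which is exactly the paper's $\nu'(\textbf{S})$. The remaining differences are presentational rather than substantive: you extract the Shapley values axiomatically (linearity, the null-player property, and the restriction of a uniform permutation of $\X \cup \{E_Y\}$ to a uniform permutation of $\X$) where the paper carries out the explicit binomial cancellation ${K \choose s+1}^{-1} + {K \choose s}^{-1} = \frac{K+1}{K}{K-1 \choose s}^{-1}$, and you verify the efficiency identity directly from $\nu(\X \cup \{E_Y\}) = y$ and $\nu(\varnothing) = \expectation{Y}$ where the paper instead combines efficiency of the $K$-player game with a separate do-calculus derivation of $\expectation{Y \mid \widehat{\textbf{x}}} = \expectation{Y \mid pa_Y}$; both routes are sound.
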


Consequently, assuming an unconfounded $Y$ with additive noise, we can explain inaccessible DGPs with attribution to the noise and no computational overhead. In practice, we define a ML model $f'(pa_Y) \approx \expectation{Y \mid pa_Y}$ and explain it instead, computing the do-SV for $E_Y$ as $\phi_{E_Y} := y - f'(pa_Y)$.

\section{Experiments}
\label{sec:experiments}

This section contains the empirical validation of our approach. We begin with a synthetic DGP, from which we can derive ground truth do-SVs, to measure estimation error on several EA methods. Secondly, we demonstrate the speedup resulting from the FRA-cache with an ablation test. Finally, we showcase do-SHAP explanations on two real world datasets, left to appendix \ref{appendix:applications} due to space restrictions. Please refer to the Supplementary Material for the code of these experiments.

\subsection{Estimation performance}
\label{subsec:synthetic}

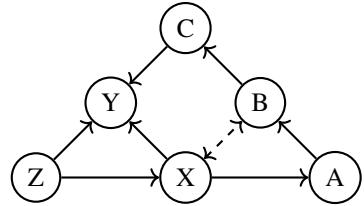
\begin{wrapfigure}{r}{0.35\textwidth}
\centering
\begin{tikzpicture}[node distance={4em}, thick, main/.style = {draw, circle}]
\node[main] (Z) {Z};
\node[main] (Y) [above right of=Z] {Y};
\node[main] (X) [below right of=Y] {X};
\node[main] (C) [above right of=Y] {C};
\node[main] (B) [above right of=X] {B};
\node[main] (A) [below right of=B] {A};

\draw[->] (Z) -- (X);
\draw[->] (Z) -- (Y);
\draw[->] (X) -- (Y);
\draw[->] (X) -- (A);
\draw[->] (A) -- (B);
\draw[->] (B) -- (C);
\draw[->] (C) -- (Y);
\draw[<->, dashed] (X) -- (B);
\end{tikzpicture}
\caption{Semi-Markovian graph. The Markovian graph results from considering $\Confounder{X}{B}$ as measured.}
\label{graph:synthetic}
\end{wrapfigure}

The goal of this experiment is to evaluate if a proxy SCM can correctly estimate (identifiable) do-SVs without access to the true underlying DGP. Let us design first a synthetic $\M_0$ with variables $\V$, following the graph in \cref{graph:synthetic}. We train a ML model $f(pa_X) \approx \expectation{Y \mid pa_Y}$, and create an SCM $\M$ using the same structural equations as $\M_0$ for $\X := \V \setminus \{Y\}$ but with $Y$ replaced by $Y' := f(Pa_{Y})$. This represents our SCM of study. We can consider two cases, both identifiable, by assuming $\Confounder{X}{B}$ is observed (Markovian) or latent (semi-Markovian).

We replicate the experiment $30$ times with different seeds. Let $\D$ be a dataset generated from $\M$ with $N=1000$ \iid samples. With access to $\M$, we can estimate each query $\nu(\textbf{S})$ with $M$ \iid samples from the intervened SCM, passing them through $f$ and averaging the outputs; we will use the corresponding do-SVs $\Phi := (\sample{\phi}{i}_X)_{i\in[N],X\in\X}$ as \emph{ground truth}. We also train several proxy SCMs (with $Y'$ replacing $Y$) to learn $\P(\V)$ and then estimate the do-Shapley values $\tilde{\Phi} = (\sample{\tilde{\phi}}{i}_X)_{i\in[N],X \in \X}$, finally computing their \textbf{SHAP} estimation \textbf{loss} $\mathcal{L}_2(\Phi, \tilde{\Phi}) := \frac{1}{N|\X|}\sum_{i=1}^N \sum_{k=1}^{|\X|}(\sample{\phi}{i}_{X_k} - \sample{\tilde{\phi}}{i}_{X_k})^2$. We will also compare against a marginal-SHAP estimator (which should result in different values). We compute the average test log-likelihood (\textbf{loglk}) for each model as a way to measure distribution adjustment. Finally, for all $X \in \X$, we compute their Feature Importance (\textbf{FI}), defined as $FI_X := \frac{1}{N} \sum_{i\in[N]} |\sample{\phi}{i}_X| / \sum_{X'\in\X} |\sample{\phi}{i}_{X'}|$.

We will test do-SHAP with several SCM architectures\footnote{
    None of these methods are external baselines, as do-SHAP has not yet been tested with EA approaches and remains underexplored overall (see \cite{zhang2025quantifying} for an example), largely due to the ad hoc nature of EB methods. Additionally, EB strategies are excluded from our experiments because they require manual adaptation for each coalition, whereas EA is generalizable and automatable, rendering a direct comparison inappropriate.
}; for further implementation details and a justification of our choices please refer to \cref{subsec:implementation}. These methods are: 1) a \textbf{linear} SCM with Normal distributions for each variable, used as a baseline; 2) the Distributional Causal Node \citep{parafita2019dcn} (\textbf{DCN}), with every node modeled after a pre-defined distribution; and 3) Deep Causal Graph \citep{parafita2022dcg} (\textbf{DCG}) powered with Normalizing Flows. Additionally, in order to test the graph-as-a-whole approach (see \cref{sec:related}), we opt for Causal Normalizing Flows (\textbf{CNF}) \citep{javaloy2023causal}.

\begin{figure*}[th]
\centering
\begin{minipage}{.23\textwidth}
    % \centering
    \includegraphics[width=\linewidth]{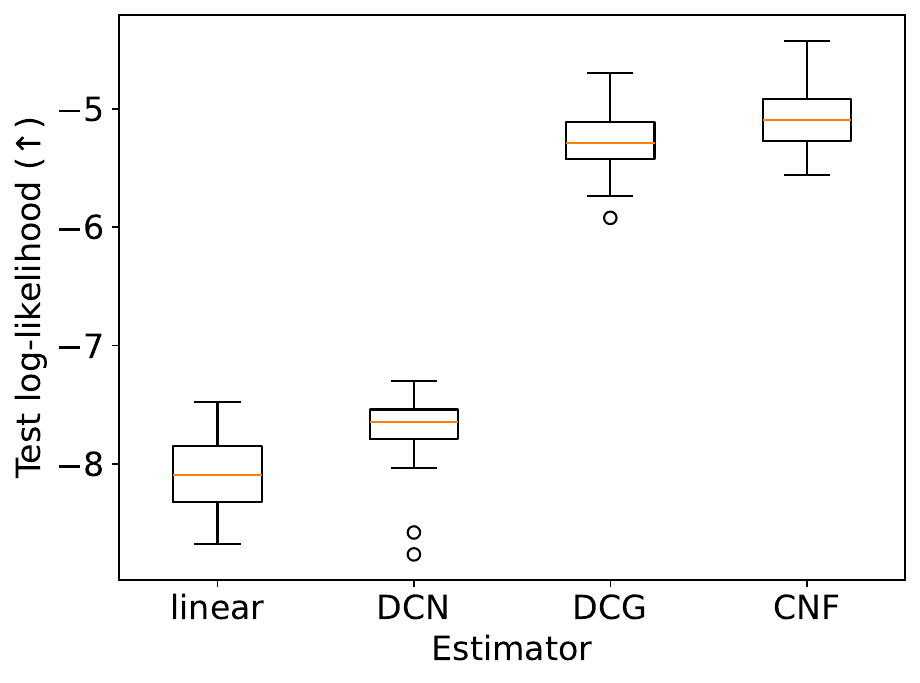}
\end{minipage}%
\begin{minipage}{.23\textwidth}
    % \centering
    \includegraphics[width=\linewidth]{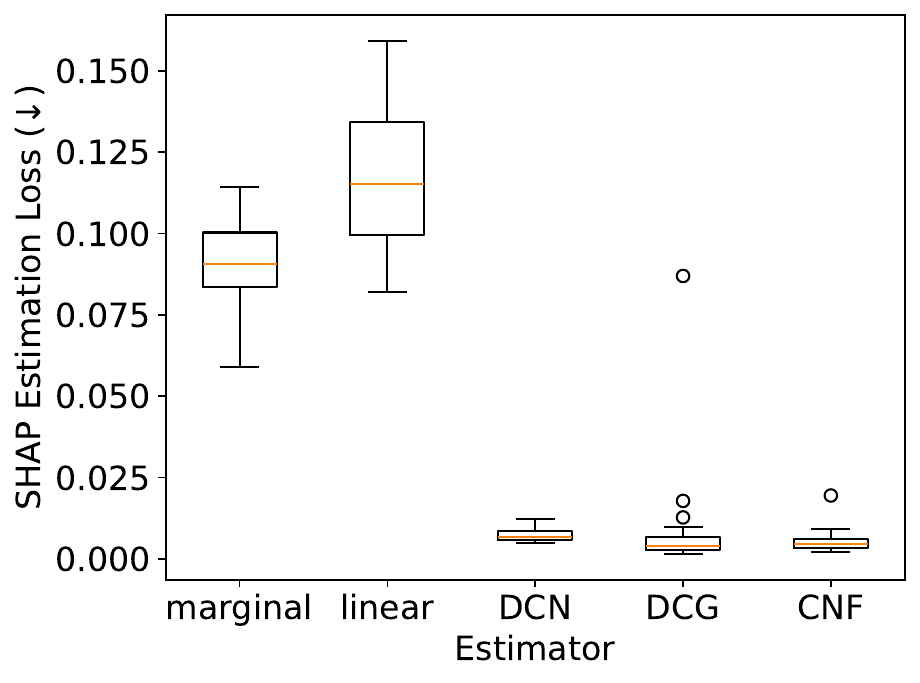}
\end{minipage}%
\begin{minipage}{.45\textwidth}
    % \centering
    \includegraphics[width=\linewidth]{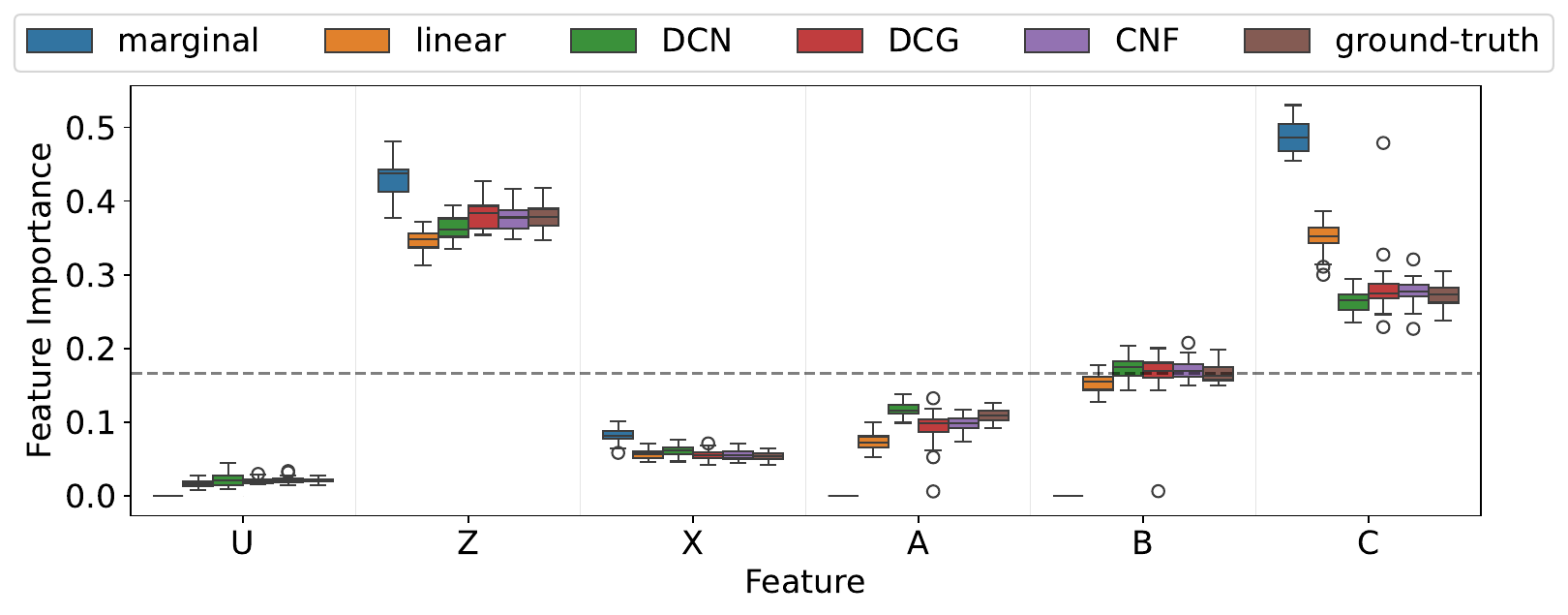}
\end{minipage}%
\caption{Markovian case. Box-plots computed over $30$ realizations of the dataset. (a) Distribution adjustment score, log-likelihood (bigger is better). (b) SHAP estimation loss, $\mathcal{L}$ (lower is better). (c) Feature Importance (the closer to \textit{ground-truth}, the better). Dashed horizontal line represents uniform importance $(\frac{1}{K})$. See \cref{sec:appendix_synthetic} for a bigger figure.}
\label{fig:markovian}
\end{figure*}

See \cref{fig:markovian} for the Markovian case. As expected, better distributional adjustment to $\P(\V)$ (loglk) correlates with better estimates of do-SVs. Linear-SCM cannot properly model $\P(\V)$, and so its do-SHAP performance suffers; DCN comes close to the best models, probably because of the synthetic nature of the data; DCGs and CNFs exhibit similar performance except for more variance on DCGs, possibly due to CNFs modeling all variables at once. Finally, marginal-SHAP significantly differs from the do-SHAP ground truth, showing that, evidently, do-SHAP and marginal-SHAP measure different attributions. FI comparisons \wrt ground truth values are also aligned with the previous results. We leave the semi-Markovian experiment for \cref{subsec:semimarkv}, with equivalent conclusions, even in presence of a latent confounder; DCGs display the best estimation performance. In the following, we will employ DCGs even if CNFs seem to be more stable variance-wise, because DCGs admit latent confounders and were orders of magnitude faster in our experiments.

\subsection{Frontier-Reducibility Algorithm}
\label{subsec:fra_experiment}

\begin{figure*}[t]
\centering
\begin{minipage}{.3\textwidth}
    \centering
    \includegraphics[width=\linewidth]{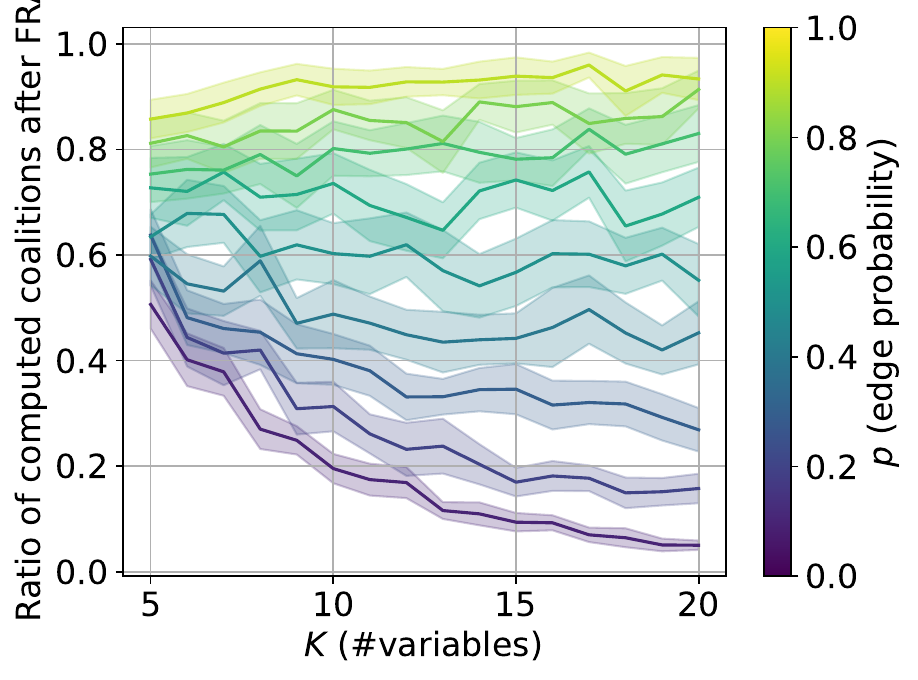}
\end{minipage}%
\begin{minipage}{.3\textwidth}
    \centering
    \includegraphics[width=\linewidth]{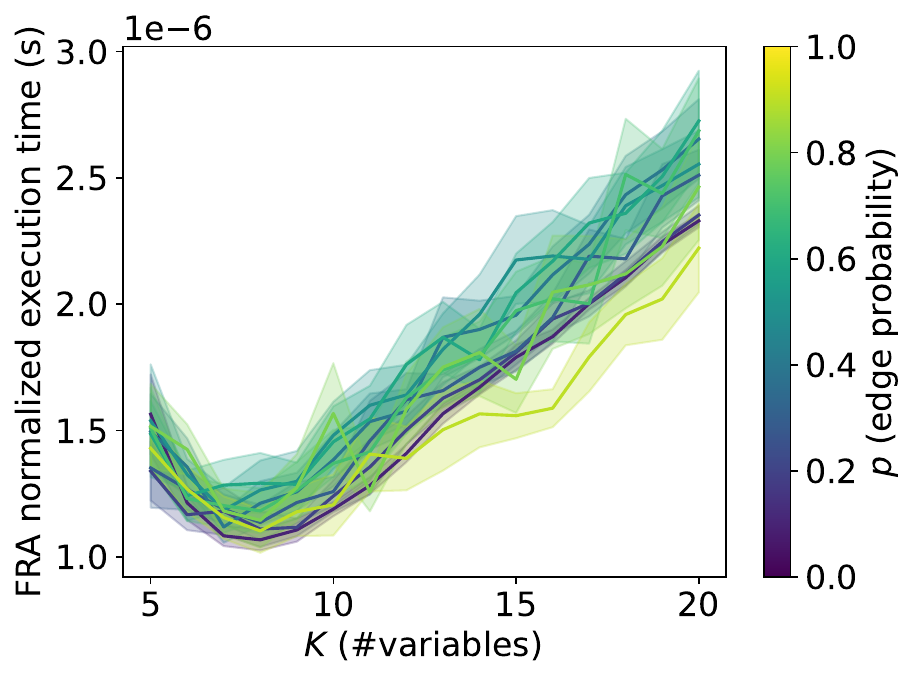}
\end{minipage}%
\begin{minipage}{.3\textwidth}
    \centering
    \includegraphics[width=\linewidth]{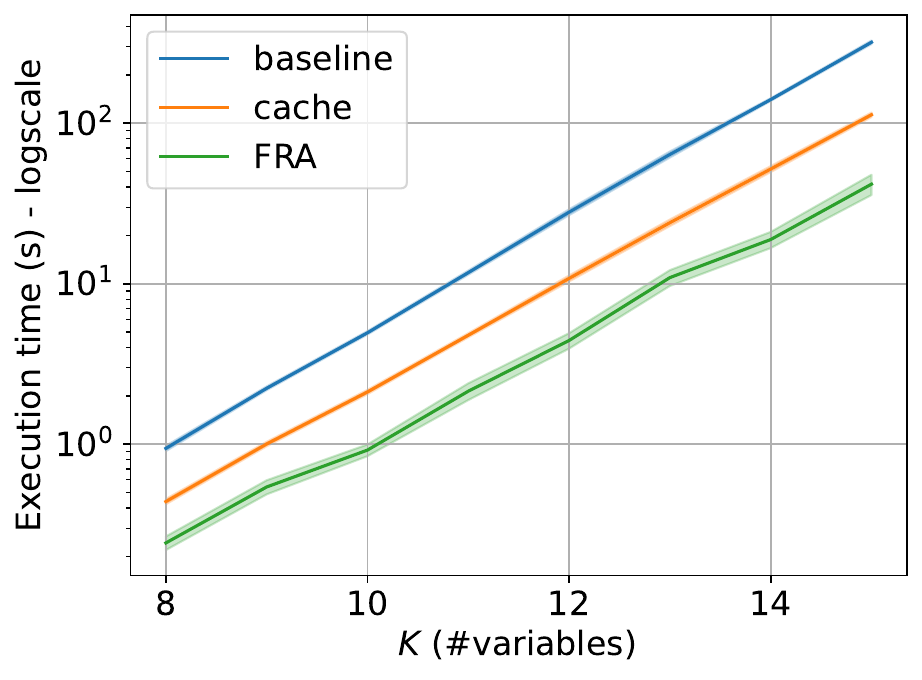}
\end{minipage}%
\caption{FRA experiments. (a) Ratio of computed coalitions after FRA. (b) FRA execution time per coalition. (c) do-SHAP execution time (logarithmic scale) without cache (\textit{baseline}), with cache (\textit{cache}) and with an FRA cache (\textit{FRA}). Error bars at 2-sigma over $30$ replications.}
\label{fig:fra}
\end{figure*}

We now test the computational impact of FRA. Let us consider $\G_{K, p}$, the class of graphs $\G$ with $K+1$ nodes, defined in topological order, $\X := (V_0, \dots, V_{K-1}), Y := V_{K}$, where $p \in (0, 1)$ represents the probability of any possible edge $V_i \rightarrow V_j$, $0 \leq i < j \leq K$ appearing in $\G$, and such that $\G$ fulfills two conditions: 1) $An(Y) = \X \cup \{Y\}$ and 2) $Pa_Y \subsetneq \X$. \footnote{
    Note that when $Pa_Y = \X$, FRA has no effect (see \cref{remark:parents}); since this case is of no interest for this experiment, such cases are discarded. We discuss this limitation to the speed-up power of FRA in \cref{sec:limitations}.
} We sample uniformly from $\G_{K, p}$ using rejection sampling to ensure both conditions. \Cref{fig:fra} shows the results of our experiments, with error bars for the mean of each metric at 2-sigma over 30 random graphs per configuration.

Let $K \in \{5, \dots, 20\}$ and $p \in \{0.1, \dots, 0.9\}$. \Cref{fig:fra} (a) shows the average ratio of coalitions (out of $|\mathbb{P}(\X)| = 2^K$) that need to be passed through $\nu$ after reduction by FRA. Note that for higher values of $K$, each $p$-curve approaches $p$, \ie in the case of $p=0.1$, a reduction in $\nu$-computations of $90\%$. \Cref{fig:fra} (b) shows the average execution time of FRA per coalition. Despite the exponentially-larger number of directed paths in the graph, the computation of FRA appears to grow linearly with $K$, due to the fact that it scales with the size of the coalition $\textbf{S}$ to be evaluated and the depth of the graph, both at most $K$. For $K \leq 20$, the error bars do not exceed $3\mu s$ per FRA call.

Finally, we evaluate FRA with an ablation test in \cref{fig:fra} (c). We design synthetic DGPs for random $\G \in \G_{K, p}$ with $\forall X \in \V,\, f_X(pa_X, \varepsilon_X) := \textrm{mean}(pa_X) + \varepsilon_X,\, \varepsilon_X \sim \mathcal{N}(0, 1)$. We choose a linear SCM for its fast execution; real-world SCMs, with far more complex architectures, will require even longer to execute, so FRA will have an even stronger impact. We evaluate do-SHAP with a linear DCG and using the approximate method with $N$ permutations such that at least half the total number of coalitions are expected to have been processed after $N$ permutations;\footnote{
    This particular choice for $N$ results in an exponential time-growth \wrt $K$, but this is not necessarily the case in the usual setting, where we choose $N$ based on estimation variance or computational limitations.
} this value for $N$ is computed using \cref{eq:cache_atN} in appendix \ref{appendix:cache}. We restrict this experiment to $K \geq 8$ so that $N \geq 30$ and set $p = 0.25$. We can now compare the mean execution time when evaluating every coalition $\textbf{S}$ (\textbf{baseline}), when employing a \textbf{cache} to avoid repetitions in $\nu$-computations, and when employing an FRA-cache (\textbf{fra}) to reduce and cache coalitions. As a result, we can see a consistent pattern: FRA is an order of magnitude faster than the baseline and twice faster than the cache. 

Note that it is not prohibitive to run our method for higher values of $K$; we repeat the experiment for $K=100$ without replications to explain a sample with $1000$ permutations. The DCG training time amounts to 16s and the do-SHAP executions for \textit{no-cache}, \textit{cache}, and \textit{FRA-cache} lasted for 26m01s, 25m20s and 21m14s, respectively. The average error of estimation is $2.71\mathrm{e}{-4}$. There is still an exponential trend in computation-time (we make no claim of reducing the exponential nature of SHAP) but: 1) it is possible to compute do-SVs on high $K$ with an automatized, single-model approach, in contrast with EB approaches that would require manual specification of procedures based on estimands, rendering them infeasible in practice; and 2) we achieve a considerable improvement time-wise (4m \wrt \textit{cache}) at a negligible cost in running FRA (8s in total).

Please refer to \cref{subsec:appendix_experiment_fra} for the full experimental setup and further tests on FRA. There we show that FRA's execution time is negligible \wrt the computation of $\nu(\textbf{S})$, even on linear SCMs. This difference can only increase with more complex SCM architectures; therefore, for virtually no cost, FRA skips computing $\nu(\textbf{S})$ up to a significant factor, resulting in a marked speedup for do-SHAP.

\section{Limitations}
\label{sec:limitations}

We devote this section to discussing the limitations of this work.

Firstly, we assume to know the causal graph $\G$; while Causal Discovery algorithms and/or domain experts can help to define it, our techniques are sensitive to graph misspecification. Regrettably, there are no doubly-robust guarantees for EA techniques yet, which is a definite disadvantage \wrt EB alternatives. Nevertheless, the ad-hoc nature of EB methods makes them impractical for do-SHAP, with or without doubly-robust guarantees, while our approach can adapt to complex graphs. In any case, this issue falls out of scope for this paper, which is focused on facilitating and accelerating do-SV estimation. An additional limitation for some EA methods is error propagation: how misalignment in modeling $P(\V)$ can propagate to do-SV estimations. However, some works in the literature address this issue by modeling all variables in a single-pass $\F$, avoiding compounding errors \cite{sanchez2021vaca, javaloy2023causal}.

Secondly, our approach requires modeling the data distribution by training an appropriate SCM, which can take some (offline) time. However, once the SCM is trained, any new sample can be explained (online) from that same model. SCMs can typically be trained in the order of minutes (half an hour for the most complex graphs/distributions in this work, \cref{graph:diabetes_graph,fig:bike2} in the appendix), and after that offline training, SHAP execution (online) becomes the most pressing factor time-wise. The FRA algorithm helps in significantly accelerating this step, as demonstrated in \cref{subsec:fra_experiment}, but this speed-up only results when not all variables are parents of $Y$; otherwise, no coalition would be reducible. Fortunately, real-world DGPs rarely have all (proper) $Y$-ancestors as parents, and in ML systems, defining all $\X$ as model inputs ($Pa_{Y'}$) is hardly advisable, since they may contain non-ancestors of $Y$ (leading to spurious correlations or anti-causal effects \citep{scholkopf2021towardCRL}) or inputs $\textbf{A} \subseteq \X \setminus Pa_Y$ that are blocked by $Pa_Y$, $(Y \indep \textbf{A} \mid Pa_Y)$, in which case their inclusion could lead to overfitting and adversarial vulnerability. Consequently, feature selection strategies should aim at discarding these cases, which paves the way for FRA speed-ups.

\section{Conclusion}
\label{sec:conclusion}

In this work, we have introduced a practical method to estimate do-SVs on arbitrarily complex graphs by using the estimand-agnostic approach, with which we can estimate any identifiable query using general procedures agnostic to the query's estimand. This flexibility is essential to make these techniques accessible to practitioners, who may not necessarily be experts in Causal Inference. We have tested our approach on multiple SCM architectures, illustrating the relationship between distribution modeling and do-SHAP estimation performance. We have proposed the novel Frontier-Reducibility Algorithm, which speeds up do-SHAP significantly at virtually no cost. Finally, we have tested the capabilities of our method and applied it on two real-world datasets (see appendix \ref{appendix:applications}), showcasing do-SHAP's explanatory power, either for ML models or inaccessible DGPs.

Further work could propose new SCM architectures to better model the data distribution, overcoming some of the identified limitations, along with more efficient estimators, ideally with doubly-robust guarantees. A general graphical criterion for do-SV identifiability is also a worthwhile new direction. Finally, do-SVs are based on interventional queries, but these are inherently population measures; counterfactual value functions $\nu$ could result in a promising new kind of causal, local explanations.

\begin{ack}
The research leading to these results has received funding from the AI4Science fellowship within the “Generacion D” initiative, Red.es, Ministerio para la Transformación Digital y de la Función Pública, for talent attraction (C005/24-ED CV1), funded by the European Union NextGenerationEU funds, through PRTR. It also was funded by the Horizon Europe Programme under the AI4DEBUNK Project (https://www.ai4debunk.eu), grant agreement num. 101135757. Additionally, this work has been partially supported by JDC2022-050313-I funded by MCIN/AEI/10.13039/501100011033al by the European Union NextGenerationEU/PRTR. We also gratefully acknowledge Novartis for sponsoring Tomàs Garriga's industrial PhD, and the Government of Catalonia’s Industrial PhDs Plan for funding part of this research.
\end{ack}

\bibliography{bibliography.bib}

@misc{euro2016gdpr,
  author = {{European Commission}},
  publisher = {European Commission},
  title = {Regulation ({EU}) 2016/679 of the {European} {Parliament} and of the {Council} of 27 {April} 2016 on the protection of natural persons with regard to the processing of personal data and on the free movement of such data, and repealing {Directive} 95/46/{EC} ({General} {Data} {Protection} {Regulation})},
  url = {https://eur-lex.europa.eu/eli/reg/2016/679/oj},
  year = 2016
}

@article{angwin2016propublica,
  added-at = {2020-09-14T11:52:21.000+0200},
  author = {Angwin, Julia and Larson, Jeff and Mattu, Surya and Kirchner, Lauren},
  biburl = {https://www.bibsonomy.org/bibtex/23d537ed0185eb7820ed6769aca10acb4/wanlo},
  interhash = {e9260c2f8fdd08ef1a34f7a3a243b0ff},
  intrahash = {3d537ed0185eb7820ed6769aca10acb4},
  journal = {Propublica},
  keywords = {background},
  timestamp = {2020-09-14T11:52:21.000+0200},
  title = {{Machine Bias}},
  month = {May},
  year = 2016
}

@inproceedings{neuhaus2023spurious,
  title={Spurious features everywhere - large-scale detection of harmful spurious features in {I}mage{N}et},
  author={Neuhaus, Yannic and Augustin, Maximilian and Boreiko, Valentyn and Hein, Matthias},
  booktitle={Proceedings of the IEEE/CVF International Conference on Computer Vision},
  pages={20235--20246},
  year={2023}
}

@inproceedings{szegedy2014intriguing,
  title={Intriguing properties of neural networks},
  author={Szegedy, Christian and Zaremba, Wojciech and Sutskever, Ilya and Bruna, Joan and Erhan, Dumitru and Goodfellow, Ian and Fergus, Rob},
  booktitle={2nd International Conference on Learning Representations, ICLR},
  year={2014}
}

@article{strumbelj2014explaining,
  title={Explaining prediction models and individual predictions with feature contributions},
  author={{\v{S}}trumbelj, Erik and Kononenko, Igor},
  journal={Knowledge and information systems},
  volume={41},
  pages={647--665},
  year={2014},
  publisher={Springer}
}

@article{chen2023shap_survey,
  title={Algorithms to estimate {S}hapley value feature attributions},
  author={Chen, Hugh and Covert, Ian C and Lundberg, Scott M and Lee, Su-In},
  journal={Nature Machine Intelligence},
  pages={1--12},
  year={2023},
  publisher={Nature Publishing Group UK London}
}

@article{frye2020asymmetric_shap,
  title={Asymmetric {S}hapley values: incorporating causal knowledge into model-agnostic explainability},
  author={Frye, Christopher and Rowat, Colin and Feige, Ilya},
  journal={Advances in Neural Information Processing Systems (NeurIPS)},
  volume={33},
  pages={1229--1239},
  year={2020}
}

@article{heskes2020causal_shap,
  title={Causal {S}hapley values: exploiting causal knowledge to explain individual predictions of complex models},
  author={Heskes, Tom and Sijben, Evi and Bucur, Ioan Gabriel and Claassen, Tom},
  journal={Advances in Neural Information Processing Systems (NeurIPS)},
  volume={33},
  pages={4778--4789},
  year={2020}
}

@inproceedings{jung2022do_shap,
  title={On measuring causal contributions via do-interventions},
  author={Jung, Yonghan and Kasiviswanathan, Shiva and Tian, Jin and Janzing, Dominik and Bl{\"o}baum, Patrick and Bareinboim, Elias},
  booktitle={International Conference on Machine Learning},
  pages={10476--10501},
  year={2022},
  organization={PMLR}
}

@article{lundberg2017shap,
  title={A unified approach to interpreting model predictions},
  author={Lundberg, Scott M and Lee, Su-In},
  journal={Advances in neural information processing systems},
  volume={30},
  year={2017}
}

@inproceedings{janzing2020feature,
  title={Feature relevance quantification in explainable {AI}: A causal problem},
  author={Janzing, Dominik and Minorics, Lenon and Bl{\"o}baum, Patrick},
  booktitle={International Conference on artificial intelligence and statistics},
  pages={2907--2916},
  year={2020},
  organization={PMLR}
}

@article{lauritzen2002chaingraph,
  title={Chain graph models and their causal interpretations},
  author={Lauritzen, Steffen L and Richardson, Thomas S},
  journal={Journal of the Royal Statistical Society Series B: Statistical Methodology},
  volume={64},
  number={3},
  pages={321--348},
  year={2002},
  publisher={Oxford University Press}
}

@article{zhang2021expl_survey,
  title={A survey on neural network interpretability},
  author={Zhang, Yu and Ti{\v{n}}o, Peter and Leonardis, Ale{\v{s}} and Tang, Ke},
  journal={IEEE Transactions on Emerging Topics in Computational Intelligence},
  volume={5},
  number={5},
  pages={726--742},
  year={2021},
  publisher={IEEE}
}

@inproceedings{kocaoglu2017causalgan,
    title={Causal{GAN}: learning Causal Implicit Generative Models with Adversarial Training},
    author={Murat Kocaoglu and Christopher Snyder and Alexandros G. Dimakis and Sriram Vishwanath},
    booktitle={Proceedings of the 6th International Conference on Learning Representations (ICLR)},
    year={2018},
    address="Vancouver, Canada"
}

@article{goodfellow2020gan,
  title={Generative {A}dversarial {N}etworks},
  author={Goodfellow, Ian and Pouget-Abadie, Jean and Mirza, Mehdi and Xu, Bing and Warde-Farley, David and Ozair, Sherjil and Courville, Aaron and Bengio, Yoshua},
  journal={Communications of the ACM},
  volume={63},
  number={11},
  pages={139--144},
  year={2020},
  publisher={ACM New York, NY, USA}
}

@inproceedings{parafita2019dcn,
  title={Explaining visual models by causal attribution},
  author={Parafita, {\'A}lvaro and Vitri{\`a}, Jordi},
  booktitle={2019 IEEE/CVF International Conference on Computer Vision Workshop (ICCVW)},
  pages={4167--4175},
  year={2019},
  organization={IEEE},
  address="Seoul, Korea"
}

@article{papamakarios2019normalizing,
  title={Normalizing {Flows} for probabilistic modeling and inference},
  author={Papamakarios, George and Nalisnick, Eric and Rezende, Danilo Jimenez and Mohamed, Shakir and Lakshminarayanan, Balaji},
  journal={Journal of Machine Learning Research},
  volume={22},
  number={57},
  year={2021}
}

@inproceedings{pawlowski2020deep,
  title={Deep {Structural} {Causal} {Models} for Tractable Counterfactual Inference},
  author={Pawlowski, Nick and Coelho de Castro, Daniel and Glocker, Ben},
  booktitle = {Advances in Neural Information Processing Systems (NeurIPS)},
  volume={33},
  year={2020}
}

@article{chao2023interventional,
  title={Interventional and counterfactual inference with diffusion models},
  author={Chao, Patrick and Bl{\"o}baum, Patrick and Kasiviswanathan, Shiva Prasad},
  journal={arXiv preprint arXiv:2302.00860},
  year={2023}
}

@article{ho2020denoising,
  title={Denoising diffusion probabilistic models},
  author={Ho, Jonathan and Jain, Ajay and Abbeel, Pieter},
  journal={Advances in Neural Information Processing Systems (NeurIPS)},
  volume={33},
  pages={6840--6851},
  year={2020}
}

@inproceedings{xia2021neural,
  title={The causal-neural connection: expressiveness, learnability, and inference},
  author={Xia, Kevin and Lee, Kai-Zhan and Bengio, Yoshua and Bareinboim, Elias},
  booktitle = {Advances in Neural Information Processing Systems (NeurIPS)},
  volume={34},
  year={2021},
  pages={10823--10836}
}

@inproceedings{sanchez2021vaca,
  title={{VACA}: designing {Variational} {Graph} {Autoencoders} for causal queries},
  author={S{\'a}nchez-Mart{\i}n, Pablo and Rateike, Miriam and Valera, Isabel},
  year={2022},
  booktitle={Proceedings of the 36th AAAI Conference on Artificial Intelligence},
  volume={36}
}

@article{zhou2020gnn,
  title={Graph neural networks: A review of methods and applications},
  author={Zhou, Jie and Cui, Ganqu and Hu, Shengding and Zhang, Zhengyan and Yang, Cheng and Liu, Zhiyuan and Wang, Lifeng and Li, Changcheng and Sun, Maosong},
  journal={AI open},
  volume={1},
  pages={57--81},
  year={2020},
  publisher={Elsevier}
}

@article{javaloy2023causal,
  title={Causal normalizing flows: from theory to practice},
  author={Javaloy, Adri{\'a}n and S{\'a}nchez-Mart{\'\i}n, Pablo and Valera, Isabel},
  journal={Advances in Neural Information Processing Systems},
  volume={36},
  year={2024}
}

@article{parafita2022dcg,
  title={Estimand-Agnostic Causal Query Estimation With {Deep} {Causal} {Graphs}},
  author={Parafita, {\'A}lvaro and Vitri{\`a}, Jordi},
  journal={IEEE Access},
  volume={10},
  pages={71370--71386},
  year={2022},
  publisher={IEEE}
}

@incollection{shapley1953shap,
  title={A Value for n-Person Games},
  author={Shapley, LS},
  booktitle={Contributions to the Theory of Games (AM-28), Volume II},
  pages={307--317},
  year={1953},
  publisher={Princeton University Press}
}

@article{scholkopf2021towardCRL,
  title={Toward causal representation learning},
  author={Sch{\"o}lkopf, Bernhard and Locatello, Francesco and Bauer, Stefan and Ke, Nan Rosemary and Kalchbrenner, Nal and Goyal, Anirudh and Bengio, Yoshua},
  journal={Proceedings of the IEEE},
  volume={109},
  number={5},
  pages={612--634},
  year={2021},
  publisher={IEEE}
}

@inproceedings{lee2020projection,
  title={Causal effect identifiability under partial-observability},
  author={Lee, Sanghack and Bareinboim, Elias},
  booktitle={International Conference on Machine Learning},
  pages={5692--5701},
  year={2020},
  organization={PMLR}
}

@book{mann1960approxshap,
  title={Values of large games, IV: Evaluating the electoral college by Montecarlo techniques},
  author={Mann, Irwin and Shapley, Lloyd S},
  year={1960},
  publisher={Rand Corporation}
}

@inproceedings{durkan2019nsf,
  title={Neural {Spline} {Flows}},
  author={Durkan, Conor and Bekasov, Artur and Murray, Iain and Papamakarios, George},
  booktitle = {Advances in Neural Information Processing Systems (NeurIPS)},
  volume={32},
  pages={7511--7522},
  year={2019},
  address="Vancouver, Canada"
}

@article{clevert2015elu,
  title={Fast and accurate deep network learning by exponential linear units (elus)},
  author={Clevert, Djork-Arn{\'e} and Unterthiner, Thomas and Hochreiter, Sepp},
  journal={arXiv preprint arXiv:1511.07289},
  year={2015}
}

@inproceedings{
loshchilov2018decoupled,
title={Decoupled Weight Decay Regularization},
author={Ilya Loshchilov and Frank Hutter},
booktitle={International Conference on Learning Representations},
year={2019},
}

@misc{diabetesUCI,
  author       = {CDC},
  title        = {{CDC} {D}iabetes {H}ealth {I}ndicators},
  year         = {2015},
  howpublished = {UCI Machine Learning Repository},
  note         = {Preprocessed dataset downloaded from {DOI}: https://doi.org/10.24432/C53919}
}

@book{pearl2009causality,
  title={Causality: Models, Reasoning and Inference},
  author={Pearl, Judea},
  year={2009},
  publisher={Cambridge University Press},
  edition="Second"
}

@inproceedings{shpitser2006interventional,
  title={Identification of joint interventional distributions in recursive semi-{Markovian} causal models},
  author={Shpitser, Ilya and Pearl, Judea},
  booktitle={Proceedings of 21st National Conference on Artificial Intelligence (AAAI)},
  address="Boston, MA, USA",
  pages={1219--1226},
  year={2006}
}

@inproceedings{shpitser2006interventionalconditional,
  title={Identification of conditional interventional distributions},
  author={Shpitser, Ilya and Pearl, Judea},
  booktitle={Proceedings of the 22th Conference on Uncertainty in Artificial Intelligence (UAI)},
  pages={437--444},
  address="Cambridge, MA, USA",
  year={2006}
}

@article{tikka2017identifiability,
  title={Identifying Causal Effects with the {R} Package causaleffect},
  author={Tikka, Santtu and Karvanen, Juha},
  journal={Journal of Statistical Software},
  volume={76},
  number={12},
  year={2017},
  pages={1--30},
  publisher={Foundation for Open Access Statistics}
}

@inproceedings{spirtes2016discovery,
  title={Causal discovery and inference: concepts and recent methodological advances},
  author={Spirtes, Peter and Zhang, Kun},
  booktitle={Applied informatics},
  volume={3},
  year={2016},
  organization={SpringerOpen}
}

@article{pedemonte2021identifiability,
  title={Algorithmic Causal Effect Identification with causaleffect},
  author={Pedemonte, Mart{\'\i} and Vitri{\`a}, Jordi and Parafita, {\'A}lvaro},
  journal={arXiv preprint arXiv:2107.04632},
  year={2021}
}

@inproceedings{kingma2013vae,
  title={{Auto-encoding variational Bayes}},
  author={Diederik P. Kingma and Max Welling},
  booktitle={Proceedings of the 2nd International Conference on Learning Representations (ICLR)},
  year={2014},
  address="Banff, Canada"
}

@article{fanaee2014bikerental,
  title={Event labeling combining ensemble detectors and background knowledge},
  author={Fanaee-T, Hadi and Gama, Joao},
  journal={Progress in Artificial Intelligence},
  volume={2},
  pages={113--127},
  year={2014},
  publisher={Springer}
}

@inproceedings{wang2021shapley,
  title={Shapley flow: A graph-based approach to interpreting model predictions},
  author={Wang, Jiaxuan and Wiens, Jenna and Lundberg, Scott},
  booktitle={International Conference on Artificial Intelligence and Statistics},
  pages={721--729},
  year={2021},
  organization={PMLR}
}

@inproceedings{luther2023sage,
  title={Efficient SAGE Estimation via Causal Structure Learning},
  author={Luther, Christoph and K\"onig, Gunnar and Grosse-Wentrup, Moritz},
  booktitle={Proceedings of The 26th International Conference on Artificial Intelligence and Statistics (AISTATS)},
  pages={11650--11670},
  year={2023},
  publisher={PMLR}
}

@article{zhang2025quantifying,
  title={Quantifying variable contributions to bus operation delays considering causal relationships},
  author={Zhang, Qi and Ma, Zhenliang and Wu, Yuanyuan and Liu, Yang and Qu, Xiaobo},
  journal={Transportation Research Part E: Logistics and Transportation Review},
  volume={194},
  pages={103881},
  year={2025},
  publisher={Elsevier}
}

%%%%%%%%%%%%%%%%%%%%%%%%%%%%%%%%%%%%%%%%%%%%%%%%%%%%%%%%%%%%

\clearpage
\appendix

\section{Shapley value axioms}
\label{appendix:shap_axioms}

The SV $\phi = \{\phi_X\}_{X \in \X}$ is the unique attribution measure fulfilling a number of desirable properties:
\begin{itemize}
    \item \textbf{Efficiency}: $\sum_{X \in \X} \phi_X = \nu(\X) - \nu(\varnothing) = \Delta(\X)$; the sum of SVs adds up to the total contribution of $\X$.
    \item \textbf{Missingness}: if $\forall \textbf{S} \subseteq \X \setminus \{X\}$, $\nu(\textbf{S} \cup \{X\}) = \nu(\textbf{S})$, then $\phi_X = 0$; players with no contribution to any coalition have Shapley value 0.
    \item \textbf{Symmetry}: if $\forall \textbf{S} \subseteq \X \setminus \{X, Y\}$, $\nu(\textbf{S} \cup \{X\}) = \nu(\textbf{S} \cup \{Y\})$, then $\phi_X = \phi_Y$; players with identical contribution to any coalition have identical Shapley values.
    \item \textbf{Linearity}: if $\forall \textbf{S} \subseteq \X, \nu(\textbf{S}) := \nu_1(\textbf{S}) + \nu_2(\textbf{S})$, then $\forall X \in \textbf{X}, \phi_\nu(X) = \phi_{\nu_1}(X) + \phi_{\nu_2}(X)$, and if $\forall \textbf{S} \subseteq \X, \nu(\textbf{S}) := a \cdot \nu'(\textbf{S})$, then $\forall X \in \textbf{X}, \phi_\nu(X) = a \cdot \phi_{\nu'}(X)$; $\phi$ is linear \wrt the coalition game $\nu$.
\end{itemize}

\section{Cache impact on the approximation algorithm}
\label{appendix:cache}

Consider the approximation method (see \cref{subsec:approx_shap}), where we sample permutations of $K$ elements uniformly with replacement, $\pi \sim \mathcal{U}(\Pi([K]))$, so as to approximate the Shapley value with a Monte Carlo estimator. In this section, we want to evaluate how much we can accelerate the computation of new permutations as we fill a cache with the values of previously computed coalitions. When we use a cache, once we compute a coalition for the first time, we save its result in it (assuming no cache limit) and further computations of this coalition will incur in negligible computation time (simply a cache access), therefore speeding up the computation of new permutations. We want to measure exactly how much we can speed up the process.

Let us define some notation. Given $\pi \in \Pi([K])$, let us denote by $\C(\pi)$ the set of $K+1$ coalitions $\textbf{S} \in \powerset{[K]}$ defined by taking the first $s$ elements of $\pi$, $s=0..K$ (\eg for $\pi = (3, 1, 2)$, $\C(\pi) = \{\emptyset, (3), (3, 1), (3, 1, 2)\}$). Then, for an arbitrary $\textbf{S} \in \powerset{[K]}$ and a permutation $\pi \sim \mathcal{U}(\Pi([K]))$:
\begin{equation}
    P(\textbf{S} \in \C(\pi)) = \frac{|\textbf{S}|! (K - |\textbf{S}|)!}{K!} = {K \choose |\textbf{S}|}^{-1}
\end{equation}
since $\textbf{S}$ must appear at the beginning of $\pi$ in an arbitrary order, so there is $|\textbf{S}|!$ possibilities, with the remaining $(K - |\textbf{S}|)$ elements in an arbitrary order, so $(K - |\textbf{S}|)!$, out of the total $K!$ possible permutations. Since we are taking $N$ \iid\ permutations $(\sample{\pi}{n})_{n\in[N]}$, it follows that
\begin{equation}
    P(\forall n\in[N], \textbf{S} \not\in \C(\sample{\pi}{n})) = \Bigg(1 - {K \choose |\textbf{S}|}^{-1}\Bigg)^N,
\end{equation}
which is the probability of an arbitrary coalition $\textbf{S}$ not belonging to any of the $N$ previously sampled permutations, and therefore, it still needs to be computed when it appears in a future permutation. In particular, note that we do not need to know the elements of $\textbf{S}$, only its cardinality $|\textbf{S}|$, which we will denote by $s := |\textbf{S}|$. Given the set of $K + 1$ coalitions $\C(\sample{\pi}{N})$ in permutation $\sample{\pi}{N}$, we can now compute the expected ratio of its coalitions not found in any of the previous permutations (therefore not cached); in other words, the expected ratio of computations we need to perform at the $N$-th permutation, $N > 1$, is:
\begin{equation}
    \frac{1}{K+1} \sum_{\textbf{S} \in \C(\sample{\pi}{N})}P(\forall n\in[N-1], \textbf{S} \not\in \C(\sample{\pi}{n})) =
    \frac{1}{K+1} \sum_{s=0}^K \Biggl(1 - {K \choose s}^{-1}\Biggl)^{N-1}.
    \label{eq:cache_atN}
\end{equation}
For $N=1$, the ratio is trivially $1$. Morover, for $s=0$ ($\textbf{S} = \emptyset$) and $s=K$ ($\textbf{S} = [K]$), the term $(1 - {K \choose s}^{-1})$ becomes $0$ (it is impossible not to have seen them in a previous permutation, since they are in every permutation), so we omit these cases in the following sums.

Finally, the expected ratio of cached coalitions (out of the total number of coalitions $2^K$) after $N \geq 1$ permutations is:
\begin{align}
    \frac{1}{2^K} & \sum_{n=1}^N \, \sum_{\textbf{S} \in \C(\sample{\pi}{n})}
    P(\forall n' \in [n-1], \textbf{S} \not\in \C(\sample{\pi}{n'}))
    = \frac{K+1}{2^K} + \frac{1}{2^K} \sum_{n=2}^N \sum_{s=1}^{K-1} \Bigg(1 - {K \choose s}^{-1}\Bigg)^{n-1} \notag\\
    =&\ \frac{K+1}{2^K} + \frac{1}{2^K} \sum_{s=1}^{K-1} {K \choose s} \Bigg(1 - {K \choose s}^{-1}\Bigg) \Bigg( 1 - \Bigg(1 - {K \choose s}^{-1}\Bigg)^{N-1} \Bigg) \notag\\
    =&\ \frac{K+1}{2^K} + \frac{1}{2^K} \sum_{s=1}^{K-1} \Bigg({K \choose s} - 1\Bigg) - \frac{1}{2^K} \sum_{s=1}^{K-1} {K \choose s} \Bigg(1 - {K \choose s}^{-1}\Bigg)^N \notag\\
    =&\ 1 - \frac{1}{2^K} \sum_{s=0}^{K} {K \choose s} \Bigg(1 - {K \choose s}^{-1}\Bigg)^N,
\label{eq:cache_total}
\end{align}
where we first split the sum over $n$ for $n=1$ and $n > 1$, and then swap the sums and apply, for $x := 1 - {K \choose s}^{-1}$, the equality $\sum_{n=1}^N x^n = x \frac{1 - x^N}{1 - x}$ for $x \in (0, 1)$ (which is the case when $s \neq 0, K$), and noting that $\frac{1}{1 - x} = {K \choose s}$. We then split the sum in two terms, with the first half adding up to 1 with $\frac{K+1}{2^K}$. The rest of the transformation is trivial.

We now plot \cref{eq:cache_atN,eq:cache_total} in \cref{fig:cache_ratio} (a) and (b), respectively, for several values of $K$ (represented by color opacity). The x-axis in both cases is $\frac{n}{2^K}$, so as to show how each curve progresses as $n \rightarrow 2^K$, where we will have encountered $(K+1)2^K$ coalitions. We can see that the likelihood of encountering previously-computed coalitions is very high early in the process, which means that the computations required per permutation speed up significantly in the early stages. However, if we wanted to cover the totality of possible coalitions with the permutation method, this would require many more coalitions given the high likelihood of collision; this is relevant particularly for \cref{subsec:fra_experiment}, where we set $N$ to be high enough so that at least half the total number of coalitions are expected to have been processed after $N$ permutations.

These plots are merely illustrative; we encourage researchers to make use of the derived equations to adjust for the appropriate number of permutations in terms of computation time budget.

\begin{figure}
\centering
    \begin{minipage}{.4\textwidth}
        \centering
        \includegraphics[width=\linewidth]{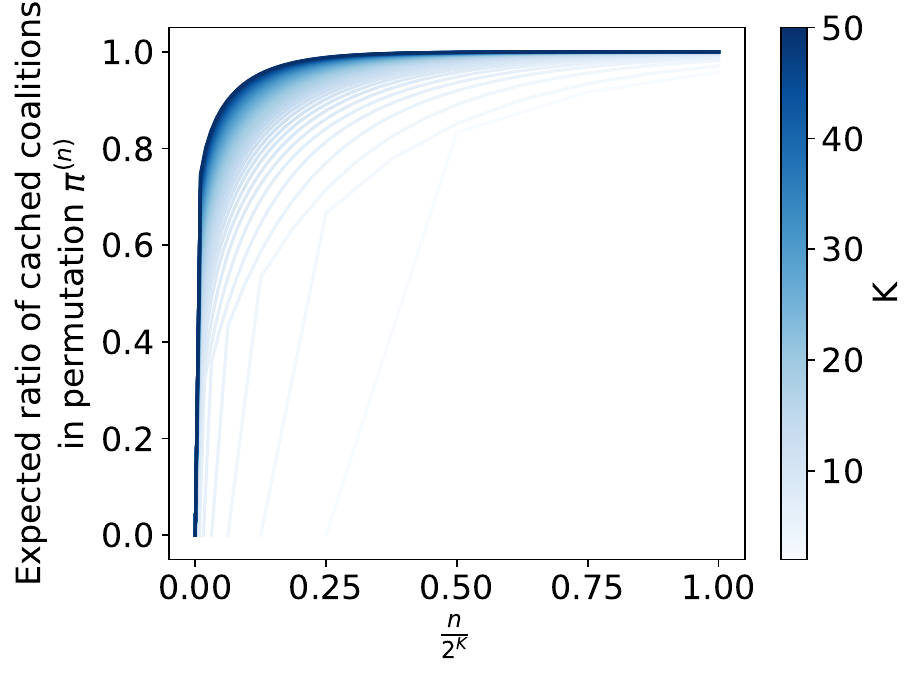}
    \end{minipage}\hspace{3em}%
    \begin{minipage}{.4\textwidth}
        \centering
        \includegraphics[width=\linewidth]{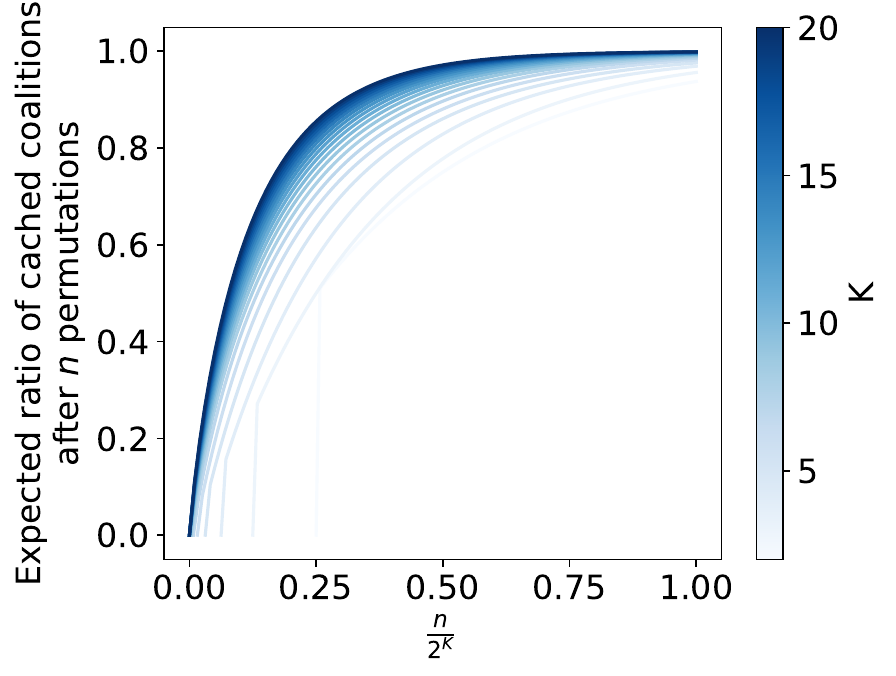}
    \end{minipage}%
    \caption{Cache evolution plots. a) Ratio of coalitions in $\sample{\pi}{n}$ already cached. b) Ratio of total coalitions already cached after $n$ permutations. Both x-axis represent the number of permutations $n$ divided by $2^K$, so as to compare between different values of $K$.}
    \label{fig:cache_ratio}
\end{figure}

\section{Causal Inference concepts}
\label{appendix:causality}
We include here some additional notation and concepts for Causal Inference, necessary for the proofs in appendix \ref{appendix:proofs}.

\subsection*{Notation}
Given r.v.s $X \neq Y$ and a disjoint set of r.v.s $\textbf{Z}$ (possibly empty), we denote that $X$ is independent of $Y$ conditioned on $\textbf{Z}$ in a distribution $\P$ by $(X \indep Y \mid \textbf{Z})_\P$. Given disjoint sets of r.v.s $\textbf{X}, \textbf{Y}, \textbf{Z}$, we say that $\textbf{X}$ is independent of $\textbf{Y}$ given $\textbf{Z}$ in a distribution $\P$, denoted by $(\textbf{X} \indep \textbf{Y} \mid \textbf{Z})_\P$, if and only if $\forall X \in \textbf{X}, \forall Y \in \textbf{Y}, (X \indep Y \mid \textbf{Z})_\P$. $\P$ can be omitted unless it leads to ambiguity.

Given $\X, \textbf{Y} \subseteq \textbf{V}$, let $\G_{\overline{\X}\underline{\textbf{Y}}}$ denote the graph $\G$ modified such that all edges pointing towards nodes in $\X$ are removed (overline) and all edges starting from nodes in $\textbf{Y}$ are removed (underline). We may incur in abuse of notation (\eg $\G_{\overline{\X Y}} := \G_{\overline{\X \cup \{Y\}}}$) unless it leads to ambiguity.

\subsection{d-separability and do-calculus}
\label{subsec:docalc}

In the following, we will define the concept of $d$-separability, its connection to independence, and the three rules of \textbf{$do$-calculus}. Please refer to \citep{pearl2009causality} for more details.

\begin{definition} \textbf{$d$-separability}.\\
Given a DAG $\G = (\textbf{V}, \textbf{E})$, a path $p$ is \emph{$d$-separated} (blocked) by a set $\textbf{Z} \subseteq \textbf{V}$ (possibly empty) if and only if either is true:
\begin{enumerate}
    \item $p$ contains a \textit{chain} $A \rightarrow B \rightarrow C$ or a \textit{fork} $A \leftarrow B \rightarrow C$ such that $B$ is in $\textbf{Z}$.
    \item $p$ contains a \textit{collider} $A \rightarrow B \leftarrow C$ such that no descendant of $B$ (including $B$) is in $\textbf{Z}$.
\end{enumerate}
Given disjoint sets $\textbf{X}, \textbf{Y}, \textbf{Z} \subseteq \textbf{V}$, we say that \emph{$\textbf{Z}$ $d$-separates $\X$ from $\textbf{Y}$} in $\G$ if $\textbf{Z}$ $d$-separates every path $p$ from a node $X \in \X$ to a node $Y \in \textbf{Y}$. We denote this by $(\X \indep \textbf{Y} \mid \textbf{Z})_\G$. 
\label{def:dsep}
\end{definition}

\begin{definition} \textbf{Markov Compatibility}.\\
We say that a distribution $\P(\V)$ on a set of variables $\V = (V_1, \cdots, V_K)$ is (Markov) \emph{compatible} with a DAG $\G$ with $\V$ as vertices in $\G$ if $P(\V) = \prod_{k\in[K]} \P(V_k \mid Pa_\G(V_k))$.
\end{definition}

\begin{theorem} \textbf{Independence and $d$-separability}. \\
Given an SCM $\M=(\V, \W, \P, \F)$ compatible with a DAG $\GM$ and disjoint sets $\textbf{X}, \textbf{Y}, \textbf{Z} \subseteq \V$, if $(\X \indep \textbf{Y} \mid \textbf{Z})_{\GM}$ then $(\X \indep \textbf{Y} \mid \textbf{Z})_\P$. Conversely, if $(\X \not\indep \textbf{Y} \mid \textbf{Z})_{\GM}$, there exists at least one distribution $\P'$ compatible with $\GM$ (in fact, \textit{almost all}) such that $(\X \not\indep \textbf{Y} \mid \textbf{Z})_{\P'}$.
\label{theo:dsep}
\end{theorem}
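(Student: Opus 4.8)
\textbf{Proof plan for Theorem~\ref{theo:dsep} (Independence and $d$-separability).}

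The statement splits into two directions. For the forward (soundness) direction, the plan is to show that $d$-separation in $\GM$ implies conditional independence in any distribution $\P$ that is Markov-compatible with $\GM$. I would build on the Markov factorization $\P(\V) = \prod_{k\in[K]} \P(V_k \mid Pa_\G(V_k))$: first establish the local Markov property (each node is independent of its non-descendants given its parents) directly from the factorization by marginalizing out the descendants of $V_k$ in topological order, and then invoke the equivalence between the local and global Markov properties for DAGs. Concretely, given disjoint $\X, \textbf{Y}, \textbf{Z}$ with $(\X \indep \textbf{Y}\mid\textbf{Z})_{\GM}$, I would restrict attention to the ancestral subgraph $\GM[An(\X\cup\textbf{Y}\cup\textbf{Z})]$, moralize it, and argue that $\textbf{Z}$ separates $\X$ from $\textbf{Y}$ in the resulting undirected moral graph; the factorization then restricts to this ancestral set and the undirected separation yields the factorization $\P(\X,\textbf{Y}\mid\textbf{Z}) = \P(\X\mid\textbf{Z})\P(\textbf{Y}\mid\textbf{Z})$, i.e. $(\X\indep\textbf{Y}\mid\textbf{Z})_\P$. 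Since this is a classical result (Verma--Pearl / Lauritzen), I would likely cite \citep{pearl2009causality} for the detailed graph-moralization bookkeeping rather than reprove it from scratch.

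For the converse (completeness / faithfulness-generically) direction, the claim is weaker and more delicate: if $\X$ is \emph{not} $d$-separated from $\textbf{Y}$ given $\textbf{Z}$, then there exists at least one (indeed almost every) compatible distribution exhibiting the corresponding dependence. The plan here is a construction-plus-genericity argument. First, pick a single $d$-connecting path $p$ witnessing the non-separation between some $X\in\X$ and $Y\in\textbf{Y}$. Then construct an explicit SCM compatible with $\GM$ — e.g. linear-Gaussian structural equations with generic nonzero coefficients, or generic parameter values in any sufficiently rich parametric family — and show that along $p$ the induced correlation (conditioned on $\textbf{Z}$, accounting for the collider-activation clause) is a nonzero polynomial in the parameters. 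Because a nonzero polynomial vanishes only on a measure-zero algebraic set, the set of parameter values for which the dependence fails is Lebesgue-null; hence for almost all parameter choices $(\X\not\indep\textbf{Y}\mid\textbf{Z})_{\P'}$, which in particular gives existence. The "almost all" phrasing in the statement is exactly this genericity.

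The main obstacle is the converse direction's collider bookkeeping: a $d$-connecting path may pass through colliders that are only unblocked because some descendant (possibly not the collider itself) lies in $\textbf{Z}$, so the naive "multiply edge coefficients along the path" heuristic does not directly give the conditional covariance — one must carefully track how conditioning on $\textbf{Z}$ both opens colliders and potentially blocks chains/forks elsewhere, and verify the resulting expression is a genuinely nontrivial polynomial rather than one that is identically zero due to cancellations. I would handle this by choosing the parametric family carefully (linear-Gaussian is cleanest, since conditional independence is equivalent to a vanishing partial correlation, itself a ratio of polynomials in the covariance entries) and reducing to the known fact that the parametrization of a DAG model is "faithful for almost all parameters," again citing the standard reference rather than redoing the algebraic-geometry argument in full. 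Everything else — the forward direction and the existence claim following from the genericity claim — is routine.
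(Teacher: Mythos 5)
Your plan is sound, but note that the paper does not actually prove this theorem: it is quoted verbatim as classical background (Verma--Pearl soundness and the genericity form of completeness), deferred to \citep{pearl2009causality}, and accompanied only by an informal remark that the converse holds because exact cancellations require ``precise parameter choices'' that form an exceptional set. Your outline is the standard argument the cited reference relies on --- moralization of the ancestral subgraph plus the local-to-global Markov equivalence for the forward direction, and a measure-zero (faithfulness-for-almost-all-parameters) argument in a linear-Gaussian or similar parametrization for the converse --- and you correctly flag the one genuine subtlety, namely that a single active path does not by itself rule out cancellation across multiple paths, which is exactly why the genericity formulation is needed. Since you, like the paper, ultimately delegate the heavy bookkeeping to the standard references, there is no substantive divergence; the only cosmetic mismatch is that the theorem as stated lives on the graph $\GM$ over $\V \cup \W$ (including latent noises and confounders), so the Markov factorization and the moralization step should formally be carried out on the full vertex set before restricting attention to the measured variables.
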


\begin{remark}
The second statement comes from the fact that precise parameter choices $\theta$ of distributions $\P_\Theta$ might result in independence in an otherwise unblocked path in $\G$. Fortunately, such specific tuning of $\Theta$ rarely occurs in practice.
\end{remark}

\begin{remark}
If we need to determine independence relationships $(\textbf{X} \indep \textbf{Y} \mid \textbf{Z})_\P$ ($\textbf{Z}$ possibly empty), we simply verify that all paths connecting $\textbf{X}$ and $\textbf{Y}$ are blocked by $\textbf{Z}$, using $d$-separability.
\end{remark}

Next, we introduce the three rules of $do$-calculus, with which we can transform causal queries step by step, until we reach the desired estimand.

\begin{theorem} \textbf{Rules of $do$-calculus}. \\
Given an SCM $\M=(\V, \W, \P, \F)$ compatible with a DAG $\GM$, for any disjoint sets $\textbf{X}, \textbf{Y}, \textbf{Z}, \textbf{W}, \subseteq \V$ ($\X$ and $\textbf{W}$ possibly empty):
\begin{enumerate}
    \item \textbf{Insertion/deletion of observations} (R1):\\
    $P_{\textbf{x}} (\textbf{Y} \mid \textbf{Z}, \textbf{W}) = 
    P_{\textbf{x}} (\textbf{Y} \mid \textbf{W}) 
    \hspace{1em}\textrm{ if }
    (\textbf{Y} \indep \textbf{Z} \mid \X, \textbf{W})_{\G_{\overline{\X}}}$.
    \item \textbf{Exchange of interventions/observations} (R2):\\
    $P_{\textbf{x}, \textbf{z}} (\textbf{Y} \mid \textbf{W}) = 
    P_{\textbf{x}} (\textbf{Y} \mid \textbf{z}, \textbf{W}) 
    \hspace{1em}\textrm{ if } 
    (\textbf{Y} \indep \textbf{Z} \mid \X, \textbf{W})_{\G_{\overline{\X}\underline{\textbf{Z}}}}$.
    \item \textbf{Insertion/deletion of interventions} (R3):\\
    $P_{\textbf{x}, \textbf{z}} (\textbf{Y} \mid \textbf{W}) = 
    P_{\textbf{x}} (\textbf{Y} \mid \textbf{W}) 
    \hspace{1em}\textrm{ if } 
    (\textbf{Y} \indep \textbf{Z} \mid \X, \textbf{W})_{\G_{\overline{\X\,\textbf{Z}(\textbf{W})}}}$,\\
    where $\textbf{Z}(\textbf{W}) := \textbf{Z} \setminus An_{\G_{\overline{\X}}}(\textbf{W})$, the set of nodes in $\textbf{Z}$ that are not ancestors of $\textbf{W}$ (including $\textbf{W}$) in the graph $\G_{\overline{\X}}$.
\end{enumerate}
\end{theorem}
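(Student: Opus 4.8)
The plan is to reduce all three rules to the soundness of $d$-separation (Theorem~\ref{theo:dsep}) by treating each interventional distribution as the observational distribution of a suitably mutilated SCM. The single workhorse is the observation that $do(\X=\textbf{x})$ replaces every $f_X$ ($X\in\X$) with the constant assignment $X\gets x$, so the intervened model $\M_{\textbf{x}}$ has graph $\G_{\overline{\X}}$ and, reading off its sampling procedure, its law obeys the truncated factorization $P_{\textbf{x}}(\textbf{v}) = \prod_{V_k \notin \X} P(v_k \mid pa_k)$ evaluated at $\X=\textbf{x}$; in particular $P_{\textbf{x}}$ is Markov-compatible with $\G_{\overline{\X}}$. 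I would establish this factorization first, directly from the mutilation process, since every subsequent step rests on it.

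With this in hand, R1 is essentially immediate. The hypothesis $(\textbf{Y} \indep \textbf{Z} \mid \X, \textbf{W})_{\G_{\overline{\X}}}$ is a $d$-separation in the very graph that governs $P_{\textbf{x}}$, so Theorem~\ref{theo:dsep} yields $(\textbf{Y} \indep \textbf{Z} \mid \X, \textbf{W})_{P_{\textbf{x}}}$. Since $\X$ is held fixed at $\textbf{x}$ throughout $P_{\textbf{x}}$, this conditional independence is exactly the equality $P_{\textbf{x}}(\textbf{Y} \mid \textbf{Z}, \textbf{W}) = P_{\textbf{x}}(\textbf{Y} \mid \textbf{W})$.

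For R2 and R3 I would make the ``seeing versus doing'' bookkeeping uniform with the standard augmented-graph device: attach to each variable $V_k$ an auxiliary regime parent $F_k$ so that $V_k$ follows $f_k$ when $F_k$ is idle and is clamped to $v_k$ when $F_k=v_k$. Then $P_{\textbf{x}}(\cdot) = P(\cdot \mid F_{\X}=\textbf{x},\; \text{others idle})$, and both the observation and the intervention of $\textbf{Z}$ live in one graph, differing only in the setting of the switches $F_{\textbf{Z}}$. For R2, observing $\textbf{Z}=\textbf{z}$ and setting $do(\textbf{Z}=\textbf{z})$ yield the same $P(\textbf{Y}\mid\textbf{W})$ precisely when there is no confounding between $\textbf{Z}$ and $\textbf{Y}$ given $\X,\textbf{W}$; deleting $\textbf{Z}$'s outgoing (causal) arrows, i.e. passing to $\G_{\overline{\X}\underline{\textbf{Z}}}$, removes the directed $\textbf{Z}\to\cdots\to\textbf{Y}$ paths and leaves exactly the backdoor and collider paths, so $(\textbf{Y}\indep\textbf{Z}\mid\X,\textbf{W})_{\G_{\overline{\X}\underline{\textbf{Z}}}}$ certifies their blockage and, via Theorem~\ref{theo:dsep}, gives $P_{\textbf{x},\textbf{z}}(\textbf{Y}\mid\textbf{W})=P_{\textbf{x}}(\textbf{Y}\mid\textbf{z},\textbf{W})$. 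For R3, I would compare the active and idle regimes of $F_{\textbf{Z}}$ (both with $\X$ intervened) and argue that intervening on $\textbf{Z}$ is inert on $P(\textbf{Y}\mid\textbf{W})$ exactly when $\textbf{Y}$ is $d$-separated from the switches $F_{\textbf{Z}}$; the one delicacy is that a $\textbf{Z}$-node that is an ancestor of $\textbf{W}$ can still act through the conditioning event, which is why the rule mutilates only the incoming edges of $\textbf{Z}(\textbf{W}) = \textbf{Z} \setminus An_{\G_{\overline{\X}}}(\textbf{W})$ and tests $(\textbf{Y} \indep \textbf{Z} \mid \X, \textbf{W})_{\G_{\overline{\X\,\textbf{Z}(\textbf{W})}}}$.

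The main obstacle will be R2, the bridge between observation and intervention. I must argue carefully that deleting $\textbf{Z}$'s outgoing edges isolates precisely the backdoor and collider paths whose blockage makes conditioning and setting coincide, and that the auxiliary switches introduce no spurious active paths---so that $d$-separation in the augmented graph faithfully reproduces the intended conditions in $\G_{\overline{\X}\underline{\textbf{Z}}}$ and $\G_{\overline{\X\,\textbf{Z}(\textbf{W})}}$, including the collider dependence created by conditioning on $\textbf{W}$ and on the $F$-nodes. By comparison, the factorization lemma and R1 are routine; once the augmented-graph correspondence is nailed down, R3 follows by the same $d$-separation bookkeeping.
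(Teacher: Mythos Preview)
The paper does not prove this theorem at all: it is stated as background in the appendix on causal-inference preliminaries and attributed to Pearl (the preamble to that subsection reads ``Please refer to \citep{pearl2009causality} for more details''). So there is no paper proof to compare your proposal against.

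That said, your outline is the standard route to Pearl's rules and is broadly sound. A few remarks if you intend to flesh it out. For R1 the argument you give is exactly right. For R2 and R3, the regime-indicator (augmented-graph) device you invoke is the usual machinery, but your description of R2 slightly misstates what is being checked: the hypothesis $(\textbf{Y}\indep\textbf{Z}\mid\X,\textbf{W})_{\G_{\overline{\X}\underline{\textbf{Z}}}}$ is not literally ``no confounding between $\textbf{Z}$ and $\textbf{Y}$''; rather, it certifies that every remaining (non-causal) path from $\textbf{Z}$ to $\textbf{Y}$ is blocked by $\X\cup\textbf{W}$, which is what makes the $F_{\textbf{Z}}$ switches $d$-separated from $\textbf{Y}$ in the augmented graph. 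The cleanest way to make R2 and R3 rigorous is to show that $(\textbf{Y}\indep F_{\textbf{Z}}\mid \X,\textbf{W},\ldots)$ in the augmented graph is equivalent to the stated $d$-separation conditions in the respective mutilated graphs, and then apply Theorem~\ref{theo:dsep} once; your sketch gestures at this but does not carry out the path-by-path correspondence, which is where the actual work lies (especially the collider bookkeeping you flag for R3 and the role of $\textbf{Z}(\textbf{W})$).
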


\subsection{Projected Structural Causal Models}
\label{subsec:projected_scm}

\begin{definition} \textbf{Divergent Path}. \\
A \emph{divergent path} between $X$ and $Y$ consists of two directed paths, from $W$ to $X$ and from $W'$ to $Y$, such that $W = W'$ or $W \leftrightarrow W'$.
\end{definition}

\begin{definition} \textbf{Projected SCM}. \\
Given an SCM $\M = (\V, \W, \P, \F)$ compatible with a DAG $\GM$ and a subset $\V' \subseteq \V$, we define the \emph{projected causal DAG} $\G [\V']$ defined on vertices $\V'$ and $\W' := \E' \cup \W'$, with $\E' := \{E_X \in \E \mid X \in \V'\}$ and $\U'$ as defined next, such that:
\begin{itemize}
    \item $\forall V_k, V_l \in \V'$, there is a directed edge $V_k \rightarrow V_l$ if there exists a directed path from $V_k$ to $V_l$ in $\GM$ where every internal node in the path is not in $\V'$.
    \item $\forall V_k, V_l \in \V'$, there is a bidirected edge $V_k \leftrightarrow V_l$ (connected by a latent confounder $\Confounder{k}{l} \in \U'$) if there exists a \textit{divergent} path in $\G$ between them such that every internal node is not in $\V'$.
\end{itemize}
We define the \emph{projected SCM} $\M [\V']$ by restricting its graph to $\GM [\V']$, with distribution $\P_{\M [\V']}(\V') = \P_\M(\V')$.
\label{def:projected_scm}
\end{definition}

\begin{remark}
The projected SCM respects all conditional independence relationships and the rules of $do$-calculus in the original graph.
\citep{lee2020projection}.    
\end{remark}

\section{Proofs}
\label{appendix:proofs}

In this section, we will prove the results in the main paper and discuss the Frontier-Reducibility Algorithm.

\subsection{Non-ancestors}

\begin{lemma}
Given a DAG $\G = (\textbf{V}, \textbf{E})$ and disjoint subsets of vertices $\textbf{X}, \textbf{Y} \subseteq \textbf{V}$ (possibly empty), if there is a path $p$ in $\G_{\overline{\X}\underline{\textbf{Y}}}$, then $p$ is a path in $\G$.
\label{lemma:subset}
\end{lemma}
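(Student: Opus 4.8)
The plan is to unwind the definition of the mutilated graph $\G_{\overline{\X}\underline{\textbf{Y}}}$ and observe that it differs from $\G$ only by the \emph{deletion of edges}, never by the deletion or addition of vertices. Concretely, $\G_{\overline{\X}\underline{\textbf{Y}}} = (\textbf{V}, \textbf{E}')$ where $\textbf{E}' := \textbf{E} \setminus \big(\{A \to B \in \textbf{E} \mid B \in \textbf{X}\} \cup \{A \to B \in \textbf{E} \mid A \in \textbf{Y}\}\big)$, so in particular $\textbf{E}' \subseteq \textbf{E}$ and the vertex set is unchanged.

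Next I would recall what a path is in this setting: a path $p$ in a graph $\G' = (\textbf{V}, \textbf{E}')$ is a finite sequence of distinct vertices $p = (V_0, V_1, \dots, V_n)$ with $V_i \in \textbf{V}$ such that for each consecutive pair $(V_{i-1}, V_i)$ at least one of the directed edges $V_{i-1} \to V_i$ or $V_i \to V_{i-1}$ lies in $\textbf{E}'$ (paths need not be directed). Given such a $p$ in $\G_{\overline{\X}\underline{\textbf{Y}}}$, each of its constituent edges belongs to $\textbf{E}'$, and since $\textbf{E}' \subseteq \textbf{E}$, each such edge also belongs to $\textbf{E}$. Because the vertices $V_0, \dots, V_n$ already lie in $\textbf{V}$, which is the common vertex set, the very same sequence $p$ satisfies the definition of a path in $\G = (\textbf{V}, \textbf{E})$. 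Hence $p$ is a path in $\G$.

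There is no real obstacle here; the only points requiring a modicum of care are (i) stating precisely that edge deletion does not disturb the vertex set, so that the vertices appearing in $p$ are legitimate vertices of $\G$, and (ii) being explicit that ``path'' is understood in the undirected sense (a trail of edges in either orientation), so that the argument covers chains, forks, and colliders uniformly — this is exactly the form in which the lemma will be invoked later (e.g., together with $d$-separability arguments for the $do$-calculus rules). The proof is therefore a one-line containment argument $\textbf{E}' \subseteq \textbf{E}$ dressed up with the definition of a path.
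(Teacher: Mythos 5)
Your proposal is correct and follows essentially the same argument as the paper: both reduce the claim to the containment $\textbf{E}' \subseteq \textbf{E}$, since the mutilated graph only deletes edges (those into $\X$ or out of $\textbf{Y}$) and leaves the vertex set untouched. Your version merely spells out the definitions of the edge set and of (undirected) paths more explicitly than the paper's two-line proof.
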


\begin{proof}
$\G_{\overline{\X}\underline{\textbf{Y}}}$'s edges are a subset of $\G$'s edges, since $\G_{\overline{\X}\underline{\textbf{Y}}}$ only removes edges either ending in $\textbf{X}$ or starting from $\textbf{Y}$. Adding those edges back in $\G$ cannot remove any edge from the path; hence, $p$ is a path in $\G$.
\end{proof}

\begin{proposition} \textbf{Non-Ancestors do not Contribute}. \\
Let $\M$ be an SCM $\M = (\V, \W, \P, \F)$, $Y$ the target r.v., $\X$ a subset $\X \subseteq \V \setminus \{Y\}$ and $\textbf{x}$ a realization $\textbf{x} \sim \P(\X)$. For any $X \in \X$, if $X$ is not an ancestor of $Y$, then $\phi_{\nu_\textbf{x}}(X) = 0$.
\label{prop:doshap0_appendix}
\end{proposition}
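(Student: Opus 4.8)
The plan is to show that every term in the Shapley summation \eqref{eq:shap_coalitions} vanishes, i.e.\ that $\nu_{\textbf{x}}(\textbf{S} \cup \{X\}) = \nu_{\textbf{x}}(\textbf{S})$ for every coalition $\textbf{S} \subseteq \X \setminus \{X\}$, whenever $X \notin An(Y)$. Since
\[
    \nu_{\textbf{x}}(\textbf{S} \cup \{X\}) = \expectation{Y \mid do(\textbf{S}=\textbf{s}), do(X=x)}
    \quad\text{and}\quad
    \nu_{\textbf{x}}(\textbf{S}) = \expectation{Y \mid do(\textbf{S}=\textbf{s})},
\]
it suffices to prove the interventional identity $P_{\textbf{s}, x}(Y) = P_{\textbf{s}}(Y)$, after which $\phi_X = 0$ follows immediately from the coalition form of the Shapley value.

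The key step is an application of Rule 3 of $do$-calculus (insertion/deletion of interventions) with the roles $\textbf{X} \leftarrow \textbf{S}$ (already intervened), $\textbf{Z} \leftarrow \{X\}$ (the intervention to be deleted), $\textbf{W} \leftarrow \varnothing$, and $\textbf{Y} \leftarrow \{Y\}$. Since $\textbf{W}$ is empty, $\textbf{Z}(\textbf{W}) = \{X\} \setminus An_{\G_{\overline{\textbf{S}}}}(\varnothing) = \{X\}$, so the required condition is the $d$-separation $(Y \indep X)_{\G_{\overline{\textbf{S}}\,\overline{X}}}$. I would argue this as follows: $X \notin An_{\G}(Y)$, and deleting incoming edges (to $\textbf{S}$ and to $X$) cannot create new ancestral relationships, so $X \notin An_{\G_{\overline{\textbf{S}\,X}}}(Y)$ either. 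In the mutilated graph $\G_{\overline{\textbf{S}\,X}}$ the node $X$ is a source (no incoming edges), so any path from $X$ to $Y$ must begin with an edge \emph{out} of $X$; since $X$ is not an ancestor of $Y$, such a path must contain a collider, and with empty conditioning set every collider is unconditioned, hence blocks the path (using \cref{def:dsep}, part 2). Therefore $(Y \indep X)_{\G_{\overline{\textbf{S}\,X}}}$ holds, R3 applies, and $P_{\textbf{s},x}(Y) = P_{\textbf{s}}(Y)$, giving $\nu_{\textbf{x}}(\textbf{S}\cup\{X\}) = \nu_{\textbf{x}}(\textbf{S})$ for all such $\textbf{S}$.

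I expect the main obstacle to be the careful handling of the collider argument in the mutilated graph — in particular, making rigorous the claim that removing edges never introduces new ancestors, and that a source node which is a non-ancestor of $Y$ cannot reach $Y$ by any unblocked path (one must be sure no descendant of a collider could lie in the conditioning set, which here is trivial since that set is empty, but should be stated). A minor subtlety is that $\textbf{S}$ and $\{X\}$ must be disjoint for R3, which is guaranteed since $X \notin \textbf{S}$; and one should note the projected-SCM remark so that latent confounders, drawn as bidirected edges, are handled by the same $d$-separation machinery. Once the interventional equality is in hand, plugging into \eqref{eq:shap_coalitions} makes every summand zero, so $\phi_{\nu_{\textbf{x}}}(X) = 0$, which also matches the \textbf{Missingness} axiom from \cref{appendix:shap_axioms}.
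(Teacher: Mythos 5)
Your overall strategy is the same as the paper's: show $\nu_{\textbf{x}}(\textbf{S}\cup\{X\})=\nu_{\textbf{x}}(\textbf{S})$ for every $\textbf{S}\subseteq\X\setminus\{X\}$ via Rule 3, then conclude $\phi_X=0$ from \cref{eq:shap_coalitions}. However, there is a gap in your instantiation of R3. With $\textbf{X}\leftarrow\textbf{S}$, $\textbf{Z}\leftarrow\{X\}$, $\textbf{W}\leftarrow\varnothing$, the rule as stated in the paper requires $(\textbf{Y}\indep\textbf{Z}\mid \textbf{X},\textbf{W})$ in the mutilated graph, i.e.\ the $d$-separation must be taken \emph{conditioned on the already-intervened set} $\textbf{S}$: the correct condition is $(Y \indep X \mid \textbf{S})_{\G_{\overline{\textbf{S}X}}}$, not $(Y \indep X)_{\G_{\overline{\textbf{S}X}}}$. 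Your collider argument --- ``with empty conditioning set every collider is unconditioned, hence blocks the path'' --- therefore establishes the wrong separation statement, and it does not automatically transfer: conditioning on $\textbf{S}$ could in principle \emph{activate} a collider whose descendant lies in $\textbf{S}$, reopening a path that was blocked under the empty conditioning set.

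The repair is exactly the case analysis the paper carries out. In $\G_{\overline{\textbf{S}X}}$ every node of $\textbf{S}$ has all incoming edges removed, so a collider $B$ on a path can neither belong to $\textbf{S}$ (it would need an incoming edge) nor have a descendant in $\textbf{S}$ (there is no directed path into $\textbf{S}$). Hence conditioning on $\textbf{S}$ activates no collider, every non-directed path from $X$ to $Y$ remains blocked, and the only remaining possibility is a fully directed path, which by \cref{lemma:subset} would make $X\in An_\G(Y)$, contradicting the hypothesis. Adding this observation turns your argument into the paper's proof; without it, the application of R3 is not justified as written.
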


\begin{proof} We will prove that $\forall X \not\in An_\G(Y), \forall \textbf{S} \subseteq \X \setminus \{X\}, \nu(\textbf{S} \cup \{X\}) = \expectation{Y \mid \widehat{\textbf{s}}, \widehat{x}} = \expectation{Y \mid \widehat{\textbf{s}}} = \nu(\textbf{S})$. If that is the case, then 
\begin{equation*}
\phi_X = \sum_{\textbf{S} \subseteq \X \setminus \{X\}} \frac{1}{K} {K - 1 \choose |\textbf{S}|}^{-1} (\nu(\textbf{S} \cup \{X\}) - \nu(\textbf{S})) = 0.
\end{equation*}

Note that $\expectation{Y \mid \widehat{\textbf{s}}, \widehat{x}} = \expectation{Y \mid \widehat{\textbf{s}}}$ if $P_{\textbf{s}, x}(Y) = P_{\textbf{s}}(Y)$, which is implied by R3 if $(Y \indep X \mid \textbf{S})_{\G_{\overline{\textbf{S}X}}}$.

Let us prove this independence by contradiction: assume there is a path $p$ connecting $X$ and $Y$ unblocked conditioned on $\textbf{S}$ in $\G_{\overline{\textbf{S}X}}$. The path cannot start with $X \leftarrow \cdots$ since all edges pointing towards $X$ are removed in $\G_{\overline{\textbf{S}X}}$, so $p = X \rightarrow \cdots \noarrow Y$. Since the path is unblocked, if there were any left arrows ($\leftarrow$) in the path, the resulting collider $\cdots \rightarrow B \leftarrow \cdots$ must necessarily fulfill $De_{\G_{\overline{\textbf{S}X}}} (B) \in \textbf{S}$ to unblock the path. There are two cases: 1) if $B \in \textbf{S}$, then there is an edge $B \leftarrow \cdots$ for a node $B \in \textbf{S}$, which cannot be true in $\G_{\overline{\textbf{S}X}}$; 2) if $B \in An_{\G_{\overline{\textbf{S}X}}}(\textbf{S}) \setminus \textbf{S}$, then there is a directed path from $B$ to a node in $\textbf{S}$, which again cannot happen in $\G_{\overline{\textbf{S}X}}$ because we have removed all edges pointing towards $\textbf{S}$. Therefore, the path must necessarily not contain any left arrows, which means that $p$ is a directed path from $X$ to $Y$ in $\G_{\overline{\textbf{S}X}}$, which must also be a directed path in $\G$ due to \cref{lemma:subset}; therefore $X \in An_\G(Y)$, contradicting the initial assumption. No unblocked path can exist, which proves $(Y \indep X \mid \textbf{S})_{\G_{\overline{\textbf{S}X}}}$ and the theorem in turn.
\end{proof}

\subsection{Frontier-Reducibility Algorithm}
\label{appendix:frontiers}

We begin by defining the concept of \emph{frontier} and proving several properties related to it, necessary for the definition of the Frontier-Reducibility Algorithm (FRA), introduced next. We finish with an alternative formulation of the FRA algorithm with integers for faster execution time and lesser memory usage.

\subsubsection{Frontiers and properties}

In the following, consider an SCM $\M = (\V, \W, \P, \F)$ with associated DAG $\G = (V, E)$, where $\V = (V_0, \dots, V_K)$ is sorted in an arbitrary topological order of the graph. Let $\X := \{V_0, \dots, V_{K-1}\}$, $Y := V_K$, and assume that $\X \subseteq An(Y)$. Note that there may be latent confounders ($\U \neq \varnothing$).

\begin{definition}
    Given any node $X \in \X$, a subset $\textbf{S} \subseteq \X$ is a frontier between $X$ and $Y$ if $X \not \in \textbf{S}$ and all directed paths $p=(X, \dots, Y)$ from $X$ to $Y$ are blocked by $\textbf{S}$, \ie $\exists Z \in \textbf{S}$ s.t. $Z \in p$. We denote the set of frontiers between $X$ and $Y$ in $\G$ as $\Fr(X, Y)$.
    \label{def:frontier_appendix}
\end{definition}

\begin{proposition}
    Given nodes $X \in \X$ and $Y$, and a subset $\textbf{S} \in \Fr(X, Y)$, frontier from $X$ to $Y$, then $\nu(\textbf{S} \cup \{X\}) = \nu(\textbf{S})$.
    \label{prop:frontier_simplifies_appendix}
\end{proposition}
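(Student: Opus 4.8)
The plan is to reduce the identity $\nu(\textbf{S}\cup\{X\})=\nu(\textbf{S})$ to a single $d$-separation statement and verify it by a short path argument that reuses the analysis from the proof of \cref{prop:doshap0_appendix}. Write $\textbf{s}$ and $x$ for the coordinates of the sample $\textbf{x}$ at $\textbf{S}$ and at $X$ (recall $\nu := \nu_\textbf{x}$), so that the two sides are $\expectation{Y\mid do(\textbf{S}=\textbf{s},X=x)}$ and $\expectation{Y\mid do(\textbf{S}=\textbf{s})}$; it therefore suffices to prove $P_{\textbf{s},x}(Y)=P_{\textbf{s}}(Y)$. I would derive this from rule R3 of $do$-calculus, taking the set of already-performed interventions to be $\textbf{S}$, the intervention to be deleted to be $\{X\}$, and $\textbf{W}=\varnothing$. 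Then $\textbf{Z}(\textbf{W})=\{X\}\setminus An_{\G_{\overline{\textbf{S}}}}(\varnothing)=\{X\}$, so R3 yields $P_{\textbf{s},x}(Y)=P_{\textbf{s}}(Y)$ once we have shown $(Y\indep X\mid\textbf{S})_{\G_{\overline{\textbf{S}X}}}$; the disjointness hypotheses of R3 hold since $X\notin\textbf{S}$ by \cref{def:frontier_appendix} and $Y\notin\X\supseteq\textbf{S}\cup\{X\}$.

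To establish $(Y\indep X\mid\textbf{S})_{\G_{\overline{\textbf{S}X}}}$ I would argue by contradiction, mirroring \cref{prop:doshap0_appendix}. Suppose some path $p$ from $X$ to $Y$ in $\G_{\overline{\textbf{S}X}}$ is not blocked by $\textbf{S}$. Since all edges into $X$ were deleted, $p$ begins $X\rightarrow\cdots$. Moreover $p$ contains no ``$\leftarrow$'' edge: otherwise the first one creates a collider $\cdots\rightarrow B\leftarrow\cdots$ on $p$, which to be open forces $B$, or some descendant of $B$ in $\G_{\overline{\textbf{S}X}}$, into $\textbf{S}$ --- but any node of $\textbf{S}$, and any node on a directed path into $\textbf{S}$, has all of its incoming edges removed in $\G_{\overline{\textbf{S}X}}$, while $B$ and every node on such a directed path carries an incoming edge there, a contradiction (this is exactly the collider case from \cref{prop:doshap0_appendix}). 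Hence $p$ is a directed path $X\rightarrow\cdots\rightarrow Y$ in $\G_{\overline{\textbf{S}X}}$, and by \cref{lemma:subset} also a directed path from $X$ to $Y$ in $\G$.

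The only new step is to collide this with the frontier hypothesis instead of with non-ancestrality: each node of $p$ other than $X$ has an incoming edge along $p$ that survives in $\G_{\overline{\textbf{S}X}}$, so no such node lies in $\textbf{S}\cup\{X\}$, and since $X\notin\textbf{S}$ as well, $p$ is a directed $X$-to-$Y$ path in $\G$ disjoint from $\textbf{S}$ --- contradicting $\textbf{S}\in\Fr(X,Y)$. Thus no such $p$ exists, the $d$-separation holds, R3 applies, and $\nu(\textbf{S}\cup\{X\})=\nu(\textbf{S})$. Equivalently and more compactly, the frontier property says precisely that $X\notin An_{\G_{\overline{\textbf{S}}}}(Y)$, \ie $X$ is a non-ancestor of $Y$ in the mutilated graph $\G_{\overline{\textbf{S}}}=\G_{\M_{\textbf{s}}}$, so one may simply invoke the pairwise equality proved inside \cref{prop:doshap0_appendix} for the SCM $\M_{\textbf{s}}$ with the empty coalition. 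I do not expect any real obstacle here; the only point needing care is the collider/left-arrow analysis, which is identical to the one already carried out for \cref{prop:doshap0_appendix}.
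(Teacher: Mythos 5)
Your proof is correct and follows essentially the same route as the paper's: both reduce the claim to rule R3 via the $d$-separation $(Y \indep X \mid \textbf{S})_{\G_{\overline{\textbf{S}X}}}$, and both establish that separation by the same path analysis (non-directed paths are killed by a collider whose descendants cannot reach $\textbf{S}$ in the mutilated graph, and directed paths are excluded by the frontier hypothesis); the paper argues directly by cases where you argue by contradiction, which is only a presentational difference. Your closing observation that the frontier condition is equivalent to $X \notin An_{\G_{\overline{\textbf{S}}}}(Y)$, so that the result also follows from the non-ancestor proposition applied to the intervened model $\M_{\textbf{s}}$, is a valid shortcut and mirrors the characterization the paper later gives in \cref{prop:irreducible_coalition}.
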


\begin{proof}
We will apply R3 by proving that $(Y \indep X \mid \textbf{S})_{\G_{\overline{\textbf{S}X}}}$, in which case
$$\nu(\textbf{S} \cup \{X\}) = \expectation{Y \mid \widehat{\textbf{s}}, \widehat{x}} = \expectation{Y \mid \widehat{\textbf{s}}} = \nu(\textbf{S}).$$

Note that in $\G_{\overline{\textbf{S}X}}$, all paths from $X$ to $Y$ are front-door paths. Consider any such $p = X \rightarrow \cdots \noarrow Y$. If the path is fully directed, since $\textbf{S}$ is a frontier, $\exists Z \in \textbf{S}$ s.t. $Z$ is in the path, thereby blocking it. If it is not directed, there exists a collider $\cdots \rightarrow Z \leftarrow \cdots$, which also blocks the path: $Z \not \in An(\textbf{S})$, since $\textbf{S}$ has no ancestors other than itself in $\G_{\overline{\textbf{S}X}}$ and $Z \not \in \textbf{S}$ because $\cdots \rightarrow Z$ is in $p$. Therefore, any path $p$ between $X$ and $Y$ must be blocked by $\textbf{S}$ in $\G_{\overline{\textbf{S}X}}$, which proves R3.
\end{proof}

\begin{remark}
    For any parent $X \in Pa_Y$, no subset $\textbf{S} \subseteq \X \setminus \{X\}$ is a frontier between $X$ and $Y$.
\end{remark}

\begin{proposition}
    Given nodes $X \in \X$ and $Y$, and a frontier $\textbf{S} \in \Fr(X, Y)$,
    \begin{enumerate}
        \item $\forall \textbf{S}' \subseteq \X \setminus \{X\}, \textbf{S}' \supseteq \textbf{S}$, then $\textbf{S}' \in \Fr(X, Y)$.
        \item $\textbf{S} \cap De(X) \in \Fr(X, Y)$.
    \end{enumerate}
    \label{prop:fr_properties}
\end{proposition}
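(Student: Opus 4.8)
The plan is to derive both claims directly from \cref{def:frontier_appendix}, exploiting the fact that ``frontier'' is a purely combinatorial and \emph{monotone} notion: unlike $d$-separation, a set $\textbf{S}$ blocks a \emph{directed} path $p = (X, \dots, Y)$ precisely when some node of $\textbf{S}$ lies on $p$, with no collider/ancestor bookkeeping involved. So the whole proof reduces to checking that (i) having a blocking node is preserved when $\textbf{S}$ is enlarged, and (ii) the blocking node can always be taken inside $De(X)$.

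For part 1, fix $\textbf{S}' \subseteq \X \setminus \{X\}$ with $\textbf{S}' \supseteq \textbf{S}$. The requirement $X \notin \textbf{S}'$ is part of the hypothesis, so it only remains to verify that $\textbf{S}'$ blocks every directed path $p = (X, \dots, Y)$. Since $\textbf{S} \in \Fr(X,Y)$, there is $Z \in \textbf{S}$ with $Z \in p$; as $Z \in \textbf{S} \subseteq \textbf{S}'$, the same $Z$ witnesses that $p$ is blocked by $\textbf{S}'$. Hence $\textbf{S}' \in \Fr(X,Y)$.

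For part 2, set $\textbf{S}'' := \textbf{S} \cap De(X)$. First, $\textbf{S}'' \subseteq \textbf{S} \subseteq \X$, and because $\textbf{S} \in \Fr(X,Y)$ forces $X \notin \textbf{S}$, we also get $X \notin \textbf{S}''$, so the first frontier condition holds. Now take an arbitrary directed path $p = (X = W_0 \to W_1 \to \dots \to W_m = Y)$. Since $\textbf{S}$ is a frontier there is $Z \in \textbf{S}$ on $p$, say $Z = W_j$ with $0 < j < m$; the prefix $W_0 \to \dots \to W_j$ is a directed path from $X$ to $Z$, so $Z \in De(X)$ and therefore $Z \in \textbf{S} \cap De(X) = \textbf{S}''$, while still $Z \in p$. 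Thus $p$ is blocked by $\textbf{S}''$, and as $p$ was arbitrary, $\textbf{S}'' \in \Fr(X,Y)$.

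The only point requiring a moment's care — and the closest thing to an obstacle — is the convention that $De(X)$ includes $X$: this is harmless, since $X$ is excluded from every frontier anyway, and the argument that the chosen blocking node $Z$ lies in $De(X)$ uses only that $Z$ sits on a directed path emanating from $X$, not that it is a proper descendant. Beyond that, the statement is essentially a formal record that the frontier property is upward closed under supersets and stable under intersection with $De(X)$, both of which are used freely in the later analysis of irreducible coalitions (\cref{prop:irreducible_coalition}, \cref{prop:fr}) and in the correctness argument for FRA.
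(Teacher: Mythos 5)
Your proof is correct and follows essentially the same route as the paper's: part 1 is the observation that any node of $\textbf{S}$ witnessing the blocking of a directed path also lies in the superset $\textbf{S}'$, and part 2 rests on the fact that any blocking node on a directed path emanating from $X$ is necessarily a descendant of $X$, so restricting $\textbf{S}$ to $De(X)$ discards only superfluous nodes. Your version merely spells out the bookkeeping (the $X \notin \textbf{S}'$ check and the prefix argument for $Z \in De(X)$) that the paper leaves implicit.
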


\begin{proof}\;
\begin{enumerate}
    \item Since $\textbf{S} \in \Fr(X, Y)$, any directed path $p$ between $X$ and $Y$ is blocked by $\textbf{S}$; being $\textbf{S}'$ a superset of $\textbf{S}$, it must also block all such paths.
    \item Any non-descendant of $X$ cannot appear in a directed path from $X$ to $Y$, which means that it is superfluous in the frontier set. As such, $\textbf{S} \cap De(X) \in \Fr(X, Y)$.
\end{enumerate}\end{proof}

\begin{corollary}
\label{cor:fr_iff}
Given $X \in \X$ and $\textbf{S} \subseteq \X \setminus \{X\}$, let $\textbf{S}_{>_\G X} := \{Z \in \textbf{S} \mid Z >_\G X\}$.
\begin{equation}
\textbf{S} \in \Fr(X, Y) \Leftrightarrow \textbf{S}_{>_\G X} \in \Fr(X, Y) \Leftrightarrow \textbf{S} \cap De_\G(X) \in \Fr(X, Y).
\end{equation}
\end{corollary}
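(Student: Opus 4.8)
The plan is to reduce the whole three-way equivalence to the two parts of Proposition~\ref{prop:fr_properties} by means of a subset sandwich. First I would establish the inclusion chain
\[
\textbf{S} \cap De_\G(X) \;\subseteq\; \textbf{S}_{>_\G X} \;\subseteq\; \textbf{S} \;\subseteq\; \X \setminus \{X\}.
\]
The last two inclusions are immediate: $\textbf{S}_{>_\G X} \subseteq \textbf{S}$ by definition, and $\textbf{S} \subseteq \X \setminus \{X\}$ is the standing hypothesis of the corollary. For the first inclusion, note that $X \notin \textbf{S}$, so $\textbf{S} \cap De_\G(X) = \textbf{S} \cap (De_\G(X) \setminus \{X\})$; and any proper descendant $Z$ of $X$ satisfies $X \in An_\G(Z)$ with $Z \neq X$, which in a topological order forces $X <_\G Z$ (the index of $X$ is $\le$ that of $Z$, and the two indices are distinct). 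Hence $De_\G(X) \setminus \{X\} \subseteq \{Z \mid Z >_\G X\}$, and intersecting with $\textbf{S}$ gives $\textbf{S} \cap De_\G(X) \subseteq \textbf{S}_{>_\G X}$.

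Next I would close an implication cycle through the three sets. Proposition~\ref{prop:fr_properties}.2 gives $\textbf{S} \in \Fr(X,Y) \Rightarrow \textbf{S} \cap De_\G(X) \in \Fr(X,Y)$. Since $\textbf{S}_{>_\G X}$ is a superset of $\textbf{S} \cap De_\G(X)$ still contained in $\X \setminus \{X\}$, Proposition~\ref{prop:fr_properties}.1 yields $\textbf{S} \cap De_\G(X) \in \Fr(X,Y) \Rightarrow \textbf{S}_{>_\G X} \in \Fr(X,Y)$. Finally, $\textbf{S}$ is a superset of $\textbf{S}_{>_\G X}$, again inside $\X \setminus \{X\}$, so Proposition~\ref{prop:fr_properties}.1 again gives $\textbf{S}_{>_\G X} \in \Fr(X,Y) \Rightarrow \textbf{S} \in \Fr(X,Y)$. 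Chaining these three implications around the loop shows that all three statements are equivalent, as claimed; in particular $\textbf{S}\in\Fr(X,Y)\Leftrightarrow \textbf{S}\cap De_\G(X)\in\Fr(X,Y)$, and $\textbf{S}_{>_\G X}$ is squeezed in between.

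There is no genuine obstacle here: the content is entirely in Proposition~\ref{prop:fr_properties}, and the corollary is essentially a packaging statement. The only point demanding care is the topological-order argument that every proper descendant of $X$ occurs strictly after $X$, which is what licenses the key inclusion $\textbf{S} \cap De_\G(X) \subseteq \textbf{S}_{>_\G X}$ and lets the monotonicity of $\Fr(\cdot,Y)$ do the rest. One should also verify in passing that each superset invoked in Proposition~\ref{prop:fr_properties}.1 stays inside $\X \setminus \{X\}$, which is automatic since this holds for $\textbf{S}$ by hypothesis and is inherited by all of its subsets.
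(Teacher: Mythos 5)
Your proposal is correct and follows essentially the same route as the paper's proof: both rest on the inclusion chain $\textbf{S} \cap De_\G(X) \subseteq \textbf{S}_{>_\G X} \subseteq \textbf{S}$ together with the two parts of Proposition~\ref{prop:fr_properties}, closing a cycle of implications among the three statements. Your write-up is slightly more explicit than the paper's (spelling out why proper descendants of $X$ come strictly after $X$ in the topological order, and checking that each superset stays inside $\X \setminus \{X\}$), but the underlying argument is identical.
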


\begin{proof}
If $\textbf{S} \in \Fr(X, Y)$, $\textbf{S} \cap De(X) \in \Fr(X, Y)$, and $\textbf{S}_{>_\G X} \supseteq \textbf{S} \cap De(X)$, since any $Z \in \textbf{S} \cap De(X)$ fulfills $Z >_\G X$, which proves that $\textbf{S}_{>_\G X} \in \Fr(X, Y)$. On the other hand, if $\textbf{S}_{>_\G X} \in \Fr(X, Y)$, $\textbf{S} \in \Fr(X, Y)$, since $\textbf{S} \supseteq \textbf{S}_{>_\G X}$. The remaining iff is trivial given \cref{prop:fr_properties}.
\end{proof}

\begin{definition}
A set $\textbf{S} \subseteq \X$ is Frontier-Reducible (FR) in $\G$ if $\exists X \in \textbf{S}$ s.t. $\textbf{S} \setminus \{X\} \in \Fr(X, Y)$.
\end{definition}

In particular, if $\textbf{S}$ is FR by $X \in \textbf{S}$, $\nu(\textbf{S}) = \nu(\textbf{S} \setminus \{X\})$.

\begin{theorem}
Consider any FR $\textbf{S} \subseteq \X$, and let us define $\textbf{Z} := \{X \in \textbf{S} \mid \textbf{S} \setminus \{X\} \in \Fr(X, Y)\} = \{X \in \textbf{S} \mid \textbf{S}_{>_\G X} \in \Fr(X, Y)\}$. Then $\nu(\textbf{S}) = \nu(\textbf{S} \setminus \textbf{Z})$ and $\textbf{S} \setminus \textbf{Z}$ is not FR.
\label{prop:fr_appendix}
\end{theorem}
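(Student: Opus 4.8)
The plan is to prove two things: first, that $\nu(\textbf{S}) = \nu(\textbf{S} \setminus \textbf{Z})$, and second, that $\textbf{S} \setminus \textbf{Z}$ is not FR. I would also want to verify along the way the claimed equality of the two descriptions of $\textbf{Z}$, namely $\{X \in \textbf{S} \mid \textbf{S} \setminus \{X\} \in \Fr(X, Y)\} = \{X \in \textbf{S} \mid \textbf{S}_{>_\G X} \in \Fr(X, Y)\}$; this is immediate from \cref{cor:fr_iff}, since for $X \in \textbf{S}$ we have $(\textbf{S} \setminus \{X\})_{>_\G X} = \textbf{S}_{>_\G X}$, and \cref{cor:fr_iff} says $\textbf{S} \setminus \{X\} \in \Fr(X,Y) \Leftrightarrow (\textbf{S} \setminus \{X\})_{>_\G X} \in \Fr(X,Y)$.

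For the first part, the natural route is induction on $|\textbf{Z}|$, peeling off the elements of $\textbf{Z}$ one at a time starting from the $<_\G$-largest. Enumerate $\textbf{Z} = \{X_1, \dots, X_m\}$ with $X_1 >_\G X_2 >_\G \cdots >_\G X_m$. The key claim to establish is that when we remove $X_1$ (the topologically latest element of $\textbf{Z}$), the remaining elements $X_2, \dots, X_m$ still satisfy the frontier condition relative to the smaller coalition $\textbf{S} \setminus \{X_1\}$. This follows because for any $X_j$ with $j \ge 2$, we have $(\textbf{S} \setminus \{X_1\})_{>_\G X_j} = \textbf{S}_{>_\G X_j} \setminus \{X_1\}$; but $X_1 >_\G X_j$ and — crucially — $X_1 \in \textbf{S}_{>_\G X_j}$ iff $X_1 >_\G X_j$, so actually $(\textbf{S} \setminus \{X_1\})_{>_\G X_j} = \textbf{S}_{>_\G X_j} \setminus \{X_1\}$, which differs from $\textbf{S}_{>_\G X_j}$ only by $X_1$. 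The subtle point is whether dropping $X_1$ from a frontier of $X_j$ keeps it a frontier. Here I would invoke \cref{cor:fr_iff} again: $\textbf{S}_{>_\G X_j} \in \Fr(X_j, Y)$ iff $\textbf{S}_{>_\G X_j} \cap De_\G(X_j) \in \Fr(X_j, Y)$. If $X_1 \notin De_\G(X_j)$ then removing $X_1$ does not touch the "essential" part $\textbf{S}_{>_\G X_j} \cap De_\G(X_j)$, so the frontier property persists. If $X_1 \in De_\G(X_j)$, I would need a separate argument — this is the point that requires care, since here $X_1$ could genuinely lie on a directed path from $X_j$ to $Y$. The resolution should use that $X_1$ itself is Frontier-Reducible by its own later elements: since $\textbf{S}_{>_\G X_1} \in \Fr(X_1, Y)$, every directed path from $X_1$ to $Y$ is already blocked by $\textbf{S}_{>_\G X_1} \subseteq \textbf{S}_{>_\G X_j} \setminus \{X_1\}$, so any directed path from $X_j$ to $Y$ that was only blocked "at $X_1$" continues through $X_1$ and is then re-blocked by a node of $\textbf{S}_{>_\G X_1}$ lying strictly after $X_1$. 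Thus $\textbf{S}_{>_\G X_j} \setminus \{X_1\}$ still blocks all directed $X_j$-to-$Y$ paths. Combining both cases, $X_2, \dots, X_m$ form the set $\textbf{Z}'$ associated to $\textbf{S} \setminus \{X_1\}$, and by \cref{prop:frontier_simplifies_appendix}, $\nu(\textbf{S}) = \nu(\textbf{S} \setminus \{X_1\})$; the induction hypothesis then gives $\nu(\textbf{S} \setminus \{X_1\}) = \nu((\textbf{S} \setminus \{X_1\}) \setminus \textbf{Z}') = \nu(\textbf{S} \setminus \textbf{Z})$.

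For the second part — that $\textbf{S}' := \textbf{S} \setminus \textbf{Z}$ is not FR — I would argue by contradiction. Suppose $\textbf{S}'$ is FR, so there is some $X \in \textbf{S}'$ with $\textbf{S}' \setminus \{X\} \in \Fr(X, Y)$, equivalently $\textbf{S}'_{>_\G X} \in \Fr(X, Y)$. I want to derive that $X$ should have been in $\textbf{Z}$, i.e. that $\textbf{S}_{>_\G X} \in \Fr(X, Y)$, which would contradict $X \in \textbf{S}' = \textbf{S} \setminus \textbf{Z}$. Since $\textbf{S}'_{>_\G X} \subseteq \textbf{S}_{>_\G X}$, monotonicity of frontiers under supersets (\cref{prop:fr_properties}, item 1) immediately gives $\textbf{S}_{>_\G X} \in \Fr(X, Y)$ — wait, this requires $X \notin \textbf{S}_{>_\G X}$, which holds since $X \not>_\G X$. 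Hence $\textbf{S}_{>_\G X} \in \Fr(X,Y)$ and therefore $X \in \textbf{Z}$ by the second characterization of $\textbf{Z}$, contradicting $X \in \textbf{S} \setminus \textbf{Z}$. So $\textbf{S}'$ cannot be FR.

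The main obstacle I anticipate is the inductive step of the first part, specifically the case $X_1 \in De_\G(X_j)$: one must be careful that removing an intermediate blocking node does not re-open a directed path, and the argument hinges on the fact that $X_1$'s own later-blockers (guaranteed because $X_1 \in \textbf{Z}$) sit strictly after $X_1$ in the topological order and hence are still present in the reduced coalition. Getting the bookkeeping of "strictly after" and the nesting $\textbf{S}_{>_\G X_1} \subseteq \textbf{S}_{>_\G X_j}$ exactly right (which uses $X_1 >_\G X_j$) is where the proof needs the most care; everything else reduces to repeated application of \cref{cor:fr_iff}, \cref{prop:fr_properties}, and \cref{prop:frontier_simplifies_appendix}.
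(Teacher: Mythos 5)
Your proof is correct, and both halves reach the right conclusions, but the main induction takes a genuinely different route from the paper's. The paper orders $\textbf{Z} = \{X_{i_1}, \dots, X_{i_n}\}$ \emph{increasingly} in $<_\G$ and removes the topologically \emph{earliest} element first: at step $j$ the surviving set $\textbf{S} \setminus \{X_{i_1}, \dots, X_{i_j}\}$ still contains all of $\textbf{S}_{>_\G X_{i_j}}$ untouched (only $<_\G$-smaller elements have been deleted), so $\textbf{S}_{>_\G X_{i_j}} \in \Fr(X_{i_j}, Y)$ plus monotonicity of frontiers under supersets (\cref{prop:fr_properties}) immediately licenses each removal via \cref{prop:frontier_simplifies_appendix}, with no re-verification of any frontier property. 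You instead peel off the $<_\G$-\emph{largest} element $X_1$ first, which destroys the sets $\textbf{S}_{>_\G X_j}$ of the remaining $X_j$ and forces you into the path-level case analysis you flag as the delicate point ($X_1 \in De_\G(X_j)$ versus not). Your resolution of that case is sound — a directed $X_j$-to-$Y$ path blocked only at $X_1$ has its suffix from $X_1$ re-blocked by $\textbf{S}_{>_\G X_1} \subseteq \textbf{S}_{>_\G X_j} \setminus \{X_1\}$ — but the entire difficulty is an artifact of the removal order; the paper's choice makes it vanish. One small loose end in your version: the induction needs $\textbf{Z}' = \{X_2, \dots, X_m\}$ exactly (so that $(\textbf{S}\setminus\{X_1\})\setminus\textbf{Z}' = \textbf{S}\setminus\textbf{Z}$), and you only argue the inclusion $\{X_2,\dots,X_m\} \subseteq \textbf{Z}'$; the reverse inclusion is immediate since $(\textbf{S}\setminus\{X_1\})_{>_\G X} \subseteq \textbf{S}_{>_\G X}$ and frontiers are upward closed, but it should be stated. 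Your second half (irreducibility of $\textbf{S}\setminus\textbf{Z}$ by contradiction via monotonicity) coincides with the paper's argument.
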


\begin{proof}
Consider $\textbf{Z} = \{Z_{i_1}, \dots, Z_{i_n}\}$ in the order $<_\G$. Firstly, $\nu(\textbf{S}) = \nu(\textbf{S} \setminus \{Z_{i_1}\})$ by construction. Now, note that $\textbf{S}_{>_\G Z_{i_j}} \in \Fr(Z_{i_j}, Y)$ by construction of $\textbf{Z}$ and \cref{cor:fr_iff}, and that $\textbf{S} \setminus \textbf{Z}_{\leq_\G Z_{i_j}} = \textbf{S} \setminus \{Z_{i_1}, \dots, Z_{i_j}\} \supseteq \textbf{S}_{>_\G Z_{i_j}}$, so by Prop. \ref{prop:fr_properties}, $\textbf{S} \setminus \textbf{Z}_{\leq_\G Z_{i_j}} \in \Fr(Z_{i_j}, Y)$. Then, let us assume that $\forall 1 \leq j < n,\; \nu(\textbf{S}) = \nu(\textbf{S} \setminus \textbf{Z}_{\leq Z_{i_j}})$. Given that $\textbf{Z}_{\leq Z_{i_j}} \setminus \{Z_{i_{j+1}}\} \in \Fr(Z_{i_{j+1}}, Y)$, then $\nu(\textbf{S} \setminus \textbf{Z}_{\leq Z_{i_{j+1}}}) = \nu(\textbf{S} \setminus \textbf{Z}_{\leq Z_{i_j}}) = \nu(\textbf{S})$, which proves $\nu(\textbf{S}) = \nu(\textbf{S} \setminus \textbf{Z})$ by induction.

Additionally, $\textbf{S} \setminus \textbf{Z}$ is not FR since, if $\exists X \in \textbf{S} \setminus \textbf{Z}$ s.t. $\textbf{S} \setminus \textbf{Z} \setminus \{X\} \in \Fr(X, Y)$, then $\textbf{S} \setminus \{X\} \supseteq \textbf{S} \setminus \textbf{Z} \setminus \{X\}$ is also a frontier between $X$ and $Y$, which implies that $X \in \textbf{Z}$.
\end{proof}

For a clearer characterization of the irreducible set, consider the following proposition.

\begin{proposition}
    Given a FR $\textbf{S} \subseteq \X$ and its corresponding irreducible subset $\textbf{S'} := \textbf{S} \setminus \textbf{Z}$, with $\textbf{Z} := \{X \in \textbf{S} \mid \textbf{S}_{>_\G X} \in \Fr(X, Y)\}$, then $\textbf{S'} = \textbf{S} \cap An_{\G_{\overline{\textbf{S}}}}(Y)$.
\label{prop:irreducible_coalition}
\end{proposition}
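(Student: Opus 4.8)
The plan is to prove the set identity $\textbf{S}' = \textbf{S}\setminus\textbf{Z} = \textbf{S}\cap An_{\G_{\overline{\textbf{S}}}}(Y)$ elementwise: for each $X\in\textbf{S}$ I would show that $X\notin\textbf{Z}$ if and only if $X\in An_{\G_{\overline{\textbf{S}}}}(Y)$. The key is that both conditions can be reformulated as one and the same statement about directed paths in $\G$, namely: \emph{there exists a directed path from $X$ to $Y$ in $\G$ none of whose internal nodes lies in $\textbf{S}$}. Recall that by \cref{cor:fr_iff} the definition of $\textbf{Z}$ may equivalently be read with $\textbf{S}\setminus\{X\}$ or $\textbf{S}_{>_\G X}$, and note $Y\notin\textbf{S}$ since $\textbf{S}\subseteq\X$ and $Y=V_K$.

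First I would unwind the left-hand side. By \cref{def:frontier_appendix}, $X\in\textbf{Z}$ means $\textbf{S}_{>_\G X}\in\Fr(X,Y)$, i.e. every directed path $p=(X,\dots,Y)$ contains some node of $\textbf{S}_{>_\G X}$. The crucial observation is that on a directed path from $X$ to $Y$, every node other than $X$ is a proper descendant of $X$, hence strictly greater than $X$ in the topological order $<_\G$. Consequently a node of $\textbf{S}_{>_\G X}$ lying on $p$ must be an \emph{internal} node of $p$ (it is neither $X$ nor $Y$), and conversely every internal node of $p$ that happens to lie in $\textbf{S}$ automatically lies in $\textbf{S}_{>_\G X}$. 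Thus "$p$ is blocked by $\textbf{S}_{>_\G X}$" is equivalent to "$p$ has an internal node in $\textbf{S}$", and therefore $X\notin\textbf{Z}$ exactly when some directed $X$-to-$Y$ path in $\G$ has no internal node in $\textbf{S}$.

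Next I would unwind the right-hand side. Since $X\neq Y$, $X\in An_{\G_{\overline{\textbf{S}}}}(Y)$ means there is a directed path $X=W_0\to W_1\to\cdots\to W_m=Y$ with $m\geq 1$ in $\G_{\overline{\textbf{S}}}$. An edge $W_{i-1}\to W_i$ of $\G$ survives the mutilation precisely when $W_i\notin\textbf{S}$ (only edges pointing into $\textbf{S}$ are deleted, and $\G_{\overline{\textbf{S}}}$ introduces no new edges), so such a path exists in $\G_{\overline{\textbf{S}}}$ if and only if there is a directed path in $\G$ from $X$ to $Y$ with $W_1,\dots,W_m\notin\textbf{S}$; since $W_m=Y\notin\textbf{S}$ anyway, this is exactly a directed $X$-to-$Y$ path in $\G$ with no internal node in $\textbf{S}$ — the same condition obtained above. (The source $X$ itself may well lie in $\textbf{S}$; this is harmless, because only edges into $\textbf{S}$ are removed, so $X$ can still have descendants in $\G_{\overline{\textbf{S}}}$.) Combining the two reformulations gives $X\notin\textbf{Z}\iff X\in An_{\G_{\overline{\textbf{S}}}}(Y)$ for every $X\in\textbf{S}$, hence $\textbf{S}'=\textbf{S}\setminus\textbf{Z}=\textbf{S}\cap An_{\G_{\overline{\textbf{S}}}}(Y)$, which is the claim (and $\textbf{S}'$ is the irreducible subset by \cref{prop:fr_appendix}).

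I expect the only real obstacle to be bookkeeping rather than ideas: one must be scrupulous about distinguishing endpoints from internal nodes of a path and about the direction of the deleted edges in $\G_{\overline{\textbf{S}}}$, and one must pinpoint exactly where the topological-order observation is used — it is precisely what makes "blocked by $\textbf{S}_{>_\G X}$" collapse to "blocked by $\textbf{S}$ at an internal node", so that the $>_\G X$ restriction becomes invisible on directed paths. No machinery beyond \cref{def:frontier_appendix} and \cref{cor:fr_iff} is needed.
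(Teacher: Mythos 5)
Your proposal is correct and follows essentially the same route as the paper's proof: both reduce the claim to the elementwise equivalence, for $X\in\textbf{S}$, between ``$\textbf{S}_{>_\G X}$ (equivalently $\textbf{S}\setminus\{X\}$, via \cref{cor:fr_iff}) is not a frontier'' and ``$X\in An_{\G_{\overline{\textbf{S}}}}(Y)$'', witnessed by a directed $X$-to-$Y$ path with no internal node in $\textbf{S}$. You merely spell out the edge-survival bookkeeping in $\G_{\overline{\textbf{S}}}$ that the paper leaves implicit, which is a faithful elaboration rather than a different argument.
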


\begin{proof}
We will show that $\textbf{S} \setminus \textbf{Z} = \textbf{S} \cap An_{\G_{\overline{\textbf{S}}}}(Y)$, or equivalently, that $\forall X \in \textbf{S},\; \textbf{S}_{>_\G X} \not\in \Fr(X, Y)$ iff $X \in An_{\G_{\overline{\textbf{S}}}}(Y)$. Consider $X \in \textbf{S}$. If $\textbf{S}_{>_\G X} \not\in \Fr(X, Y)$, then $\textbf{S} \setminus \{X\} \not\in \Fr(X, Y)$ by \cref{cor:fr_iff}, so there is a directed path from $X$ to $Y$ not blocked by $\textbf{S} \setminus \{X\}$. Consequently, $X$ is an ancestor of $Y$ in the graph where we remove any incoming edges to $\textbf{S}$; in other words, $X \in An_{\G_{\overline{\textbf{S}}}}(Y)$. Conversely, if $X \in An_{\G_{\overline{\textbf{S}}}}(Y)$, there is a directed path from $X$ to $Y$ not blocked by $\textbf{S} \setminus \{X\}$, therefore $\textbf{S} \setminus \{X\} \not\in \Fr(X, Y)$ and $\textbf{S}_{>_\G X} \not\in \Fr(X, Y)$, again by \cref{cor:fr_iff}.
\end{proof}

Finally, let us prove the property that explains why the identification of irreducible subsets results in less $\nu$-evaluations than Luther \etal's \cite{luther2023sage} approach:

\begin{proposition}
    Under assumption \ref{assume:projection}, $\forall\textbf{S}\subseteq\mathcal{X}, \exists Z\in\mathcal{X}$ s.t. at least one of the following is true:

    \begin{enumerate}
        \item $Z\not\in\textbf{S}$ and $\textbf{S} \not \in \Fr(Z, Y)$.
        \item $Z\in\textbf{S}$ and $\textbf{S} \setminus \{Z\} \not\in \Fr(Z,Y)$.
    \end{enumerate}
    \label{prop:appendix_luther}
\end{proposition}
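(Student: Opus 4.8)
The plan is to produce, once and for all, a universal witness $Z$ that works for every coalition $\textbf{S}\subseteq\X$: any parent of $Y$ will do. First I would record two easy consequences of \cref{assume:projection}. Since $\X = An(Y)\setminus\{Y\}$, every parent of $Y$ lies in $\X$, \ie $Pa_Y\subseteq\X$; and, excluding the degenerate case $\X=\varnothing$ (no players, nothing to prove), some node of $\X$ reaches $Y$ along a directed path whose final edge enters $Y$, so $Pa_Y\neq\varnothing$.

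Next I would fix an arbitrary $Z\in Pa_Y$ and invoke \cref{remark:parents}: because $Z$ is a parent of $Y$, no subset of $\X\setminus\{Z\}$ is a frontier between $Z$ and $Y$ --- the edge $Z\to Y$ is itself a directed path from $Z$ to $Y$ with no internal vertices, so it cannot be blocked by any set that avoids $Z$ and $Y$. The proof then closes with a one-line case split on whether $Z\in\textbf{S}$. If $Z\notin\textbf{S}$, then $\textbf{S}\subseteq\X\setminus\{Z\}$, hence $\textbf{S}\notin\Fr(Z,Y)$, and the first alternative holds. If $Z\in\textbf{S}$, then $\textbf{S}\setminus\{Z\}\subseteq\X\setminus\{Z\}$, hence $\textbf{S}\setminus\{Z\}\notin\Fr(Z,Y)$, and the second alternative holds. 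In either case the required $Z$ exists, which is exactly the claim.

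I do not expect a genuine obstacle here: once \cref{remark:parents} is in hand the proposition is immediate, and the only place where \cref{assume:projection} is actually used is in guaranteeing $Pa_Y\neq\varnothing$ and $Pa_Y\subseteq\X$. Without the projection, $\X$ could contain non-ancestors of $Y$, for which the parent-edge argument says nothing; but such nodes are precisely what the assumption rules out, so a parent of $Y$ is always available to serve as the witness.
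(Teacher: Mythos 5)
Your proof is correct, but it takes a genuinely different and more economical route than the paper's. The paper proves the proposition by a four-way case analysis on $\textbf{S}$ (the cases $\textbf{S}=\X$, $\textbf{S}=\varnothing$, $\textbf{S}$ irreducible, and $\textbf{S}$ reducible), invoking the irreducible-subset machinery of \cref{prop:fr_appendix} and \cref{prop:irreducible_coalition} to locate a witness $Z$ inside $\textbf{S}$ (or its irreducible core) whenever possible, and falling back on ancestry only for $\textbf{S}=\varnothing$. You instead exhibit a single universal witness: any $Z\in Pa_Y$, which exists and lies in $\X$ under \cref{assume:projection}, and for which \cref{remark:parents} rules out every frontier at once; the two alternatives in the statement then correspond exactly to the dichotomy $Z\in\textbf{S}$ versus $Z\notin\textbf{S}$. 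Your argument is shorter, uses only \cref{remark:parents} rather than the full reducibility theory, and suffices for the way the proposition is used in the main text (every coalition reappears in some non-skippable Luther pair). What the paper's longer proof buys is slightly finer information --- for nonempty $\textbf{S}$ it produces a witness drawn from $\textbf{S}$ itself, namely any element of the irreducible subset --- but the proposition as stated does not require this, so nothing is lost in your version. Both proofs share the same implicit non-degeneracy assumption $\X\neq\varnothing$, which you are right to flag explicitly.
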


\begin{proof}
We will reason by cases:

\begin{itemize}
    \item If $\textbf{S} = \X$, given that $\X = An_\G(Y) \setminus \{Y\}$, there must be a parent $Z$ of $Y$ in $\textbf{S}$. A parent $Z \in \textbf{S}$ of $Y$ can never have a frontier with $Y$, hence $\textbf{S} \setminus \{Z\} \not\in \Fr(Z, Y)$.
    \item If $\textbf{S} = \varnothing$, $\forall Z \in \X, \varnothing \not\in \Fr(Z, Y)$ (otherwise, $Z$ would not be an ancestor of $Y$, contradicting the assumption).
    \item Let $\textbf{S} \neq \varnothing$, $\textbf{S} \neq \mathcal{X}$:
    \begin{itemize}
        \item If $\textbf{S}$ is irreducible, $\forall Z \in \textbf{S}$, $\textbf{S} \setminus \{Z\} \not\in Fr(Z, Y)$; otherwise, $\textbf{S}$ would not be irreducible.
        \item If $\textbf{S}$ is not irreducible, let $\textbf{S'} \subsetneq \textbf{S}$ be its irreducible set (as determined by \cref{prop:fr_appendix} and proposition \ref{prop:irreducible_coalition}). We know that $\textbf{S'} \neq \varnothing$; otherwise, $\forall Z \in \textbf{S}, \varnothing \in Fr(Z, Y)$, meaning that $Z$ is not an ancestor of $Y$, contradicting the assumption. Now, $\forall Z \in \textbf{S'}$, $\textbf{S} \setminus \{Z\} \not\in \Fr(Z, Y)$; otherwise, by \cref{prop:fr_appendix}, $Z$ would not belong to the irreducible set of $\textbf{S}$, which is $\textbf{S'}$.
    \end{itemize}
\end{itemize}

Under all possible cases, the result is proven.
\end{proof}

\subsubsection{Algorithm soundness}
\label{subsec:algo}
\Cref{prop:fr_appendix} identifies which elements can be removed from the computation of $\nu(\textbf{S})$ for any set $\textbf{S}$. As a result, if we compute and cache $\nu(\textbf{S} \setminus \textbf{Z})$, any other set with the same Frontier-Irreducible set can skip the $\nu$ computation and return the cached value instead. Additionally, we do not need to test identifiability for FR sets, only for the corresponding Frontier-Irreducible sets. We now need to define an efficient method to compute $\textbf{S} \setminus \textbf{Z}$, the Frontier-Reducibility Algorithm (FRA), described in \cref{alg:fr1_fra}; let us first demonstrate its soundness.

\begin{algorithm}[th]
\caption{Frontier-Reducibility Algorithm (FRA) -- set version (with comments).}\label{alg:fr1_fra}
\begin{algorithmic}[1]
\Require $\textbf{S} \subseteq \X$, coalition.
\Require $\texttt{Fr}$, a map: tuple[int] $\rightarrow$ bool.
\Require $\G$, causal graph.

\Procedure{FRA}{$\textbf{S}, \texttt{Fr}; \G$}

\State $\textsc{\texttt{Sort}}(\textbf{S}, <_\G)$ \Comment{Sort $\textbf{S}$ topologically}
\State $\textbf{P} \gets \varnothing$ \Comment{Posterior nodes, $\textbf{S}_{>_\G X_{i_k}}$}
\State $\textbf{Z} \gets \varnothing$ \Comment{Superfluous variables}
\State $k \gets |\textbf{S}|$ \Comment{Iterator index, moving backwards}
\While{$k > 0$}
\State $X \gets \textbf{S}[k]$  \Comment{$X = X_{i_{k}}$}
\If{$X \not\in Pa_\G(Y)$}  \Comment{Parents have no frontiers}
    \State $\textbf{P'} \gets \textbf{P} \cap De_\G(X)$ \Comment{Auxiliary variable for the loop}
    \State $\textbf{T} \gets (\textbf{P'} \setminus \textbf{Z}) \cup \{X\}$ \Comment{Cache key for \texttt{Fr}: (node, frontier)}
    \If{$\textbf{T} \not\in \texttt{Fr}$} \Comment{Non-cached key, determine if $\textbf{P'} \in \Fr(X, Y)$}
        % \State $v \gets \texttt{\textsc{Frontier}}(\textbf{P'}, X, Y, \G)$
        \State $\textbf{C} \gets \{X\}$  \Comment{Iterate from X to its descendants}
        \While{$\textbf{C} \neq \varnothing$ and $Y \not\in \textbf{C}$} \Comment{While more nodes before $Y$}
            \State $\textbf{P'} \gets \textbf{P'} \cup \textbf{C}$ \Comment{Add currently explored nodes to $\textbf{P'}$}
            \State $\textbf{C} \gets \bigcup_{C \in \textbf{C}} Ch_\G(C) \setminus \textbf{P'}$ \Comment{Filter frontier or previously-explored nodes}
        \EndWhile
        \State $\texttt{Fr}[\textbf{T}] \gets C = \varnothing$  \Comment{$\textbf{P} \in \Fr(X, Y) \Leftrightarrow$ we have not reached $Y$}
    \EndIf
    \If{\texttt{Fr}[\textbf{T}]}
        \State $\textbf{Z} \gets \textbf{Z} \cup \{X\}$  \Comment{Add to superflous nodes}
    \EndIf
\EndIf
\State $\textbf{P} \gets \textbf{P} \cup \{X\}$  \Comment{Update $\textbf{S}_{>_\G X}$}
\State $k \gets k - 1$  \Comment{Continue the global iteration}
\EndWhile
\State \Return $\textbf{S} \setminus \textbf{Z}$  \Comment{Remove superfluous nodes}

\EndProcedure
\end{algorithmic}
\end{algorithm}

Given $\textbf{S} = (X_{i_1}, \dots, X_{i_n})$ in $<_\G$ order, at step $k = n..1$, $X := X_{i_k}$ and $\textbf{P} := \{X_{i_n}, \dots, X_{i_{k+1}}\} = \textbf{S}_{>_\G X_{i_k}}$. At this stage, we test if $\textbf{P} \in \Fr(X, Y)$, or equivalently, if $(\textbf{P} \cap De_\G(X)) \setminus \textbf{Z} \in \Fr(X, Y)$, in which case we will include it in $\textbf{Z}$. At the end of the process, $\textbf{Z} = \{X \in \textbf{S} \mid \textbf{S}_{>_\G X} \in \Fr(X, Y)\}$, which, by \cref{prop:fr_appendix}, means that $\nu(\textbf{S}) = \nu(\textbf{S} \setminus \textbf{Z})$, and $\textbf{S} \setminus \textbf{Z}$ is not FR.

We include some optimizations to this algorithm. Firstly, we precompute $Pa_\G(Y)$, $(De_\G(X))_{X \in \X}$ and $(Ch_\G(X))_{X \in \X}$ so that we do not need to traverse the graph every time they are needed. Secondly, we employ the $\texttt{Fr}$ cache, containing whether a certain coalition (a set of integers) is Frontier-Reducible by its first (sorted) element, which is populated as FRA processes more sets $\textbf{S}$. On the other hand, when storing the results for $\textbf{P} \in \Fr(X, Y)$ in the $Fr$ cache, we store instead $\textbf{T} := ((\textbf{P} \cap De_\G(X)) \setminus \textbf{Z}) \cup \{X\}$, which is equivalent; this is so that we can better employ the $\texttt{Fr}$ cache, collapsing different $P \cup \{X\}$ sets into reduced keys. However, when testing the frontier in lines 12--17, it will be faster with the larger number of nodes in $\textbf{P'} := \textbf{P} \cap De_G(X)$ (including previously removed nodes in $\textbf{Z}$), since they could cut paths earlier. As a result, for any given set $\textbf{S} \subseteq \X$, \cref{alg:fr1_fra} requires $|\textbf{S}|$ global iterations (one per element of $\textbf{S}$), some of them skipped because $X \in Pa_\G(Y)$, some already cached in $\texttt{Fr}$. Finally, this cache can be reused between explanations of do-SHAP for the same graph; only the $\nu$ cache must be reset every time. This speeds up further explanations with virtually zero cost from the FRA.

The next step is how to determine if the set $\textbf{P'} \subseteq \X \setminus \{X\}$ is a frontier between $X$ and $Y$. Naively, we could check if all directed paths between $X$ and $Y$ are blocked by (intersect with) $\textbf{P'}$; we could precompute all paths and store them for faster access, but the number of paths grows exponentially (in the worst case scenario, \ie a complete graph, there are $2^K-1$ directed paths), which would in turn require an exponential number of iterations per frontier check. Instead, we devise a more efficient method, described in lines 12--17.

We now prove the validity of this procedure. Let us define $\textbf{C}_0 := \{X\}$. $\textbf{C}_0$ will never be empty nor contain $Y$, so we always enter the loop. At each while-step $l>0$, $\textbf{P'}_l := \bigcup_{l'<l} \textbf{C}_{l'} \cup \textbf{P'}$ and $\textbf{C}_l := \bigcup_{C \in \textbf{C}_{l-1}} Ch_\G(C) \setminus \textbf{P'}_l$. All directed paths from $X$ to $Y$ are sequences of parent-child pairs, just as any node $C \in \textbf{C}_l$ is a child of a certain node $C' \in \textbf{C}_{l-1}$. Additionally, since every node in $\X$ is an ancestor of $Y$, by exploring these parent-child sequences we will necessarily result in a directed path from $X$ to $Y$. Therefore, every directed path is covered by a sequence of nodes $C_l \in \textbf{C}_l$ unless they are discarded by $\textbf{P'}_l$, in which case either $\textbf{P'}$ blocked the node in the path, or it was a node already visited in a different path before, which would continue with a subpath $C \rightarrow \cdots \rightarrow Y$ that is currently being explored or has already been discarded.

Note that since $\textbf{P'}_l$ removes any already-visited nodes from $\textbf{C}_l$, and we always move one level deeper in the graph, $\textbf{C}_l$'s nodes are all necessarily at depth $l$ from $X$. Given that the graph $\G$ is finite and acyclic, $\textbf{C}$ will eventually be empty or contain $Y$ (since it is the last node in any path), which guarantees that the loop ends. Let $\textbf{C}_n$ denote the last step. Note that if $\textbf{C}_n \neq \varnothing$, then $Y \in \textbf{C}_n$, which means that there was a sequence of nodes, each a child of the previous one, that were never filtered by $\textbf{P'}$; in other words, there exists a directed path from $X$ to $Y$ that is not blocked by $\textbf{P'}$. Therefore $\textbf{P'} \not\in \Fr(X, Y)$. On the other hand, if $\textbf{C}_n = \varnothing$, then every sequence of nodes (every path) was eventually blocked by $\textbf{P'}$. Therefore, $\textbf{P'} \in \Fr(X, Y)$.

In terms of execution time, since every step results in nodes one depth-level deeper, the number of iterations of this procedure cannot be higher than the maximum depth of the graph, which, in the worst case scenario (\eg a chain graph) is $K-d$, $d$ being the depth of $X$, making it much more efficient than the naive strategy.

\subsubsection{Integer formulation}

\begin{algorithm}[t]
\caption{Frontier-Reducibility Algorithm (FRA) -- integer version.}\label{alg:fr2_appendix}
\begin{algorithmic}[1]
\Require $s := \phi(\textbf{S}),\; \textbf{S} \subseteq \X$.
\Require $\texttt{Fr}$, a map int $\rightarrow$ bool.
\Require $\texttt{PaY} := \phi(Pa_\G(Y))$.
\Require $\texttt{De}[2^k] := \phi(De_\G(V_k)), \forall V_k \in \X$.
\Require $\texttt{Ch}[2^k] := \phi(Ch_\G(V_k)), \forall V_k \in \X$.

\hspace{1em}

\Procedure{FRA}{$s$, $\texttt{Fr}$; $\texttt{PaY}$, $\texttt{De}$, $\texttt{Ch}$}
\State $p \gets 0$  \Comment{Posterior nodes, $\textbf{S}_{>_\G X_{i_k}}$}
\State $z \gets 0$  \Comment{Superfluous variables}
\While{$s > 0$}  \Comment{Iterating through $\textbf{S}$ topologically-backwards}
    \State $x \gets 2^{\lfloor \log_2 s \rfloor}$  \Comment{$X = X_{i_{k}}$}
    \If{$x \And \texttt{PaY} = 0$}  \Comment{Parents have no frontiers}
        \State $p' \gets p \ \&\  \texttt{De}[x]$  \Comment{Auxiliary variable for the loop}
        \State $t \gets (p' \ \& \ \neg z) + x$  \Comment{Cache key for \texttt{Fr}: (node, frontier)}
        \If{$t \not\in \texttt{Fr}$}  \Comment{Non-cached key, determine if $\textbf{P'} \in \Fr(X, Y)$}
            \State $c \gets x$  \Comment{Iterate from X to its descendants}
            \While{$c \neq 0$ and $y\ \&\ c = 0$}  \Comment{While more nodes before $Y$}
                \State $p' \gets p' \mid c$  \Comment{Add currently explored nodes to $\textbf{P'}$}
                \State $c' \gets c$  \Comment{$\textbf{C} \gets \bigcup_{C \in \textbf{C}} Ch_\G(C)$ by iterating}
                \While{$c' > 0$}
                    \State $x' \gets 2^{\lfloor \log_2{c'} \rfloor}$  \Comment{Get an element $X' \in \textbf{C'}$}
                    \State $c \gets c \mid \texttt{Ch}[x']$  \Comment{Add $Ch_\G(X')$ to $\textbf{C}$}
                    \State $c' \gets c' - x'$  \Comment{Remove $X'$ from $\textbf{C'}$ to continue the iteration}
                \EndWhile
            \State $c \gets c\ \&\ \neg p'$  \Comment{Filter frontier or previously-explored nodes}
            \EndWhile
            \State $\texttt{Fr}[t] \gets c = 0$  \Comment{$\textbf{P'} \in \Fr(X, Y) \Leftrightarrow$ we have not reached $Y$}
        \EndIf
        \If{\texttt{Fr}[t]}
            \State $z \gets z + x$  \Comment{Add to superflous nodes}
        \EndIf
    \EndIf
    \State $p \gets p + x$  \Comment{Update $\textbf{S}_{>_\G X}$}
    \State $s \gets s - x$  \Comment{Remove $X$ from $\textbf{S}$ to continue the global iteration}
\EndWhile
\State \Return $p - z$  \Comment{Remove superfluous nodes}
\EndProcedure

\end{algorithmic}
\end{algorithm}

We can further optimize FRA by transforming set operations into integer and binary operations, resulting in \cref{alg:fr2_appendix}. Let us demonstrate that both algorithms are equivalent. Given $\X = (V_0, \dots, V_{K-1}), K := |\X|$, there is a bijection $\phi: \powerset{\X} \rightarrow \{0, \dots, 2^K - 1\}$ such that $\phi(\textbf{S}) = \sum_{V_k \in \textbf{S}} 2^k$. Note that $\phi(\textbf{S})$ is a $K$-length binary array with $1$s in each position $k$ (starting from the end) such that $V_k \in \textbf{S}$. Consequently, let us define, for any $s \in \{1, \cdots, 2^K - 1\}$, $\psi(s) := \lfloor \log_2{s}\rfloor$\footnote{
   In practice, for larger values of $K$ (\eg $K=100$), we must use bit operations to detect the largest 1-bit in $s$, instead of using the logarithm; these are faster and not subject to numerical error. However, we denote this operation as $\lfloor \log_2{s}\rfloor$ throughout the text and the algorithm for clarity and simplicity.
}; then $\psi(s) = \max{} \{k \mid V_k \in \phi^{-1}(s)\}$; if we subtract $2^{\psi(s)}$ from $s$, we can apply $\psi$ again to retrieve the second-largest element, and so on until $s = 0$ ($\textbf{S} = \varnothing$). The sequence of elements $\psi(s)$ returns the original set $\textbf{S} = \phi^{-1}(s)$.

Thanks to this bijection, we can "$\phi$-encode" any node or coalition as a unique integer, and we can perform all our operations directly on integers with arithmetic and binary operations, which are less expensive, timing- and memory-wise, than with operations over sequences of integers. Note that, $\forall \textbf{S}, \textbf{S'} \subseteq \X$:
\begin{enumerate}
    \item $\phi(\textbf{S} \cap \textbf{S'}) = \phi(\textbf{S})\ \&\ \phi(\textbf{S'})$, with $\&$ the bitwise AND operator.
    \item $\phi(\textbf{S} \cup \textbf{S'}) = \phi(\textbf{S}) \ |\  \phi(\textbf{S'})$, with $\ |\ $ the bitwise OR operator.
    \item $\phi(\textbf{S} \setminus \textbf{S'}) = \phi(\textbf{S})\ \&\ \neg\phi(\textbf{\textbf{S'}})$, with $\ \neg\ $ the bitwise NOT operator.
    \item $\textbf{S} \cap \textbf{S'} = \varnothing \Rightarrow \phi(\textbf{S} \cup \textbf{S'}) = \phi(\textbf{S}) + \phi(\textbf{S'})$. 
    \item $\textbf{S} \supseteq \textbf{S'} \Rightarrow \phi(\textbf{S} \setminus \textbf{S'}) = \phi(\textbf{S}) - \phi(\textbf{S'})$.
\end{enumerate}

Let us compare between the set and integer versions of the algorithm. Firstly, we precompute and $\phi$-encode $\texttt{PaY}$, $\texttt{De}$ and $\texttt{Ch}$, since they will be used repeatedly throughout the algorithm. Note that we do not need to sort $\textbf{S}$ beforehand, instead passing it in its $s := \phi(\textbf{S})$ representation.\footnote{
    It is more efficient to pass $s$ directly to the procedure, since we can pre-encode all indices $\{0, \dots, K-1\}$ before generating permutations of them; then, we just need to pass the sum of the chosen coalition.
} We can obtain the elements $x$ in descending order, already encoded, by computing $x := 2^{\lfloor \log_2{s}\rfloor}$ (line 5) and subtracting it from $s$ (line 28). We can check if an encoded $x$ is a parent of $Y$ with the $\&$ operator (line 6). We can restrict $\textbf{P}$, encoded by an integer $p$, to $\textbf{P'} := \textbf{P} \cap De_\G(X)$, encoded by an integer $p'$, by using the $\&$ operator on the precomputed encoded set $\texttt{De}[x] := \phi(De_\G(X))$ (line 7). We can set the cache-key $\textbf{T} := (\textbf{P'} \setminus \textbf{Z}) \cup \{X\}$ by its code $(p' \ \& \ \neg z) + x$ (line 8). Finally, in order to determine if $\textbf{P'}$ is a frontier between $X$ and $Y$, we iterate over the elements in $\textbf{C}_k$ by employing the same strategy as before (lines 10--21).

All of these changes result in an equivalent algorithm, more time-efficient (as demonstrated in \cref{subsec:appendix_experiment_fra}) and memory-efficient (since we operate and cache integers rather than tuples of integers).

\subsection{do-Shapley value for the noise}
\label{appendix:proof_noise}

\begin{theorem} \textbf{do-Shapley Value for the Noise}.\\
Given a target r.v. $Y \in \V$, consider the projected SCM $\M [An(Y)]$, with $\X := An(Y) \setminus \{Y\}$ and realizations $(\textbf{x}, y) \sim \P(\X, Y)$. Let $(\phi_X := \phi_{\nu_\textbf{x}}(X))_{X \in \X}$ be the do-Shapley values associated with $K$ players $\X$.

Let us assume that there is no latent confounder connected to $Y$, and that $f_Y \in \F$ follows an additive noise model, \ie $Y = f_Y(Pa_Y, E_Y) = f(Pa_Y) + E_Y$ for an unknown function $f$. Let $\phi'$ be the do-Shapley values \wrt players $\X' := \X \cup \{E_Y\}$; then, for any $X \in \X, \phi'_X = \phi_X$ and $\phi'_{E_Y} = y - \expectation{Y \mid pa_Y}$. Furthermore, $\sum_{X \in \X} \phi'_X + \phi'_{E_Y} = y - \expectation{Y}$.
\end{theorem}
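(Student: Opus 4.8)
The plan is to reduce the extended do-Shapley game on $\X' := \X \cup \{E_Y\}$ to the original game on $\X$ plus a trivial ``noise-only'' game, and then invoke \emph{linearity} of the Shapley value (appendix~\ref{appendix:shap_axioms}). Let $\nu'$ denote the value function of the extended game, $\nu'(\textbf{S}) = \expectation{Y \mid do(\textbf{S}=\textbf{s})}$ for $\textbf{S}\subseteq\X'$, where the value assigned to $E_Y$ is the noise realization of the observed sample, $\varepsilon_Y := y - f(pa_Y)$. Since $E_Y$ is an exogenous root whose only child in $\G$ is $Y$, and by hypothesis $Y$ is connected to no latent confounder, a $do(E_Y=\varepsilon_Y)$ intervention changes $Y$ only, while interventions on $\X$ leave $E_Y$ distributed as its prior; using $Y = f(Pa_Y)+E_Y$ and linearity of expectation, writing $\textbf{T}:=\textbf{S}\cap\X$,
\[
\nu'(\textbf{S}) = \expectation{f(Pa_Y)\mid do(\textbf{T}=\textbf{t})} + \begin{cases}\varepsilon_Y & \text{if } E_Y\in\textbf{S},\\ \expectation{E_Y} & \text{if } E_Y\notin\textbf{S}.\end{cases}
\]
In particular $\nu_\textbf{x}(\textbf{T}) = \expectation{f(Pa_Y)\mid do(\textbf{T}=\textbf{t})} + \expectation{E_Y}$ for every $\textbf{T}\subseteq\X$.

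Next I would write $\nu' = \nu'_1 + \nu'_2$, where $\nu'_1(\textbf{S}) := \expectation{f(Pa_Y)\mid do(\textbf{S}\cap\X)} + \expectation{E_Y}$ and $\nu'_2(\textbf{S}) := \varepsilon_Y - \expectation{E_Y}$ if $E_Y\in\textbf{S}$ and $0$ otherwise, so that $\phi'_P = \phi_{\nu'_1}(P) + \phi_{\nu'_2}(P)$ for every player $P$ by linearity. In $\nu'_1$ the player $E_Y$ is a \emph{null player} ($\nu'_1(\textbf{S}\cup\{E_Y\}) = \nu'_1(\textbf{S})$ for all $\textbf{S}\subseteq\X$) and $\nu'_1$ restricted to $\powerset{\X}$ equals $\nu_\textbf{x}$; since the presence of $E_Y$ in a permutation prefix never alters the marginal contribution of an $X\in\X$, and each $\pi'\in\Pi(\X)$ arises as the $\X$-restriction of exactly $K+1$ permutations of $\X'$, the permutation formula \cref{eq:shap_permutations} gives $\phi_{\nu'_1}(X) = \phi_{\nu_\textbf{x}}(X)$ for all $X\in\X$, while $\phi_{\nu'_1}(E_Y)=0$ by missingness. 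In $\nu'_2$ every $X\in\X$ is a null player, so $\phi_{\nu'_2}(X)=0$, and by efficiency $\phi_{\nu'_2}(E_Y) = \nu'_2(\X')-\nu'_2(\varnothing) = \varepsilon_Y - \expectation{E_Y}$. Adding the two games: $\phi'_X = \phi_{\nu_\textbf{x}}(X) = \phi_X$ for $X\in\X$ (hence the do-SV of $X$ can be computed \wrt $\X$ alone), and $\phi'_{E_Y} = \varepsilon_Y - \expectation{E_Y}$.

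To finish, rewrite $\phi'_{E_Y}$ and prove the efficiency identity. Because $E_Y$ is an exogenous root and $Y$ is unconfounded, $E_Y\indep Pa_Y$, hence $\expectation{Y\mid pa_Y} = f(pa_Y) + \expectation{E_Y}$; therefore $\phi'_{E_Y} = \varepsilon_Y - \expectation{E_Y} = (y - f(pa_Y)) - \expectation{E_Y} = y - \expectation{Y\mid pa_Y}$. For the last claim, efficiency of the extended game gives $\sum_{X\in\X}\phi'_X + \phi'_{E_Y} = \nu'(\X') - \nu'(\varnothing)$; intervening on all of $\X$ (which contains $Pa_Y$) together with $do(E_Y=\varepsilon_Y)$ forces $Y = f(pa_Y)+\varepsilon_Y = y$ deterministically, so $\nu'(\X') = y$, while $\nu'(\varnothing) = \expectation{Y}$, yielding $\sum_{X\in\X}\phi'_X + \phi'_{E_Y} = y - \expectation{Y}$.

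The main obstacle is the first step, \ie justifying the additive decomposition of $\nu'$ rigorously. The delicate points are (i) that a $do(E_Y=\varepsilon_Y)$ intervention propagates only to $Y$ and does not perturb the distribution of $\X$ --- this is exactly where ``$E_Y$'s sole child is $Y$'' and ``$Y$ has no latent confounder'' enter --- and (ii) that the $E_Y$-term separates out in expectation, which needs both that $do(\textbf{T})$ leaves $E_Y$'s prior unchanged and, for the rewriting of $\phi'_{E_Y}$, that $E_Y$ is mean-independent of $Pa_Y$; the no-confounder-on-$Y$ hypothesis is indispensable for the latter. The remaining ingredient --- that appending a null player leaves the other players' Shapley values unchanged --- is the routine counting identity relating $\Pi(\X')$ to $\Pi(\X)$ sketched in the second paragraph.
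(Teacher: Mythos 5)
Your proposal is correct, and the causal core of it --- the decomposition $\nu'(\textbf{S}) = \expectation{f(Pa_Y)\mid do(\textbf{T}=\textbf{t})} + \varepsilon_Y$ or $+\,\expectation{E_Y}$ depending on membership of $E_Y$ --- is exactly the paper's Eqs.~(\ref{eq:ey_in_s})--(\ref{eq:ey_not_in_s}); the paper fills in the step you flag as delicate by noting that any path from $E_Y$ to $\textbf{S}^c$ must traverse $Y$, which is a childless (hence unconditioned) collider, so R3 applies, and the no-confounder hypothesis is what guarantees $Y$ has no other latent parent that could open a path. Where you genuinely diverge is the Shapley-algebra half: the paper splits the $(K{+}1)$-player sum into coalitions containing $E_Y$ and those not, and grinds through the binomial identity ${K \choose s+1}^{-1} + {K \choose s}^{-1} = \frac{K+1}{K}{K-1 \choose s}^{-1}$ to recover the $K$-player formula, whereas you write $\nu' = \nu'_1 + \nu'_2$ and invoke linearity, missingness (null players), and efficiency. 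Your route is cleaner and makes transparent \emph{why} the result holds (adding a null player is invisible to everyone else; the noise carries a constant marginal contribution), at the cost of relying on the standard but unproved-in-the-paper fact that appending a null player preserves the remaining players' values --- your $(K{+}1)$-fold counting argument over $\Pi(\X')$ is the right justification and is essentially the computation the paper does by hand with binomial coefficients. Your proof of the final identity via $\nu'(\X') = y$ and $\nu'(\varnothing) = \expectation{Y}$ is also more direct than the paper's, which instead shows $\expectation{Y \mid \widehat{\textbf{x}}} = \expectation{Y \mid pa_Y}$ by R3 and R2 and then adds the two efficiency gaps; both are valid, and your version quietly uses the same facts (all of $Pa_Y$ intervened, $E_Y$ fixed, no confounder on $Y$) to conclude that $Y$ is forced to equal $y$.
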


\begin{proof}
Let us define some notation for convenience:
\begin{itemize}
    \item $\forall \textbf{S} \subseteq \X$, let $\textbf{S}^c := \X \setminus \textbf{S}$.
    \item Note that $Pa_Y \subseteq \textbf{S} \cup \textbf{S}^c$; let us denote the selected values $pa_Y$ as the output of a function $Pa_Y(\textbf{s}, \textbf{s}^c)$ for ease of exposition.
    \item Let $\nu'(\textbf{S}) := \expectation[\textbf{S}^c \mid \widehat{\textbf{s}}]{f(Pa_Y(\textbf{s}, \textbf{S}^c))}$ for convenience of notation.
\end{itemize}

We want to compute do-Shapley values $\phi'$ for the $(K+1)$-game (including $E_Y$) with realizations $(\varepsilon_Y, \textbf{x}, y) \sim \P(E_Y, \X, Y)$ (with $\varepsilon_Y$ latent, unknown) based on the values $\phi$ for the $K$-game (only including $\X$) with the same realizations $(\textbf{x}, y) \sim \P(\X, Y)$. Let us first determine the value of the following two quantities for any $\textbf{S} \subseteq \X$ ($E_Y \not\in \textbf{S}$):
\begin{align}
\nu(\textbf{S} \cup \{E_Y\})
    &=\expectation{Y \mid \widehat{\textbf{s}}, \widehat{\varepsilon_Y}} = \expectation[\textbf{S}^c \mid \widehat{\textbf{s}}, \widehat{\varepsilon_Y}]{Y \mid \widehat{\textbf{s}}, \textbf{S}^c, \widehat{\varepsilon_Y}} \notag\\
    &=\expectation[\textbf{S}^c \mid \widehat{\textbf{s}}]{f(Pa_Y({\textbf{s}}, \textbf{S}^c))} + \varepsilon_Y =\nu'(\textbf{S}) + \varepsilon_Y. 
\label{eq:ey_in_s}
\end{align}
We can perform the first step by marginalizing over $\textbf{S}^c$ in $\P_{\textbf{s},\varepsilon_Y}$. Then, $(\textbf{S}^c \indep E_Y \mid \textbf{S})_{\G_{\overline{\textbf{S},E_Y}}}$ because any path $p$ connecting $E_Y$ must necessarily have a collider in $Y$, since $De(Y) = Y$, therefore blocking the path. By R3, $\P_{\widehat{\textbf{s}}, \widehat{\varepsilon_Y}}(\textbf{S}^c) = \P_{\widehat{\textbf{s}}}(\textbf{S}^c)$ so we can remove it from the expectation over $\textbf{S}^c$. On the other hand, we know that $Y = f(Pa_Y) + E_Y$ and by the linearity of expectations, we can remove $\varepsilon_Y$ from the expectation. Finally, for later clarity, we can denote the first term by $\nu'(\textbf{S})$.

Next, let us solve the analogous term for $\textbf{S}$:
\begin{align}
\nu(\textbf{S})
    &=\expectation{Y \mid \widehat{\textbf{s}}} = \expectation[\textbf{S}^c, E_Y \mid \widehat{\textbf{s}}]{Y \mid \widehat{\textbf{s}}, \textbf{S}^c, E_Y} \notag\\
    &=\expectation[\textbf{S}^c \mid \widehat{\textbf{s}}]{f(Pa_Y({\textbf{s}}, \textbf{S}^c))} + \expectation{E_Y} = \nu'(\textbf{S}) + \expectation{E_Y}. 
\label{eq:ey_not_in_s}
\end{align}
We proceed similarly, beginning with a marginalization over $\textbf{S}^c$ and $E_Y$ this time. In order to separate $E_Y$ from the first term, note that $(\textbf{S}^c \indep E_Y)_{\G_{\overline{\textbf{S}}}}$ and $(\textbf{S} \indep E_Y)_{\G_{\overline{\textbf{S}}}}$ for the same reason as before: any path connecting $E_Y$ must necessarily pass through $Y$, which acts as an unconditioned collider, thereby blocking it. Hence, $P(\textbf{S}^c, E_Y \mid \widehat{\textbf{s}}) = P(\textbf{S}^c \mid \widehat{\textbf{s}}) P(E_Y \mid \widehat{\textbf{s}}) = P(\textbf{S}^c \mid \widehat{\textbf{s}}) P(E_Y)$, with the former step by the rules of independence and the latter by R3. We can then split the expectation, this time with $\expectation{E_Y}$ as the second term and using the same notation $\nu'(\textbf{S})$ again.

With these two computations, we can see that:
\begin{equation}
\nu(\textbf{S} \cup \{E_Y\}) - \nu(\textbf{S}) = \varepsilon_Y -\expectation{E_Y},
\end{equation}
and substituting it into the SHAP formula (with $K+1$ players):
\begin{align}
    \phi'_{E_Y} 
    &=\sum_{\textbf{S} \subseteq \X} \frac{1}{K + 1} {K \choose |S|}^{-1} (\varepsilon_Y - \expectation{E_Y}) \notag\\
    &=\sum_{s=0}^K {K \choose s} \frac{1}{K + 1} {K \choose s}^{-1} (\varepsilon_Y - \expectation{E_Y}) \notag\\
    &=\varepsilon_Y - \expectation{E_Y}.
\end{align}
We transform the first to second step by realizing that we do not need to know what the coalitions $\textbf{S}$ are, only their cardinality, so we can transform $\sum_{\textbf{S} \subseteq \X}$ into $\sum_{s=0}^K$ by multiplying by ${K \choose s}$, the number of combinations of $s$ elements. This term and its inverse cancel out, and $K+1$ constant terms summed together cancels with $\frac{1}{K+1}$, resulting in $\varepsilon_Y - \expectation{E_Y}$.

Now, note that $\varepsilon_Y = y - f(pa_Y)$ and:
\begin{equation}
\expectation{Y \mid pa_Y} =
    \expectation{f(pa_Y) + E_Y} =
    f(pa_Y) + \expectation{E_Y},
\end{equation}
so $f(pa_Y) = \expectation{Y \mid pa_Y} - \expectation{E_Y}$. Then,
\begin{equation}
\phi'_{E_Y} =
    \varepsilon_Y - \expectation{E_Y} =
    y - f(pa_Y) - \expectation{E_Y} =
    y - \expectation{Y \mid pa_Y}.
\end{equation}
This proves the value for $\phi'_{E_Y}$. Let us now compute $\phi'_X$ for any $X \in \X$. Let $\textbf{S} \subseteq (\X \cup \{E_Y\}) \setminus \{X\}$. If $E_Y \in \textbf{S}$, we can apply \cref{eq:ey_in_s} and $\nu(\textbf{S}) = \nu'(\textbf{S} \setminus \{E_Y\}) + \varepsilon_Y$. Otherwise, \cref{eq:ey_not_in_s} gives us $\nu(\textbf{S}) = \nu'(\textbf{S}) + \expectation{E_Y}$. Now, $\phi'_X$'s computation can use both results:
\begin{align}
\phi'_X
    &=
        \sum_{\textbf{S} \subseteq \X \setminus \{X\}} \frac{1}{K + 1}
            {K \choose |\textbf{S}| + 1}^{-1} (\nu(\textbf{S} \cup \{E_Y, X\}) - \nu(\textbf{S} \cup \{E_Y\})) \notag\\
    &+
        \sum_{\textbf{S} \subseteq \X \setminus \{X\}} \frac{1}{K + 1}
            {K \choose |\textbf{S}|}^{-1} (\nu(\textbf{S} \cup \{X\}) - \nu(\textbf{S}))        
    \notag\\
    &=\sum_{\textbf{S} \subseteq \X \setminus \{X\}} \frac{1}{K + 1}
     \Biggl(
        {K \choose |\textbf{S}| + 1}^{-1} (\nu(\textbf{S} \cup \{X\}) - \nu(\textbf{S})) +
        {K \choose |\textbf{S}|}^{-1} (\nu(\textbf{S} \cup \{X\}) - \nu(\textbf{S}))
    \Biggl) \notag\\
    &= \sum_{\textbf{S} \subseteq \X \setminus \{X\}} \frac{1}{K + 1} \Biggl(
        {K \choose |\textbf{S}| + 1}^{-1} + {K \choose |\textbf{S}|}^{-1}
    \Biggl) (\nu(\textbf{S} \cup \{X\}) - \nu(\textbf{S})) \notag\\
    &= \sum_{\textbf{S} \subseteq \X \setminus \{X\}} \frac{1}{K} {K-1 \choose |\textbf{S}|}^{-1} 
        (\nu(\textbf{S} \cup \{X\}) - \nu(\textbf{S})) = \phi_X
\label{eq:shap_prime_v1}
\end{align}
We first split the SHAP formula in two: those sets that include $E_Y$ and those that do not; note that the combination terms are altered to reflect the size of the base set applied to $\nu$ ($|\textbf{S}| + 1$ in the first case since the base set is $\textbf{S} \cup \{E_Y\}$). For the first to second step, we can bring together the two sums, and transform the first difference,
\begin{equation}
    \nu(\textbf{S} \cup \{E_Y, X\}) - \nu(\textbf{S} \cup \{E_Y\}) = \nu'(\textbf{S} \cup \{X\}) - \nu'(\textbf{S}) = \nu(\textbf{S} \cup \{X\}) - \nu(\textbf{S}),
\end{equation}
by applying \cref{eq:ey_in_s} and \cref{eq:ey_not_in_s} and cancelling the $\varepsilon_Y$ and $\expectation{E_Y}$ terms, respectively. We now sum the two inverse combination terms; let $s:= |\textbf{S}|$, with $s \leq K - 1$ since $\textbf{S} \subseteq \X \setminus \{X\}$:
\begin{align}
    {K \choose s + 1}^{-1} + {K \choose s}^{-1} 
    &= \frac{
    (s+1)!(K - s - 1)! + s!(K - s)!
    }{K!} \notag\\
    &= \frac{
    (s+1) \cdot s!(K - s - 1)! + (K-s) \cdot s!(K - s - 1)!
    }{K \cdot (K - 1)!} \notag\\
    &= \frac{
    (s + 1 + K - s)s!(K - s - 1)!
    }{K \cdot (K - 1)!} \notag\\
    &= \frac{K+1}{K} \cdot \frac{s!(K - s - 1)!}{(K-1)!} = 
    \frac{K+1}{K} {K - 1 \choose s}^{-1}
\end{align}
Substituting this result back into \cref{eq:shap_prime_v1}, we can cancel out $K + 1$. We arrive at the last formula, which is exactly the value for do-SHAP in the $K$-player game without $E_Y$.

Finally, in order to prove the last step, let us first prove that $\expectation{Y \mid \widehat{\textbf{x}}} = \expectation{Y \mid pa_Y}$:
\begin{equation}
    \expectation{Y \mid \widehat{\textbf{x}}} = 
    \expectation{Y \mid \widehat{pa_Y}, \widehat{pa^c_Y}} = 
    \expectation{Y \mid \widehat{pa_Y}} =
    \expectation{Y \mid pa_Y}.
\end{equation}
We first split $\X = Pa_Y \cup Pa^c_Y$. Next, we apply R3 with $(Y \indep Pa^c_Y \mid Pa_Y)_{\G_{\overline{\X}}}$ since $\G_{\overline{\textbf{x}}}$ can only contain edges leading from $Pa_Y$ to $Y$ and from confounders $\Confounders{Y}$ to $Y$; therefore, no path can exist from $Pa^c_Y$ to $Y$. Then, we apply R2 with $(Y \indep Pa_Y)_{\G_{\underline{Pa_Y}}}$: since $Y$ has no descendants and we remove any incoming edges to $Y$ in $\G_{\underline{Pa_Y}}$ except for the ones starting from confounders in $\Confounders{Y} = \varnothing$ (by assumption), $Y$ must be independent of $Pa_Y$ in $\G_{\underline{Pa_Y}}$, proving the expression.

Now we can prove the remaining fact:
\begin{equation}
    \sum_{X \in \X} \phi'_X + \phi'_{E_Y} = 
    (\expectation{Y \mid \widehat{\textbf{x}}} - \expectation{Y}) + (y - \expectation{Y \mid pa_Y}) =
    y - \expectation{Y}
\end{equation}
\end{proof}

\section{Experiments}
\label{appendix:experiments}

These experiments are executed on personal computers (particularly, a Macbook with an M3 Pro chip) and do not require an infrastructure of workers for their execution. No experiment lasted longer than 6 hours to execute in total throughout their multiple replications.

\subsection{Synthetic dataset}
\label{sec:appendix_synthetic}

We include here further details about the Synthetic experiment in \cref{subsec:synthetic}, bigger figures for the Markovian case (for better visibility) and a discussion on the semi-Markovian case. Please refer to the supplementary code for the actual implementation of these experiments.

\subsubsection{Implementation details}
\label{subsec:implementation}

We chose several SCM architectures for the experiment; here we justify these choices. Regarding the classic approach of modeling an SCM with each of its functions $f_k \in \F$ separately, some of the approaches mentioned in \cref{sec:related} focus on how to model complex multivariate r.v.s (\eg images), but the remaining univariate r.v.s are modelled by specifying which probability distribution family they belong to. Since our DGP consists exclusively of univariate continuous r.v.s, these proposals are equivalent. Instead, we will employ Deep Causal Graph (DCG) \citep{parafita2022dcg}, a general framework for all sorts of implementations of SCMs. In particular, with DCGs, we can train three different kinds of SCM: 1) a \textbf{linear} SCM with Normal distributions for each variable, used as a baseline; 2) the Distributional Causal Node \citep{parafita2019dcn} architecture, where every node is modeled after a probability distribution family with a feed-forward network for the computation of its parameters (\textbf{DCN}); and 3) \textbf{DCG}s powered with its most flexible implementation for continuous nodes, based on Normalizing Flows. Finally, in order to test the alternative approach of modeling SCMs not node-wise, but the graph as a whole, we could use Variational Causal Autoencoder (VACA) \citep{sanchez2021vaca} or Causal Normalizing Flows (\textbf{CNF}) \citep{javaloy2023causal}. However, as stated by the authors of CNF based on their experiments, ``VACA shows poor performance, and is considerably slower due to the complexity of GNNs". For this reason, we opt for CNFs as a representative of this alternative approach for SCM modeling.

Regarding the definition of the synthetic DGP, we employ a set of non-linear functions along with exogenous samples from diverse continuous r.v.s for the generation of new samples. Let $\chi^2(k)$ be the Chi-squared distribution with $k$ degrees of freedom, $\mathcal{B}(\alpha, \beta)$ the Beta distribution, $\mathcal{N}(\mu, \sigma)$ the Normal distribution, and $\texttt{Exponential}(\lambda)$ the Exponential distribution. We sample from each latent variable first and then apply the functions in $\F$ in topological order:

\begin{equation}
\begin{cases}
u \sim \chi^2(k=10);\\
\varepsilon_Z \sim \mathcal{B}(\alpha=2, \beta=5);\\
\varepsilon_X \sim \mathcal{N}(\mu=0, \sigma=0.1);\\
\varepsilon_{A,1} \sim \texttt{Exponential}(\lambda=1);\\
\varepsilon_{A,2} \sim \mathcal{N}(\mu=0, \sigma=0.1);\\
\varepsilon_{B} \sim \mathcal{N}(\mu=0, \sigma=1);\\
\varepsilon_{C} \sim \mathcal{N}(\mu=0, \sigma=0.5);\\
\varepsilon_{Y} \sim \mathcal{N}(\mu=0, \sigma=0.5);\\
\end{cases}
\begin{cases}
z \gets \varepsilon_Z;\\
x \gets |z(u - 5) + \varepsilon_X|;\\
a \gets |\sqrt{x} + \varepsilon_{A, 1} + \varepsilon_{A, 2}|;\\
b \gets 5\sin(a) - \frac{u}{10} + \varepsilon_B;\\
c \gets \log(1 + b^2) + \varepsilon_C;\\
y \gets \log \frac{z}{1 - z} + \big(\frac{x}{10}\big)^2 + c + \varepsilon_Y;\\
\end{cases}
\end{equation}

Note that we have diverse continuous variables: some non-negative, some restricted to $(0, 1)$, some with unrestricted support. For the \textbf{DCN} implementation, where we need to assume the r.v.'s probability distribution family, we will employ Exponential, Beta and Normal distributions respectively. Each set of parameters $\Theta_X$ can be computed with a shared feed-forward network using a Graphical Conditioner \citep{parafita2022dcg}. For the \textbf{DCG} implementation with Normalizing Flows, we will use a Normal Distribution as its base noise ($E_X$) and the following flow structure (from $X$ to $E_X$):

\begin{itemize}
    \item Affine layer, $f(x) = \sigma x + \mu$, with $\mu, \sigma$ learnable parameters.
    \item $3$ blocks of:
    \begin{itemize}
        \item Rational-Quadratic Spline Flow \citep{durkan2019nsf}, defined on the interval $[-5, 5]$, with $K=8$ bins.
        \item Affine layer.
    \end{itemize}
\end{itemize}

Additionally, depending on the support of the r.v. to be modelled, we prepend another layer to transform the original domain into $\mathbb{R}$ before the application of the first Affine layer. In particular, for flows defined in $(0, 1)$, we use a Logit transformation $f(x) := \log \frac{x}{1 - x}$, and for non-negative flows, an inverse Softplus layer $f(x) := \log(e^x-1)$ (using the identity after a certain threshold for numerical stability). All parameters not set as hyperparameters are computed by an external trainable Conditioner that takes the node's parents as input and outputs their value.

Regarding the Conditioner network's architecture, it is a standard feed-forward network with ELU activations \citep{clevert2015elu}, $2$ hidden layers of dimension $32$, and a standardizer layer at the beginning defined with the training dataset. This is used for the \textbf{DCN} and \textbf{DCG} SCMs. Regarding the \textbf{CNF} architecture, it uses a Softclip-constrained NSF-based architecture similar to ours, but with $4$ stacked Spline layers (the diameter of the graph) and a MADE Conditioner to learn to model all variables at once. In this case, the conditioner uses $3$ hidden layers with dimension $32$ and ELU as its activation.

In terms of training, we use the AdamW optimizer \citep{loshchilov2018decoupled} with Early Stopping (after $100$ epochs with no improvement), using learning rate $10^{-3}$, weight decay $10^{-2}$ and batch size $100$. Regarding SHAP estimation, since we only have $5$ variables, we use the exact permutation method, taking 1,000 samples from each SCM for the Monte Carlo estimators of $\nu(\textbf{S})$.

Finally, see \cref{fig:markovian_big} for a bigger representation of the plots in the Markovian case.

\begin{figure*}
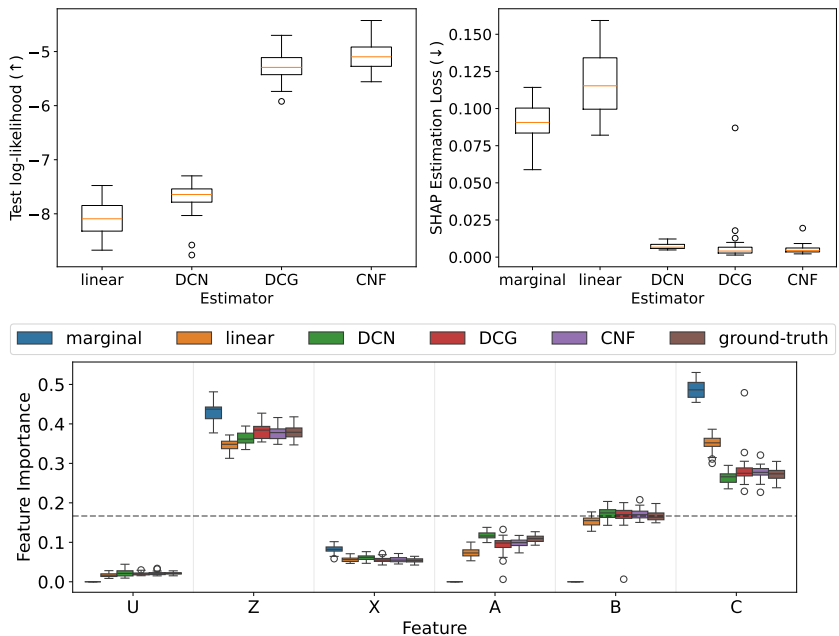

\centering
\begin{minipage}{.4\textwidth}
    \centering
    \includegraphics[width=\linewidth]{figures/synthetic/markovian/loglk.pdf}
\end{minipage}%
\begin{minipage}{.4\textwidth}
    \centering
    \includegraphics[width=\linewidth]{figures/synthetic/markovian/shap_loss.pdf}
\end{minipage}\\
\begin{minipage}{.8\textwidth}
    \centering
    \includegraphics[width=\linewidth]{figures/synthetic/markovian/FI.pdf}
\end{minipage}%
\caption{Markovian case. Box-plots computed over $30$ realizations of the dataset. (a) Distribution adjustment score, log-likelihood (bigger is better). (b) SHAP estimation loss, $\mathcal{L}$ (lower is better). (c) Feature Importance (the closer to \textit{ground-truth}, the better). Dashed horizontal line represents uniform importance $(\frac{1}{K})$.}
\label{fig:markovian_big}
\end{figure*}

\subsubsection{Semi-Markovian case}
\label{subsec:semimarkv}

We also test our approach on the semi-Markovian case, where the latent confounder $\Confounder{X}{B}$ is not observed. As stated in \cref{subsec:synthetic}, CNF cannot be applied without the causal sufficiency assumption, so we will proceed with the remaining SCMs.

\Cref{fig:semi_markovian} shows the same three plots as in the Markovian case, with similar results. The linear SCM cannot properly estimate $\P(\V)$, which results in worse SHAP loss. DCNs achieve similar results to DCGs, but DCGs exhibit the best distribution adjustment and estimation performance. Finally, marginal SHAP results in different estimations than do-SHAP, something that is patently clear in the FI plot, where $Z$ and $C$'s contribution are overestimated while $A$ and $B$ are underestimated. Note that we obtain the same explanations as in the Markovian case even though we cannot measure the latent confounder nor know its distribution.

\begin{figure*}
\centering
\begin{minipage}{.4\textwidth}
    \centering
    \includegraphics[width=\linewidth]{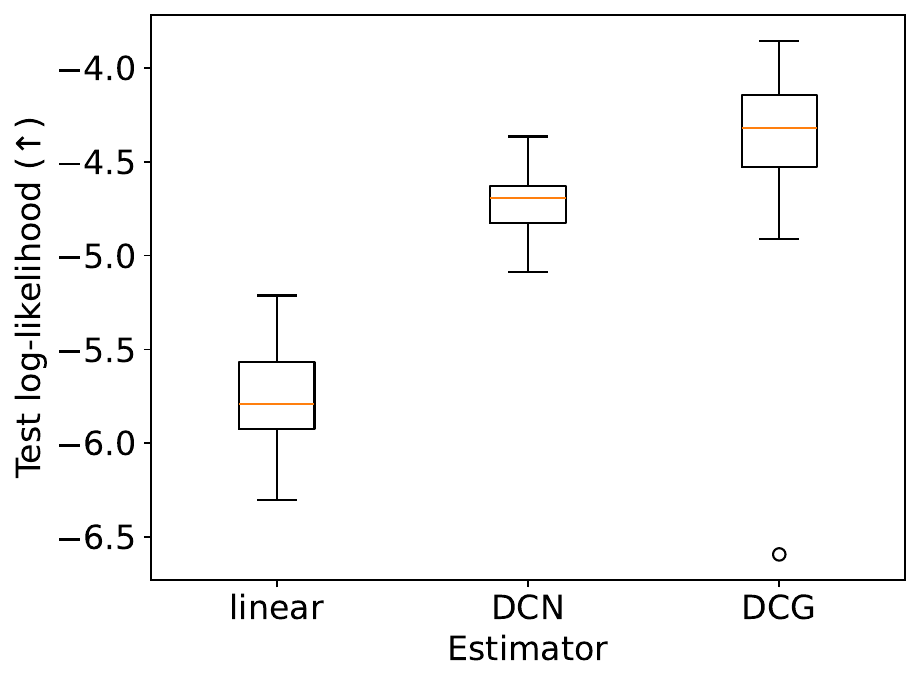}
\end{minipage}%
\begin{minipage}{.4\textwidth}
    \centering
    \includegraphics[width=\linewidth]{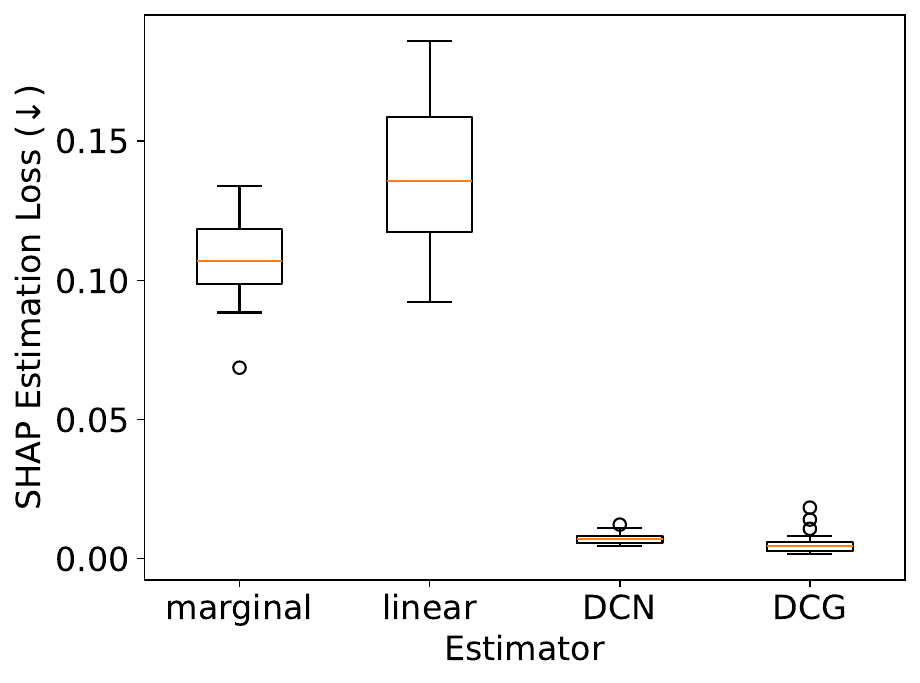}
\end{minipage}\\
\begin{minipage}{.8\textwidth}
    \centering
    \includegraphics[width=\linewidth]{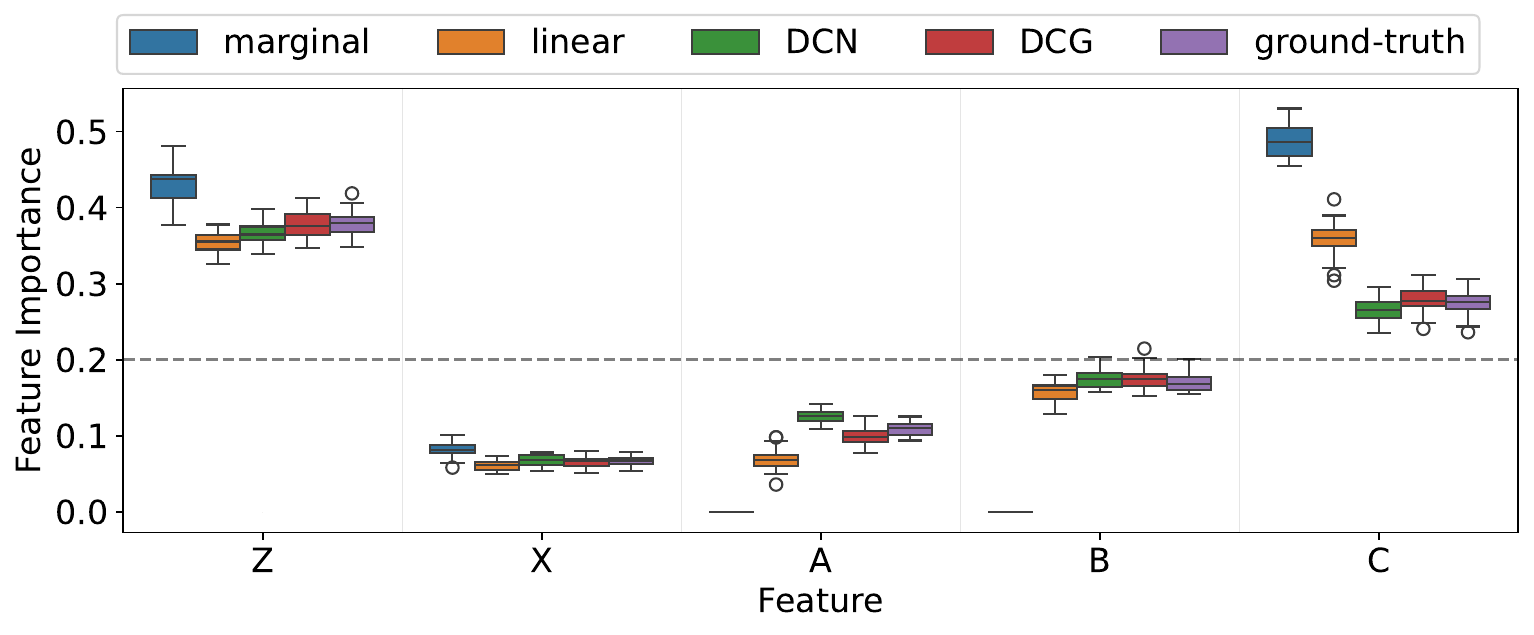}
\end{minipage}%
\caption{Semi-Markovian case. Box-plots computed over $30$ realizations of the dataset. (a) Distribution adjustment score, log-likelihood (bigger is better). (b) SHAP estimation loss, $\mathcal{L}$ (lower is better). (c) Feature Importance (the closer to ground-truth, the better). Dashed horizontal line represents uniform importance $(\frac{1}{K})$.}
\label{fig:semi_markovian}
\end{figure*}

\subsection{FRA experiments}
\label{subsec:appendix_experiment_fra}

We will now describe additional tests for FRA.

\Cref{fig:fra_time} shows comparisons between both versions of the FRA algorithm: FR1 (using sets, \cref{alg:fr1_fra}) and FR2 (using integers, \cref{alg:fr2_appendix}). Note that FR2 is consistently and significantly faster that FR1, and less memory-intensive, since the $\texttt{Fr}$ cache needs only store numerical encodings of sets, instead of sets of integers. However, depending on graph topology, particularly with many edges ($p = 0.9$), FR1 can be faster than FR2, as shown by \cref{fig:fra_time} (b). In any case, FRA is negligible time-wise \wrt the time needed for the estimation of $\nu(\textbf{S})$, as we will see next.

\begin{figure*}
\centering
\begin{minipage}{.31\textwidth}
    \centering
    \includegraphics[width=\linewidth]{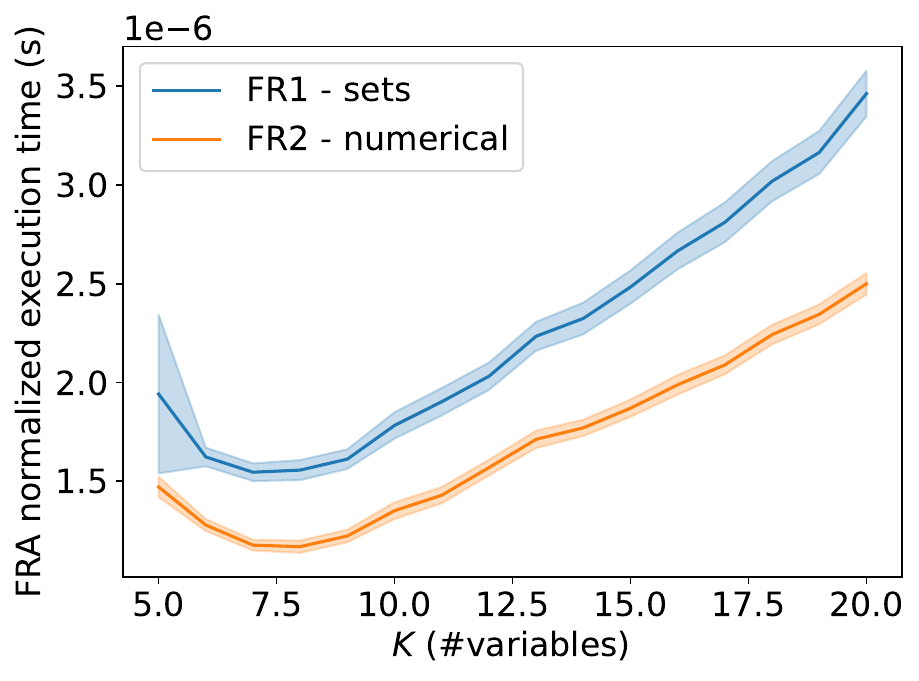}
\end{minipage}%
\begin{minipage}{.3\textwidth}
    \centering
    \includegraphics[width=\linewidth]{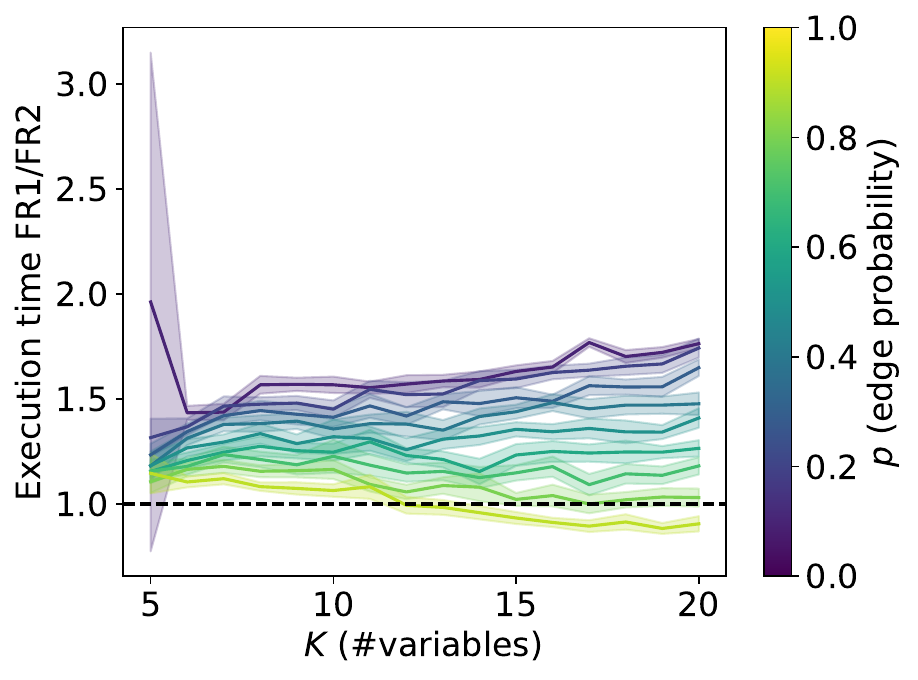}
\end{minipage}%
\begin{minipage}{.3\textwidth}
    \centering
    \includegraphics[width=\linewidth]{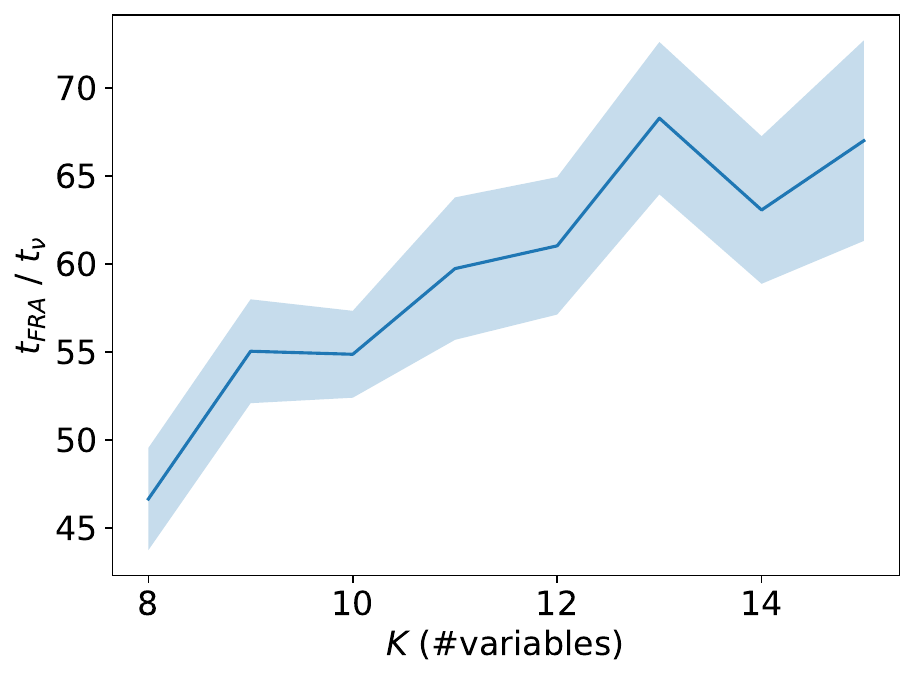}
\end{minipage}%
\caption{FRA experiments. (a) shows the errors bars (at 2-sigma over 270 replications) for the average normalized execution times of FR1 (sets) and FR2 (numerical) per number of nodes ($K$). (b) shows the error bars (at 2-sigma over 30 replications) of the ratio between FR1 time and FR2 time, split by edge probability ($p$), over the number of nodes $K$. If above $1$, FR2 is faster. (c) shows the error bars (at 2-sigma over 30 replications) for the mean of quotients between the total time executing FRA (+permutations) and for estimating all sampled $\nu(\textbf{S})$.}
\label{fig:fra_time}
\end{figure*}

In the second experiment, as described in \cref{subsec:fra_experiment}, we employed a synthetic DGP consisting of a linear function. We generate 1,000 \iid samples from this DGP to create training, validation and test sets with ratios 8:1:1. We train a linear DCG with Normal DCN nodes, which is fitting for this dataset and deliberately lightweight, so as to show the improvements from the application of FRA even in the case where the model is particularly fast; FRA can only improve time-gains with more expressive and expensive models. We use a learning rate of $10^{-2}$, weight decay of $10^{-2}$, a batch size of $100$, early stopping with patience of $10$ epochs and AdamW \citep{loshchilov2018decoupled} as the optimizer. For each $\nu$ estimation, we generate $100$ samples from the SCM.

For this experiment, we kept track of do-SHAP execution time as well as the time for the computation of $\nu(\textbf{S})$ specifically; this allows us, by subtraction, to compute the time needed for FRA and generating the permutations $\pi$ from where the sets $\textbf{S}$ emerge. If we compare these two quantities, dividing the $\nu$ execution time by the FRA+permutation time, as shown in \cref{fig:fra_time} (c), we see that FRA is orders of magnitude faster than $\nu$; this means that, at a negligible cost, we can speed up do-SHAP by a significant ratio, specifically the number of FR coalitions.

\section{Applications}
\label{appendix:applications}

We now showcase do-SHAP explanations on two real-world datasets as illustrative examples.

\subsection{Diabetes dataset}
\label{subsec:diabetes_appendix}

Here we discuss the Diabetes Health Indicators Dataset \citep{diabetesUCI}, containing healthcare statistics and lifestyle survey information along with their diagnosis (or not) of diabetes, for the year 2015. Note that the dataset is biased, with 14\% of individuals having diabetes. We start with a preprocessed version of the original questionnaire dataset, from which 21 features and the target variable are extracted. We select 10 out of 21 features to provide a more easily understandable problem for the reader.

We construct a causal graph (see \cref{graph:diabetes_graph}) relating these variables and train an SCM to model them, with which we finally compute their do-Shapley values. We design the graph using common sense. Note that any causal analysis depends on its graph being sound, but it can be replicated at any time once a better graph is found; for this reason, please take this as an illustrative example, since its conclusions regarding healthcare are not necessarily rigorous. We train a DCG with the same setup as before, except that every variable other than BMI is binary, so they are modelled with Bernoulli DCNs.

\begin{figure}
    \centering
    \begin{tikzpicture}[node distance={5em}, minimum size={1em}, thick, Node/.style = {draw, circle, font=\footnotesize}, Edge/.style = {->}]
        \node [style=Node] (0) at (-2, 2) {SM};
        \node [style=Node] (1) at (-2, -2) {PA};
        \node [style=Node] (4) at (0, 2) {ST};
        \node [style=Node] (5) at (-2, 0) {\textbf{BP}};
        \node [style=Node] (6) at (0, -2) {\textbf{BM}};
        \node [style=Node] (7) at (-1, -1) {FR};
        \node [style=Node] (8) at (1, -1) {VG};
        \node [style=Node] (9) at (0, 0) {\textbf{CH}};
        \node [style=Node] (10) at (2, 0) {CC};
        \node [style=Node] (11) at (1, 1) {HD};
        
        \draw [style=Edge] (5) to (4);
        \draw [style=Edge] (1) to (5);
        \draw [style=Edge] (0) to (4);
        \draw [style=Edge] (1) to (6);
        \draw [style=Edge] (7) to (6);
        \draw [style=Edge] (7) to (9);
        \draw [style=Edge] (8) to (6);
        \draw [style=Edge] (8) to (9);
        \draw [style=Edge] (9) to (10);
        \draw [style=Edge] (9) to (5);
        \draw [style=Edge] (5) to (11);
        \draw [style=Edge, bend right=45, looseness=1.25] (6) to (10);
        \draw [style=Edge, bend left=45] (4) to (10);
        \draw [style=Edge] (11) to (10);
        \draw [style=Edge] (9) to (11);
    \end{tikzpicture}
\caption{Diabetes Causal Graph, with variables Physical Activity (PA), Fruit (FR), Veggies (VG), Smoker (SM), BMI (\textbf{BM}), High Cholesterol (\textbf{CH}), High Blood Pressure (\textbf{BP}), Heart Disease or Attack (HD), Stroke (ST), Chol. Check (CC). Boldface letters denote $Pa_Y$, with $Y$, Diabetes, the target variable, not represented for clarity. We can skip modeling any non-ancestors of $Y$ (\cref{prop:doshap0}): SM, ST, HD and CC.}
\label{graph:diabetes_graph}
\end{figure}
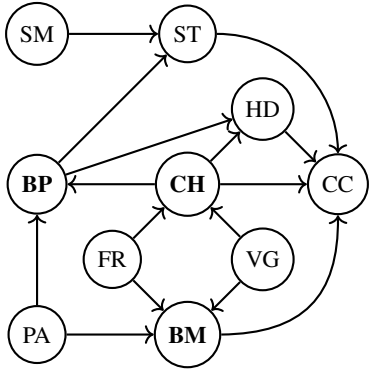

Our objective here is not to explain a ML model, but the data itself; particularly, how each variable affects the likelihood of diabetes. However, the effect of the noise is not as clear in a classifier as in a regressor, since $\phi_{E_X} = y - \expectation{Y \mid pa_Y} = y - P(Y \mid pa_Y)$; for an application of \cref{theorem:shap_noise}, please refer to \cref{subsec:bike_rental}.

We compute do-SHAP for the first 1,000 test set entries, and measure FI. See \cref{fig:diabetes_FI} a); HighBP, HighChol and BMI appear to be the most important variables, with Physical Activity, and fruit and vegetable intake having a less pronounced role. It is also important to consider the dependency between feature values and do-SHAP values; see the beeswarm plot in \cref{fig:diabetes_FI} c), which shows clear-cut effects in do-SV sign and magnitude for HighBP and HighChol, with a more nuanced relationship between BMI (continuous) and do-SVs, which we plot in \cref{fig:diabetes_FI} b); BMI 30 (typically categorized as obese) seems to be the cutting point after which higher BMI values increase the chances of diabetes up to 20\%, while lower values decrease that likelihood up to 10\%.

\begin{figure*}
\centering
\begin{minipage}{.3\textwidth}
    \centering
    \includegraphics[width=\linewidth]{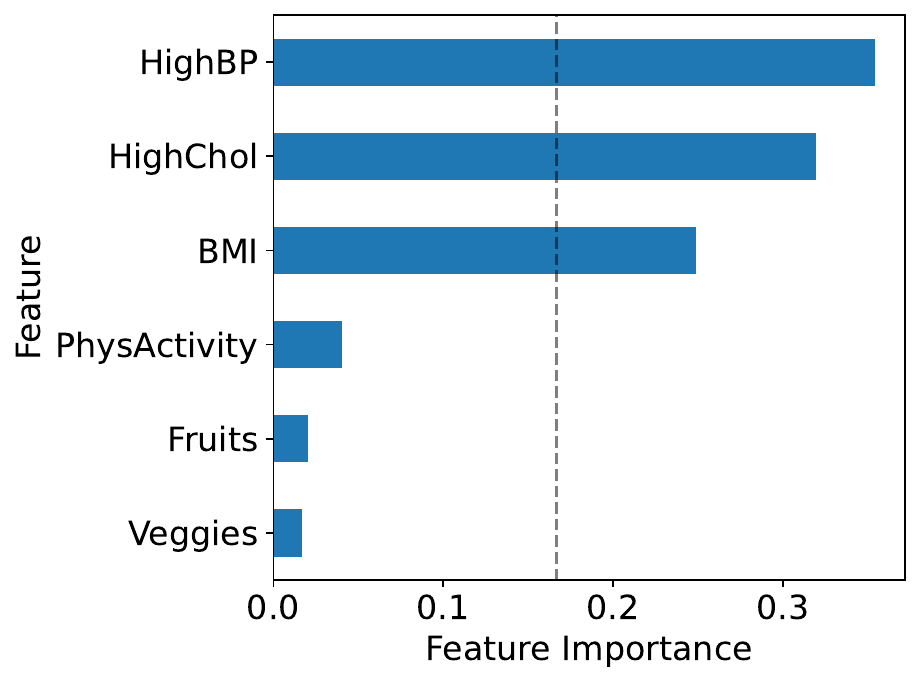}
\end{minipage}\hspace{1em}%
\begin{minipage}{.3\textwidth}
    \centering
    \includegraphics[width=\linewidth]{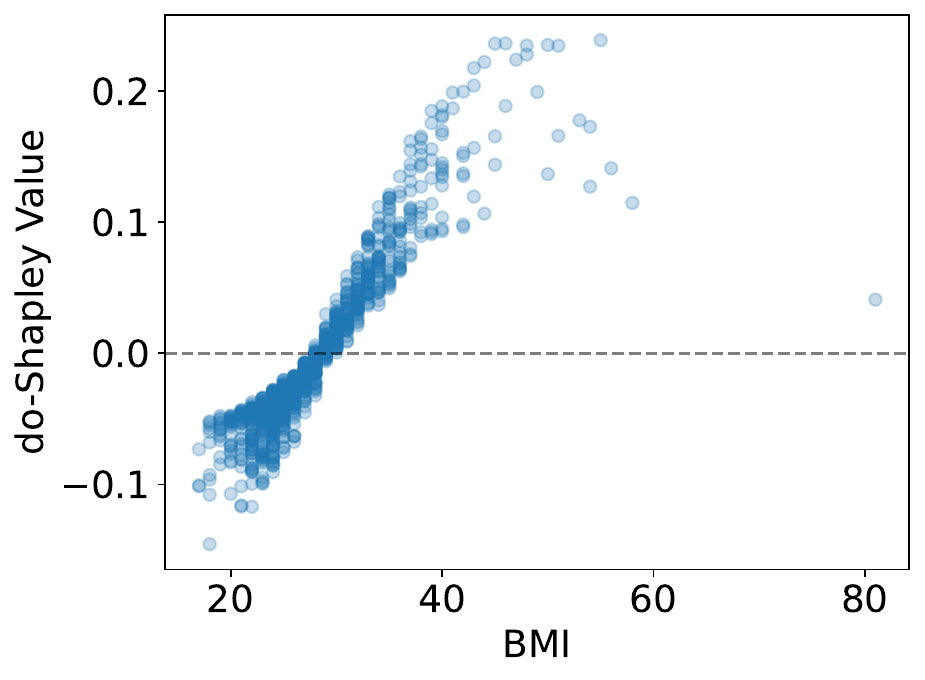}
\end{minipage}\\%
\begin{minipage}{.6\textwidth}
    \centering
    \includegraphics[width=\linewidth]{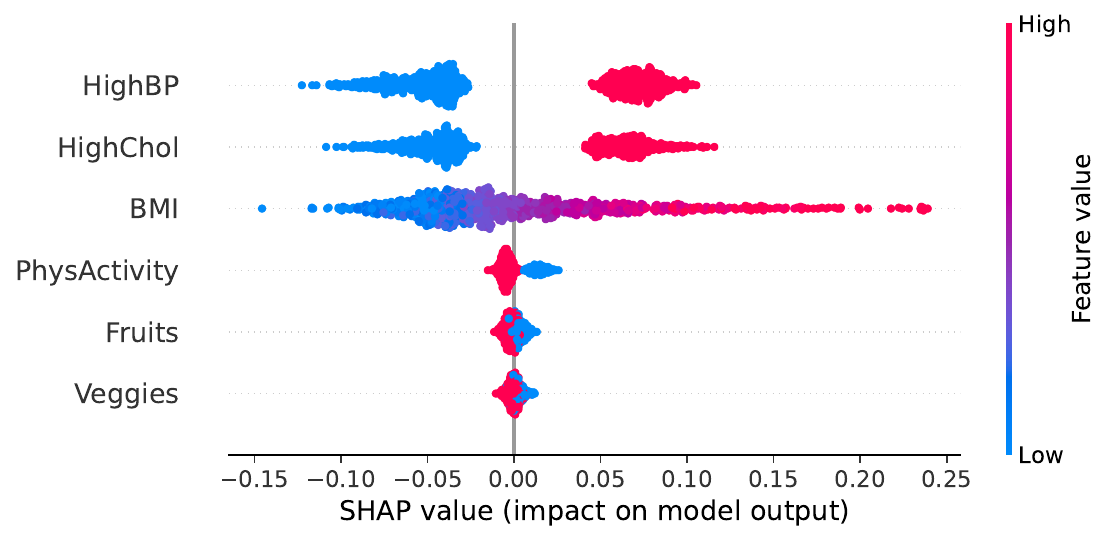}
\end{minipage}
\caption{Diabetes Dataset. a) do-SHAP Feature Importance. Dashed vertical line represents uniform importance $(\frac{1}{K})$. b) Scatterplot between BMI value and do-SVs. c) do-SHAP beeswarm plot, relating do-SVs and feature values.}
\label{fig:diabetes_FI}
\end{figure*}

\subsection{Bike Rental dataset}
\label{subsec:bike_rental}

We now study the Bike Rental Dataset \citep{fanaee2014bikerental}, describing the number of rentals in a bike sharing service in Washington D.C., between 2011 to 2012 (both included), measured on an hourly basis, along with weather data and whether that day was a working day. Again, our objective in this case is to explain the data itself; particularly, how each variable affects the number of rented bikes.

We design a causal graph, depicted in \cref{fig:bike2} (a). As stated before, please do not take the conclusions from this experiment as is, but merely as an illustrative example. We train a DCG with the same architecture and training parameters as the synthetic experiment, except that \textit{hour} can be modeled with a uniform distribution on the integer interval $[0, 23]$. We train our DCG and compute the do-Shapley values for the test set entries. However, since we are operating on an inaccessible DGP in this case, we also want to measure the effect of the noise, employing \cref{theorem:shap_noise}.

We measure FI, represented in \cref{fig:bike2} (b). Hour seems to be the major cause of the target variable, as is to be expected, followed by noise (given by the inherent variance of the target conditioned on its parents) and temperature, which also conditions on the likelihood of users renting bikes. Other variables also do have an impact, with only wind speed and weather (categorical, with three levels) having a less pronounced effect. 

The relationship between feature values and do-SVs is more informative; we use scatter plots in order to study how each value affects the outcome; see \cref{fig:bike_shap}. \textit{Hour} presents positive attribution during daytime from 8AM to 8PM, with night-time having negative attribution; \textit{temperature}'s effect is mainly negative, with certain temperatures being less inviting for cycling (below 15ºC and above 35ºC); in the same way, humidity only affects past 80\%, as well as wind speed, past 15 km/h. 

\begin{figure}
\centering
\begin{minipage}{.4\textwidth}
    \begin{tikzpicture}[node distance={3em}, minimum size={2em}, thick, main/.style = {draw, circle}]
    \node [main] (0) at (0, 0) {\textbf{HU}};
    \node [main] (1) at (0, 1.5) {WE};
    \node [main] (2) at (1.5, 1.5) {\textbf{WO}};
    \node [main] (3) at (1.5, 0) {\textbf{WI}};
    \node [main] (4) at (1.5, -1.5) {\textbf{HO}};
    \node [main] (5) at (0, -1.5) {\textbf{TE}};
    \node [main] (6) at (-1.5, 0) {SE};
    
    \draw [->] (6) to (1);
    \draw [->] (1) to (3);
    \draw [->] (1) to (0);
    \draw [->] (4) to (0);
    \draw [->] (4) to (3);
    \draw [->] (4) to (5);
    \draw [->] (6) to (5);
    \end{tikzpicture}
\end{minipage}
\begin{minipage}{.4\textwidth}
    \centering
    \includegraphics[width=\textwidth]{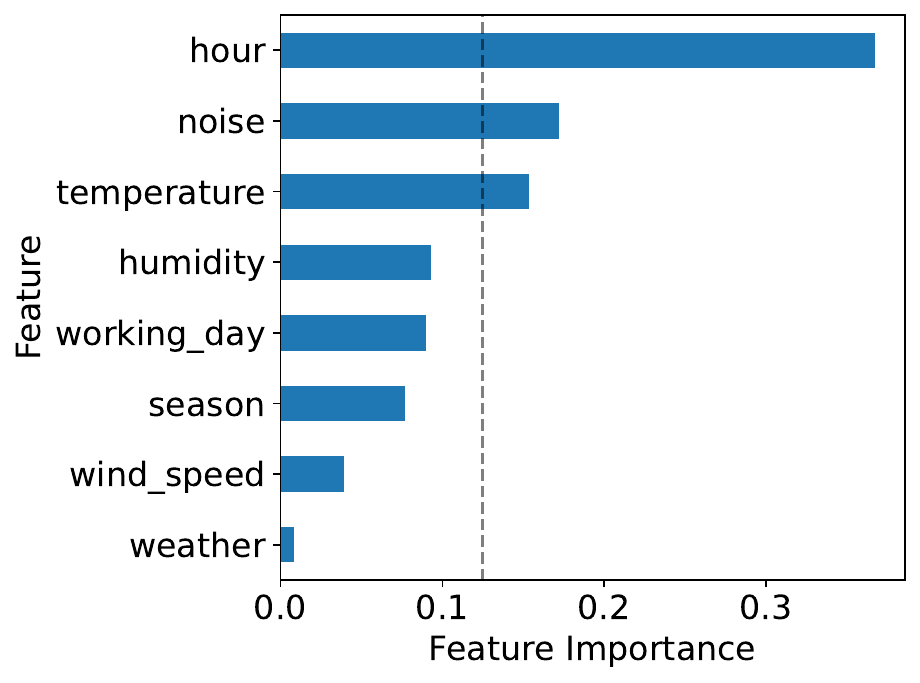}
\end{minipage}%
\caption{Bike Rental Dataset. a) Causal Graph with nodes Season (SE), Weather (WE), Humidity (HU), Temperature (TE), Working day (WO), Wind speed (WI), Hour (HO). Boldface letters denote $Pa_Y$, with target $Y$, Rentals, not represented for clarity. b) Feature Importance (FI) for each variable. Dashed vertical line represents uniform importance $(\frac{1}{K})$.}
\label{fig:bike2}
\end{figure}

\begin{figure}[ht]
\centering
\includegraphics[width=\textwidth]{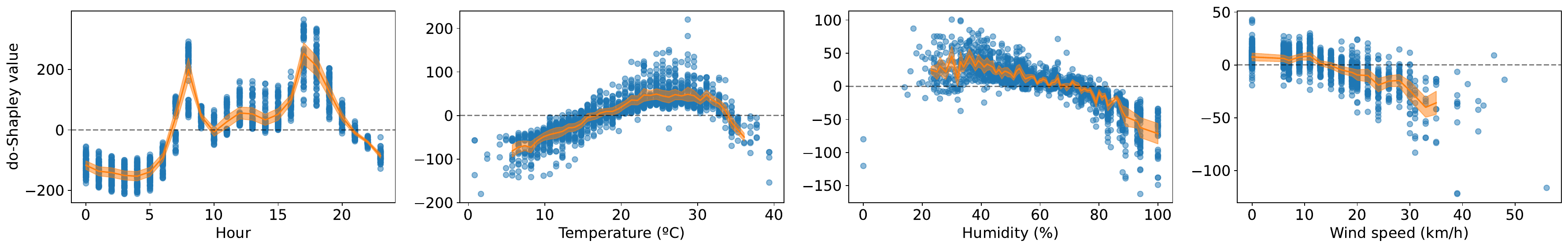}
\caption{Bike Rental, continuous features' values against their SHAP values. Errors bars at 2-sigma.}
\label{fig:bike_shap}
\end{figure}

\section{Comparison between do-SVs and non-causal approaches}
\label{appendix:salary}

In this appendix, we expand on the discussion about why do-SHAP results in more reliable explanations than its non-casual alternatives (marginal-SHAP and conditional-SHAP). We follow on the example presented in \cref{sec:intro}, here replicated in \cref{graph:salary_appendix} for reader's convenience.

\begin{wrapfigure}{r}{0.35\textwidth}%\begin{figure}[ht]
\centering
\begin{tikzpicture}[node distance={4em}, thick, main/.style = {draw, circle}]
\node[main] (A) {A};
\node[main] (S) [right of=A] {S}; 
\node[main] (E) [above of=S] {E};
\node[main] (Y) [right of=S] {Y}; 
\draw[->] (A) -- (S); 
\draw[->] (A) -- (E); 
\draw[->] (E) -- (S); 
\draw[->] (E) -- (Y); 
\draw[->] (S) -- (Y); 
\end{tikzpicture}
\caption{\textit{Salary} causal graph: Age (A), Education (E), Seniority (S) and Salary (Y).}
\label{graph:salary_appendix}
\end{wrapfigure}
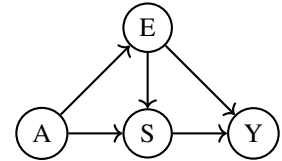%\end{figure}

Firstly, let us reason about how each method will behave in this particular Data Generating Process (DGP). In marginal-SHAP, consider for example $\nu(\{E\})$, where we would marginalize $A$ and $S$ regardless of the assigned value to $E$, thereby ignoring the impact that education level may have on the seniority level of the employee (their standing in the company), and producing out-of-distribution configurations $(a, e, s)$. Alternatively, with conditional SHAP, we would operate with $P(A, S \mid E=e)$, thereby including this dependency between $E$ and $S$, but also taking whichever value of $A$ would have generated this specific value $e$, which introduces in turn an anti-causal effect ($E \rightarrow A$). Since both approaches ignore the causal structure, they incorporate non-causal effects that fail to reflect the real DGP, and would therefore lead to unreliable explanations. In contrast, do-SHAP does take into account this causal effect, by using the intervention $do(E=e)$, therefore affecting S ($E \rightarrow S$) while not affecting A ($E \leftarrow A$); not only that, but $A$'s effect is de-confounded by blocking the back-door path $E \leftarrow A \rightarrow S \rightarrow Y$. See \cref{tab:salary_marg_cond_do} for a depiction of which coalitions $\textbf{S}$ result in the same value as do-SHAP's $\nu$, for marginal-SHAP and conditional-SHAP.

\begin{table}[h]
\begin{center}
\begin{tabular}{r|c|c|c|c|c|c|c|c}
\multicolumn{1}{l|}{}     & \multicolumn{1}{l|}{$\varnothing$} & \multicolumn{1}{l|}{$\{A\}$} & \multicolumn{1}{l|}{$\{E\}$} & \multicolumn{1}{l|}{$\{S\}$} & \multicolumn{1}{l|}{$\{A, E\}$} & \multicolumn{1}{l|}{$\{A, S\}$} & \multicolumn{1}{l|}{$\{E, S\}$} & \multicolumn{1}{l}{$\{A, E, S\}$} \\ \hline
\textbf{marginal-SHAP}    & $=$                                & $\neq$                       & $\neq$                       & $\neq$                       & $\neq$                          & $\neq$                          & $\neq$                          & $=$                               \\ \hline
\textbf{conditional-SHAP} & $=$                                & $=$                          & $\neq$                       & $\neq$                       & $=$                             & $\neq$                          & $\neq$                          & $=$                               \\ \hline
\end{tabular}
\end{center}
\caption{When does do-SHAP's $\nu$ equal marginal-SHAP's or conditional-SHAP's $\nu$?}
\label{tab:salary_marg_cond_do}
\end{table}

\begin{figure}[ht]
\centering
\includegraphics[width=.75\textwidth]{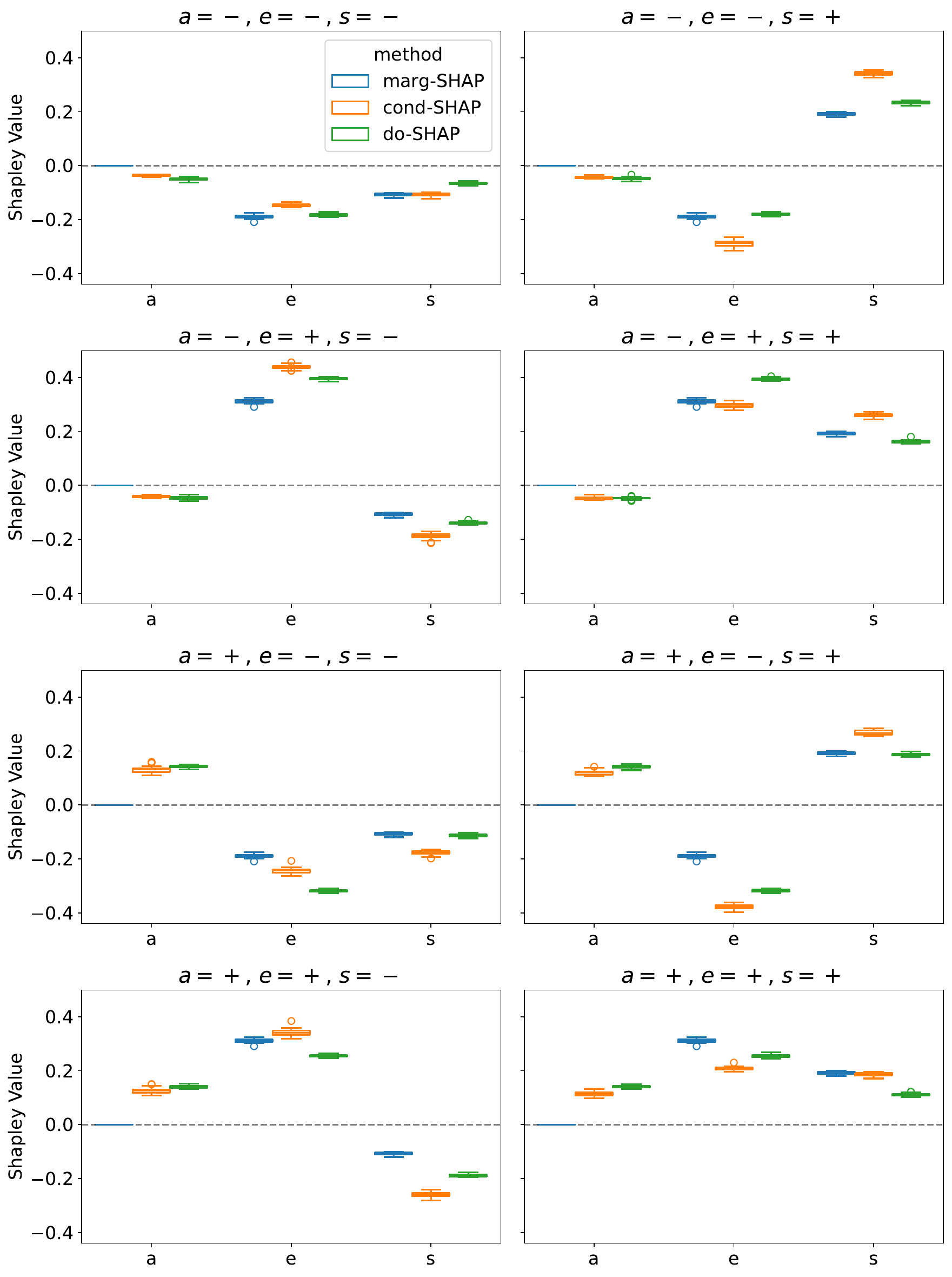}
\caption{Salary example, comparison between marginal-SHAP, conditional-SHAP and do-SHAP for each input variable ($A, E, S$) on every factual combination (title).}
\label{fig:salary_comparison}
\end{figure}

Secondly, we will illustrate this reasoning with an example SCM corresponding to the same graph; see \cref{eq:salary_scm}. Let us consider Bernoulli r.v.s $A, E, S, Y$ with parameters $p_A, p_E, p_S, p_Y$, respectively. These parameters are computed following the aforementioned causal graph, with the following structural equations, which generate samples $(a, e, s, y)$ through Bernoulli sampling.
\begin{equation}
\centering
\begin{cases}
    p_A = 0.25 \\
    p_E = 0.5a + 0.25 \\
    p_S = 0.25a + 0.5e + 0.1 \\
    p_Y = 0.5e + 0.3s + 0.1
\end{cases}
\label{eq:salary_scm}
\end{equation}

We evaluate marginal-SHAP, conditional-SHAP, and do-SHAP on this DGP. We run the following procedure $30$ times to obtain error bars. For each replication, we generate $1,000$ background samples (used for marginalization in the first two methods) and $8$ test samples (one for every configuration $(a, e, s)$) to explain. We estimate marginal-SVs on these test set samples $\textbf{x}$ by approximating $\nu_{marg}(\textbf{S}) := \expectation[\textbf{x}' \sim \P(\X)]{P(Y \mid \textbf{x}_{\textbf{S}}, \textbf{x}'_{\textbf{S}^c})}$ with the i.i.d. background samples. Similarly, we estimate conditional-SVs by approximating $\nu_{cond}(\textbf{S}) = \expectation[\textbf{x}' \sim \P(\X \mid \textbf{x}_{\textbf{S}})]{P(Y \mid \textbf{x}_{\textbf{S}}, \textbf{x}'_{\textbf{S}^c})}$, averaging over those $\textbf{x}'$ background samples that fulfill $\textbf{x}'_{\textbf{S}} = \textbf{x}_{\textbf{S}}$. Finally, we compute do-SVs by estimating $\nu(\textbf{S}) = \expectation[\textbf{x}' \sim \P(\X \mid do(\textbf{S} = \textbf{x}_{\textbf{S}}))]{P(Y \mid \textbf{x}_{\textbf{S}}, \textbf{x}'_{\textbf{S}^c})}$, sampling and intervening on the DGP directly with interventions $do(\textbf{S}=\textbf{x}_{\textbf{S}})$. Note that for this particular experiment, we do have access to the true structural equation $Y = f_Y(a, e, s)$ and we employ it in our estimations instead of training a model to approximate it. We split by each test sample (every factual combination $(a, e, s)$) and plot the corresponding estimations with boxplots in \cref{fig:salary_comparison}.

We use the results of this experiment to exemplify how marginal-SHAP and conditional-SHAP fail to address the behavior of the underlying DGP, hence providing explanations whose insights are not reliably applicable to the real world. Meanwhile, do-SHAP does take into account the corresponding data structure, and overcomes these weaknesses.

On the one hand, marginal-SHAP sets $\phi_{A}=0$ for all possible configurations, since $A$ does not appear in $Y$'s structural equation; as such, $A$ cannot have an effect on $Y$ within marginal-SHAP, since interventions on $A$ have no impact on $E$ or $S$ (even if they do in the real DGP). As for $\phi_E$, it disregards the effect of $E$ on $S$ while also not de-confounding the back-door effect $E \leftarrow A \rightarrow S \rightarrow Y$, resulting in significant differences with do-SHAP, particularly on $(a=-, e=+)$ and $(a=+, e=-)$, given that the correlation between the two is ignored with marginal-SHAP's intervention. Finally, $\phi_S$ is closer to an intervention with do-SHAP since $S$ does not propagate its intervention on any other variable before $Y$; however, it is unable to control for the back-door effects ($S \leftarrow E \rightarrow Y$ or $S \leftarrow A \rightarrow E \rightarrow Y$), which explains the difference with do-SHAP.

On the other hand, conditional-SHAP results in more similar attributions to do-SHAP, but still significantly different. This similarity can be explained by the fact that half the coalitions result in the same $\nu$-output as do-SHAP in this particular graph, as made explicit in \cref{tab:salary_marg_cond_do}. However, even if the conditional may affect descendants of a variable as a do-intervention does, it can also introduce anti-causal effects and cannot block back-door paths, which explains the differences with do-SHAP.

In summary, both marginal-SHAP and conditional-SHAP misrepresent the underlying DGP, given that they ignore its underlying causal structure. This is the reason why do-SHAP results in more reliable explanations than the alternatives.

\section{do-SHAP estimation example}
\label{appendix:doSHAP_example}

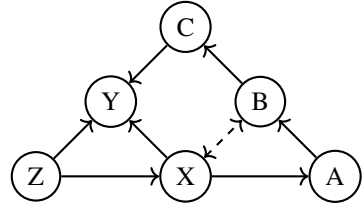
\begin{wrapfigure}{r}{0.35\textwidth}
\centering
\begin{tikzpicture}[node distance={4em}, thick, main/.style = {draw, circle}]
\node[main] (Z) {Z};
\node[main] (Y) [above right of=Z] {Y};
\node[main] (X) [below right of=Y] {X};
\node[main] (C) [above right of=Y] {C};
\node[main] (B) [above right of=X] {B};
\node[main] (A) [below right of=B] {A};

\draw[->] (Z) -- (X);
\draw[->] (Z) -- (Y);
\draw[->] (X) -- (Y);
\draw[->] (X) -- (A);
\draw[->] (A) -- (B);
\draw[->] (B) -- (C);
\draw[->] (C) -- (Y);
\draw[<->, dashed] (X) -- (B);
\end{tikzpicture}
\caption{Synthetic semi-Markovian graph.}
\label{graph:synthetic_appendix}
\end{wrapfigure}

This section is devoted to explaining how to apply EA estimation in practice for do-SHAP. This is meant as an illustrative written explanation; for a code example, please refer to the experiments code, submitted with the supplementary material.

We will focus on the graph introduced in \cref{subsec:synthetic}, here replicated in \cref{graph:synthetic_appendix} for reader's convenience. We will employ the semi-Markovian version of this example (meaning, there is a latent confounder $\Confounder{X}{B}$ between $X$ and $B$). We will assume the (measured) data distribution is composed of unconstrained continuous r.v.s, with an unknown prior distribution for $\Confounder{X}{B}$, but with a known causal graph $\G$.

In terms of notation, let $Pa'_X = Pa_X \sqcup \Confounders{X}$ be the concatenation of the parents of a certain node $X \in \X$ and any latent confounders pointing to it. For example, in our current graph, $Pa'_B = (A, \Confounder{X}{B})$. 

\subsection{do-SHAP identifiability}
Before starting, we need to confirm that do-SVs are indeed identifiable in this particular graph; otherwise, two SCMs trained on the same distribution may output different do-SV estimations, rendering them useless.

If there was a graphical criterion that fit the structure of our graph $\G$, it could be used at this step; for instance, if there are no latent confounders in a graph, do-SHAP is trivially identifiable. Since this is not the case, we need to test it using the ID algorithm \citep{shpitser2006interventional}. See \citep{tikka2017identifiability} or \citep{pedemonte2021identifiability} for implementations in R and Python, respectively.

To guarantee do-SHAP identifiability, we need to test it for every coalition $\textbf{S}$. In other words, ensure that $\nu_\textbf{x}(\textbf{S}) := \expectation{Y \mid do(\textbf{S}=\textbf{s})}$ is identifiable $\forall \textbf{S} \subseteq \X$. Therefore, we run the ID algorithm on each of the $2^5 = 32$ queries $\nu(\textbf{S})$; if all are identifiable, so are the do-SVs. Note that this test is independent to the data distribution, as it only requires the corresponding graph structure $\G$.

Given that this is a small graph, with only 5 $\X$ variables, amounting to 32 coalitions, it is feasible to test for identifiability before starting the process. In the general case, with potentially bigger graphs and $2^{|\X|}$ coalitions to test, this becomes infeasible or very expensive, and it is therefore recommended to test for identifiability \textit{during} do-SV estimation; this lets us skip the identifiability check for any coalitions not appearing during the sampling process, as well as leveraging the FRA-cache to skip further identifiability checks on non-irreducible coalitions. In the following, we will indicate where in the do-SHAP process this check should be performed.

\subsection{SCM implementation}
Given that $\G$ contains a latent confounder, we choose DCGs \citep{parafita2022dcg} as the SCM architecture, and for generality (to adjust to more complex data distributions) employ Normalizing Causal Flows (NCFs) as the node architecture. Refer to \cref{subsec:implementation} for a possible implementation of NCFs (without domain adjustment, given that our variables are unconstrained real-valued r.v.s). These NCFs provide a general function $X = f_X(E_X, \Theta_X(Pa'_X))$ (with $E_X$ the corresponding exogenous noise signal for node $X$), which is used to sample new values $x \sim \P(X \mid Pa'_X)$ as well as to compute the log-likelihood of these values given its parents: $\log P(x \mid pa'_X)$. The choice of prior distribution for the exogenous signal $E_X$ and for the latent confounder $\Confounder{X}{B}$ is irrelevant, as long as the desired queries to estimate are identifiable (which has been previously tested) and the particular choice of prior guarantees enough modeling capacity to represent the data distribution $P(\X)$. In this particular case, we choose a $\mathcal{N}(0, 1)$ for both priors.

Regarding the function $\Theta_X(Pa'_X)$, it returns the appropriate function parameters $\theta_X$ for each node's function $f_X$. These parameters define the shape of the distribution $\P(X \mid Pa'_X)$ and as such depend on the values $pa'_X$ of $Pa'_X$. After that, $f_X$ only depends on the parameters $\theta_X$ and the exogenous noise $\varepsilon_X \sim \P(E_X)$. This function $\Theta_X(.)$ could be modeled with a simple Multilayer Perceptron (MLP), one for each node, but this would inevitably result in overfitting for larger graphs. Instead, we employ the Graphical Conditioner presented in \citep{parafita2022dcg}, a single MLP network that takes every node $\X$ and latent confounders $\U$ as the input and returns all parameters $\{\theta_X \mid X \in \X\}$ as the output. By using a particular training and inference strategy, the Graphical Conditioner allows to compute each of these functions $\Theta_X$ independently through a single network, thereby reducing training time and overfitting risk.

\subsection{SCM training}

DCGs are trained with Maximum Likelihood Estimation, so we will define, for a particular sample $\textbf{x} \sim \P(\X)$, the Negative Log-Likelihood (NLL) loss to minimize as the training objective. Given the graph structure $\G$ of the data distribution to model, we can derive two formulas, one for the Markovian case (no latent confounders),
\begin{equation}
\mathcal{L} := -\log P(\textbf{x}) = -\sum_{X \in \X} \log P(x \mid pa_X),
\label{eq:markv_nll}
\end{equation}
the other for the semi-Markovian case (with latent confounders),
\begin{equation}
\mathcal{L} := -\log P(\textbf{x}) = - \log \expectation[\U]{\exp \sum_{X \in \X} \log P(x \mid pa'_X)}.
\label{eq:semimarkv_nll}
\end{equation}
In our example, we need to employ \cref{eq:semimarkv_nll}, given that $\U = \{\Confounder{X}{B}\} \neq \emptyset$. This latter expectation can be estimated by generating $M$ \iid samples from $P(\U)$ and averaging the results of the expectation's contents. The terms $\log P(x \mid pa'_X)$ can be estimated by the predefined node architectures (NCF in our example) using a simple function. Finally, the Monte Carlo average can be computed using the log-sum-exp trick for numerical stability.

By running an optimization algorithm (\eg AdamW \citep{loshchilov2018decoupled}) on the average of these NLL losses for random mini-batches of samples, we can learn the data distribution $P(\X)$ with our SCM, from which we can now estimate (identifiable) do-SVs.

\subsection{do-SHAP estimation}
\label{subsec:do_shap_estimation}

Let us describe how to estimate do-SVs with our approach.

Firstly, we need to run the SHAP formula. In this case, with only 5 variables, we could employ the exact formula directly, using \cref{eq:shap_coalitions}. Instead, we exemplify the more general approach, compatible with larger graphs, with the permutations formula in \cref{eq:shap_permutations}, which we approximate as described in \cref{subsec:approx_shap}, here re-established for reader's convenience:
\begin{equation}
\phi_{\nu_{\textbf{x}}}(X) = \expectation[\pi \sim \mathcal{U}(\Pi(\X))]{\nu_{\textbf{x}}(\X_{\leq_\pi X}) - \nu_{\textbf{x}}(\X_{<_\pi X})},
\label{eq:shap_approx}
\end{equation}
where the expectation over permutations $\pi$ is estimated by generating $M$ i.i.d. permutations uniformly, and the internal $\nu_{\textbf{x}}(.)$ terms are estimated as described in the following. However, before we run the $\nu_{\textbf{x}}$-estimation procedure, we employ an FRA-cache to compute the Frontier-Irreducible subsets $\textbf{S}'$ of $\X_{\leq_\pi X}$ and $\X_{<_\pi X}$, hence reducing the number of $\nu_{\textbf{x}}$-computations required.

Let us now discuss FRA. Consider the permutation $\pi = (A, B, Z, X, C)$. We need to attempt to reduce sets $\X_{\leq_\pi X} = \{A, B, Z, X\}$ and $\X_{<_\pi X} = \{A, B, Z\}$. By running the FRA algorithm (see \cref{alg:fr1_fra} for a simpler definition with sets), we can see that both sets can be reduced by removing $A$, since $B$ acts as a frontier between $A$ and $Y$. Hence, we compute both values $\nu_{\textbf{x}}(\{B, Z, X\})$ and $\nu_{\textbf{x}}(\{B, Z\})$ and store their results in a cache for later look-up. If, on another permutation $\pi'$, we encounter coalitions with irreducible sets that have been computed before, we can retrieve the results from the cache directly, thereby reducing the number of required computations.

Now, if we have not tested identifiability at the beginning of the process, we must confirm identifiability of all encountered queries $\nu(\textbf{S})$ before proceeding with their estimation, but only for the corresponding irreducible sets $\textbf{S'}$. Whether one of such queries is identifiable or not is stored in a separate cache to avoid running the ID algorithm again once another coalition results in the same irreducible set. Finally, if any such query is deemed unidentifiable, the process must halt with an error, meaning that do-SVs cannot be estimated in this particular graph.

Then, let us describe how to estimate an arbitrary coalition value, $\nu_{\textbf{x}}(\textbf{S})$, which is accomplished with a general sampling procedure. In order to estimate this query, we need to generate $M$ \iid samples $\sample{\textbf{x}}{i} \sim \P(\X \mid do(\textbf{S} = \textbf{s}))$. We start by generating values $\sample{\varepsilon}{i} \sim \P(\E)$ and $\sample{u}{i} \sim \P(\U)$ from their respective prior distributions. Afterwards, we go node by node $X \in \X$ following any topological order of the graph, using the corresponding sampling functions $\sample{x}{i} = f_X(\sample{\varepsilon_X}{i}, \Theta_X(\sample{pa'_X}{i}))$ unless the variable $X$ is in the intervened coalition $\textbf{S}$, in which case $\sample{x}{i}$ becomes the corresponding value from the to-be-explained sample $\textbf{x}$. After we have sampled from every variable in the graph, we have a joint sample $\sample{\textbf{x}}{i} \sim \P(\X \mid do(\textbf{S} = \textbf{s}))$. We pass each of these samples through our model $f_Y(.)$ to compute the corresponding $\sample{y}{i}$ values, which will finally be averaged for the final estimation of $\nu_{\textbf{x}}(\textbf{S})$.

As a note, while smaller coalitions are generally more expensive to evaluate than larger ones, since every node not in $\textbf{S}$ requires to use its sampling mechanism to generate its values, this extra cost is negligible in comparison to evaluating extra coalitions that reduce to the same irreducible subset.

Finally, we can add further optimizations to this procedure. Consider the tuple $\nu_{\textbf{x}}(\pi) = (\nu_{\textbf{x}}(\pi_{:k}))_{k=0..K}$, with $\pi_{:k}$ the set of variables up to index $k$ on $\pi$ (inclusive) (note that $\pi_{:0} = \varnothing$). When computing SVs, instead of using the formula directly, we sample permutations $\pi$, compute the corresponding tuples $\nu_{\textbf{x}}(\pi)$ and, from there, their diff-tuple $\Delta\nu_{\textbf{x}}(\pi) = (\nu_{\textbf{x}}(\pi_{:k}) - \nu_{\textbf{x}}(\pi_{:k-1}))_{k=1..K}$. Note that each of these $\Delta\nu_{\textbf{x}}(\pi)_k$ terms is the difference in the SHAP formula for variable $X := \pi_k$, so we can update our do-SV estimations $(\phi_X)_{X \in \X}$ simultaneously, which accelerates this computation by reducing the number of FRA-cache accesses and employing tensor operations. Along with this, other estimation approaches, such as antithetic sampling for these permutations, can be used to obtain better estimator efficiency, further reducing the number of required permutations.

\subsection{Final considerations}

Putting all of this together amounts to a seemingly complex method to estimate feature attribution: from finding the assumed Causal Graph $\G$, defining the appropriate SCM architecture, training such a model, estimating the $\nu$ values, running FRA to avoid computations, to finally arriving at the do-SV estimations. %This appendix tries to exemplify one such procedure, but the choice of SCM architecture, training procedure, $\nu$-estimation methods and SHAP algorithm can result in many potential alternative methods for do-SV estimation.
However, given already-implemented SCM architectures with appropriate training and estimation procedures ready for use, EA do-SHAP is made practical. For this reason, we advocate for an open-source library bringing together different SCM architectures for easier switching between approaches, facilitating the general applicability of EA methods, and as a result, of do-SHAP explanations.

\section{Impact statement}
\label{appendix:impact}

This work introduces a tool for estimating do-SVs on any black-box system. As an explainability tool, it offers positive societal impact by enabling deeper understanding of complex AI systems, benefiting both scientific progress and business decision-making. Crucially, it addresses ethical concerns about AI trustworthiness by supporting the \emph{right to explanation} in human-facing decision systems, facilitating debugging, and enhancing transparency and accountability. Additionally, by enabling audits of opaque systems, the tool promotes responsible AI regulation, protecting human rights against the risks of powerful but opaque AI systems.

One potential negative application of these techniques is in terms of willingly or unwillingly obfuscating harmful behavior in black-box models. Given the complexity of these techniques and the subsequent analysis required to derive conclusions from its outputs, explainability techniques could be used to provide a superficial layer of supervision and result in misleading conclusions about the behavior of the system. Great care with respect to the assumptions and outputs of these tools must be taken in their application.

%%%%%%%%%%%%%%%%%%%%%%%%%%%%%%%%%%%%%%%%%%%%%%%%%%%%%%%%%%%%

\newpage
\section*{NeurIPS Paper Checklist}

\begin{enumerate}

\item {\bf Claims}
    \item[] Question: Do the main claims made in the abstract and introduction accurately reflect the paper's contributions and scope?
    \item[] Answer: \answerYes{} % Replace by \answerYes{}, \answerNo{}, or \answerNA{}.
    \item[] Justification: in the second half of the abstract and the last paragraph of the introduction, we explain the contributions of this paper, which correspond to the claims and demonstrations that can be found in the rest of the work. None of the contributions contains a limitation significant enough to warrant mention so early in the paper and without the proper context; an extensive discussion on the limitations of our approach can be found in \cref{sec:limitations}.
    \item[] Guidelines:
    \begin{itemize}
        \item The answer NA means that the abstract and introduction do not include the claims made in the paper.
        \item The abstract and/or introduction should clearly state the claims made, including the contributions made in the paper and important assumptions and limitations. A No or NA answer to this question will not be perceived well by the reviewers. 
        \item The claims made should match theoretical and experimental results, and reflect how much the results can be expected to generalize to other settings. 
        \item It is fine to include aspirational goals as motivation as long as it is clear that these goals are not attained by the paper. 
    \end{itemize}

\item {\bf Limitations}
    \item[] Question: Does the paper discuss the limitations of the work performed by the authors?
    \item[] Answer: \answerYes{} % Replace by \answerYes{}, \answerNo{}, or \answerNA{}.
    \item[] Justification: all assumptions are highlighted in their respective context with a bold-face Assumption paragraph title. All limitations are clearly listed and discussed in \cref{sec:limitations}.
    \item[] Guidelines:
    \begin{itemize}
        \item The answer NA means that the paper has no limitation while the answer No means that the paper has limitations, but those are not discussed in the paper. 
        \item The authors are encouraged to create a separate "Limitations" section in their paper.
        \item The paper should point out any strong assumptions and how robust the results are to violations of these assumptions (e.g., independence assumptions, noiseless settings, model well-specification, asymptotic approximations only holding locally). The authors should reflect on how these assumptions might be violated in practice and what the implications would be.
        \item The authors should reflect on the scope of the claims made, e.g., if the approach was only tested on a few datasets or with a few runs. In general, empirical results often depend on implicit assumptions, which should be articulated.
        \item The authors should reflect on the factors that influence the performance of the approach. For example, a facial recognition algorithm may perform poorly when image resolution is low or images are taken in low lighting. Or a speech-to-text system might not be used reliably to provide closed captions for online lectures because it fails to handle technical jargon.
        \item The authors should discuss the computational efficiency of the proposed algorithms and how they scale with dataset size.
        \item If applicable, the authors should discuss possible limitations of their approach to address problems of privacy and fairness.
        \item While the authors might fear that complete honesty about limitations might be used by reviewers as grounds for rejection, a worse outcome might be that reviewers discover limitations that aren't acknowledged in the paper. The authors should use their best judgment and recognize that individual actions in favor of transparency play an important role in developing norms that preserve the integrity of the community. Reviewers will be specifically instructed to not penalize honesty concerning limitations.
    \end{itemize}

\item {\bf Theory assumptions and proofs}
    \item[] Question: For each theoretical result, does the paper provide the full set of assumptions and a complete (and correct) proof?
    \item[] Answer: \answerYes{} % Replace by \answerYes{}, \answerNo{}, or \answerNA{}.
    \item[] Justification: we have paid close attention to the correctness of all the steps of the various proofs that the paper presents for its claims. The assumptions are also clearly stated.
    \item[] Guidelines:
    \begin{itemize}
        \item The answer NA means that the paper does not include theoretical results. 
        \item All the theorems, formulas, and proofs in the paper should be numbered and cross-referenced.
        \item All assumptions should be clearly stated or referenced in the statement of any theorems.
        \item The proofs can either appear in the main paper or the supplemental material, but if they appear in the supplemental material, the authors are encouraged to provide a short proof sketch to provide intuition. 
        \item Inversely, any informal proof provided in the core of the paper should be complemented by formal proofs provided in appendix or supplemental material.
        \item Theorems and Lemmas that the proof relies upon should be properly referenced. 
    \end{itemize}

    \item {\bf Experimental result reproducibility}
    \item[] Question: Does the paper fully disclose all the information needed to reproduce the main experimental results of the paper to the extent that it affects the main claims and/or conclusions of the paper (regardless of whether the code and data are provided or not)?
    \item[] Answer: \answerYes{} % Replace by \answerYes{}, \answerNo{}, or \answerNA{}.
    \item[] Justification: the main steps to reproduce all the experiments are stated in the experimental section of the paper and the appendices. Additionally, code is included in the supplementary material with proper guidelines to execute it.
    \item[] Guidelines:
    \begin{itemize}
        \item The answer NA means that the paper does not include experiments.
        \item If the paper includes experiments, a No answer to this question will not be perceived well by the reviewers: Making the paper reproducible is important, regardless of whether the code and data are provided or not.
        \item If the contribution is a dataset and/or model, the authors should describe the steps taken to make their results reproducible or verifiable. 
        \item Depending on the contribution, reproducibility can be accomplished in various ways. For example, if the contribution is a novel architecture, describing the architecture fully might suffice, or if the contribution is a specific model and empirical evaluation, it may be necessary to either make it possible for others to replicate the model with the same dataset, or provide access to the model. In general. releasing code and data is often one good way to accomplish this, but reproducibility can also be provided via detailed instructions for how to replicate the results, access to a hosted model (e.g., in the case of a large language model), releasing of a model checkpoint, or other means that are appropriate to the research performed.
        \item While NeurIPS does not require releasing code, the conference does require all submissions to provide some reasonable avenue for reproducibility, which may depend on the nature of the contribution. For example
        \begin{enumerate}
            \item If the contribution is primarily a new algorithm, the paper should make it clear how to reproduce that algorithm.
            \item If the contribution is primarily a new model architecture, the paper should describe the architecture clearly and fully.
            \item If the contribution is a new model (e.g., a large language model), then there should either be a way to access this model for reproducing the results or a way to reproduce the model (e.g., with an open-source dataset or instructions for how to construct the dataset).
            \item We recognize that reproducibility may be tricky in some cases, in which case authors are welcome to describe the particular way they provide for reproducibility. In the case of closed-source models, it may be that access to the model is limited in some way (e.g., to registered users), but it should be possible for other researchers to have some path to reproducing or verifying the results.
        \end{enumerate}
    \end{itemize}

\item {\bf Open access to data and code}
    \item[] Question: Does the paper provide open access to the data and code, with sufficient instructions to faithfully reproduce the main experimental results, as described in supplemental material?
    \item[] Answer: \answerYes{} % Replace by \answerYes{}, \answerNo{}, or \answerNA{}.
    \item[] Justification: data and code are submitted together with the paper to ensure reproducibility of the results. For synthetic data, we include the generation code and formulas. Any real-world datasets employed in this work are publicly-available and properly referenced.
    \item[] Guidelines:
    \begin{itemize}
        \item The answer NA means that paper does not include experiments requiring code.
        \item Please see the NeurIPS code and data submission guidelines (\url{https://nips.cc/public/guides/CodeSubmissionPolicy}) for more details.
        \item While we encourage the release of code and data, we understand that this might not be possible, so “No” is an acceptable answer. Papers cannot be rejected simply for not including code, unless this is central to the contribution (e.g., for a new open-source benchmark).
        \item The instructions should contain the exact command and environment needed to run to reproduce the results. See the NeurIPS code and data submission guidelines (\url{https://nips.cc/public/guides/CodeSubmissionPolicy}) for more details.
        \item The authors should provide instructions on data access and preparation, including how to access the raw data, preprocessed data, intermediate data, and generated data, etc.
        \item The authors should provide scripts to reproduce all experimental results for the new proposed method and baselines. If only a subset of experiments are reproducible, they should state which ones are omitted from the script and why.
        \item At submission time, to preserve anonymity, the authors should release anonymized versions (if applicable).
        \item Providing as much information as possible in supplemental material (appended to the paper) is recommended, but including URLs to data and code is permitted.
    \end{itemize}

\item {\bf Experimental setting/details}
    \item[] Question: Does the paper specify all the training and test details (e.g., data splits, hyperparameters, how they were chosen, type of optimizer, etc.) necessary to understand the results?
    \item[] Answer: \answerYes{} % Replace by \answerYes{}, \answerNo{}, or \answerNA{}.
    \item[] Justification: every one of these details (within reason) is detailed in the appendix. Additionally, we submit the code in the supplementary material for further details.
    \item[] Guidelines:
    \begin{itemize}
        \item The answer NA means that the paper does not include experiments.
        \item The experimental setting should be presented in the core of the paper to a level of detail that is necessary to appreciate the results and make sense of them.
        \item The full details can be provided either with the code, in appendix, or as supplemental material.
    \end{itemize}

\item {\bf Experiment statistical significance}
    \item[] Question: Does the paper report error bars suitably and correctly defined or other appropriate information about the statistical significance of the experiments?
    \item[] Answer: \answerYes{} % Replace by \answerYes{}, \answerNo{}, or \answerNA{}.
    \item[] Justification: we have included error bars at 2-sigma, specifying the number of replications and the factors of variability.
    \item[] Guidelines:
    \begin{itemize}
        \item The answer NA means that the paper does not include experiments.
        \item The authors should answer "Yes" if the results are accompanied by error bars, confidence intervals, or statistical significance tests, at least for the experiments that support the main claims of the paper.
        \item The factors of variability that the error bars are capturing should be clearly stated (for example, train/test split, initialization, random drawing of some parameter, or overall run with given experimental conditions).
        \item The method for calculating the error bars should be explained (closed form formula, call to a library function, bootstrap, etc.)
        \item The assumptions made should be given (e.g., Normally distributed errors).
        \item It should be clear whether the error bar is the standard deviation or the standard error of the mean.
        \item It is OK to report 1-sigma error bars, but one should state it. The authors should preferably report a 2-sigma error bar than state that they have a 96\% CI, if the hypothesis of Normality of errors is not verified.
        \item For asymmetric distributions, the authors should be careful not to show in tables or figures symmetric error bars that would yield results that are out of range (e.g. negative error rates).
        \item If error bars are reported in tables or plots, The authors should explain in the text how they were calculated and reference the corresponding figures or tables in the text.
    \end{itemize}

\item {\bf Experiments compute resources}
    \item[] Question: For each experiment, does the paper provide sufficient information on the computer resources (type of compute workers, memory, time of execution) needed to reproduce the experiments?
    \item[] Answer: \answerYes{} % Replace by \answerYes{}, \answerNo{}, or \answerNA{}.
    \item[] Justification: these experiments were carried out in personal computers. Since this is not a deterrent for the reproducibility of our results, we mention it briefly at the beginning of appendix \ref{appendix:experiments}.
    \item[] Guidelines:
    \begin{itemize}
        \item The answer NA means that the paper does not include experiments.
        \item The paper should indicate the type of compute workers CPU or GPU, internal cluster, or cloud provider, including relevant memory and storage.
        \item The paper should provide the amount of compute required for each of the individual experimental runs as well as estimate the total compute. 
        \item The paper should disclose whether the full research project required more compute than the experiments reported in the paper (e.g., preliminary or failed experiments that didn't make it into the paper). 
    \end{itemize}
    
\item {\bf Code of ethics}
    \item[] Question: Does the research conducted in the paper conform, in every respect, with the NeurIPS Code of Ethics \url{https://neurips.cc/public/EthicsGuidelines}?
    \item[] Answer: \answerYes{} % Replace by \answerYes{}, \answerNo{}, or \answerNA{}.
    \item[] Justification: we have read the NeurIPS Code of Ethics carefully and can ensure that the research carried out for this paper conforms with the ethical code in every aspect. 
    \item[] Guidelines:
    \begin{itemize}
        \item The answer NA means that the authors have not reviewed the NeurIPS Code of Ethics.
        \item If the authors answer No, they should explain the special circumstances that require a deviation from the Code of Ethics.
        \item The authors should make sure to preserve anonymity (e.g., if there is a special consideration due to laws or regulations in their jurisdiction).
    \end{itemize}

\item {\bf Broader impacts}
    \item[] Question: Does the paper discuss both potential positive societal impacts and negative societal impacts of the work performed?
    \item[] Answer: \answerYes{} % Replace by \answerYes{}, \answerNo{}, or \answerNA{}.
    \item[] Justification: we have included an Impact Statement in appendix \ref{appendix:impact} due to space restrictions. Nevertheless, this work does not have any direct negative impact on society, which is why did not feel it necessary to include it in the main paper.
    \item[] Guidelines:
    \begin{itemize}
        \item The answer NA means that there is no societal impact of the work performed.
        \item If the authors answer NA or No, they should explain why their work has no societal impact or why the paper does not address societal impact.
        \item Examples of negative societal impacts include potential malicious or unintended uses (e.g., disinformation, generating fake profiles, surveillance), fairness considerations (e.g., deployment of technologies that could make decisions that unfairly impact specific groups), privacy considerations, and security considerations.
        \item The conference expects that many papers will be foundational research and not tied to particular applications, let alone deployments. However, if there is a direct path to any negative applications, the authors should point it out. For example, it is legitimate to point out that an improvement in the quality of generative models could be used to generate deepfakes for disinformation. On the other hand, it is not needed to point out that a generic algorithm for optimizing neural networks could enable people to train models that generate Deepfakes faster.
        \item The authors should consider possible harms that could arise when the technology is being used as intended and functioning correctly, harms that could arise when the technology is being used as intended but gives incorrect results, and harms following from (intentional or unintentional) misuse of the technology.
        \item If there are negative societal impacts, the authors could also discuss possible mitigation strategies (e.g., gated release of models, providing defenses in addition to attacks, mechanisms for monitoring misuse, mechanisms to monitor how a system learns from feedback over time, improving the efficiency and accessibility of ML).
    \end{itemize}
    
\item {\bf Safeguards}
    \item[] Question: Does the paper describe safeguards that have been put in place for responsible release of data or models that have a high risk for misuse (e.g., pretrained language models, image generators, or scraped datasets)?
    \item[] Answer: \answerNA{} % Replace by \answerYes{}, \answerNo{}, or \answerNA{}.
    \item[] Justification: we believe that our paper does not pose important risks such as those mentioned.
    \item[] Guidelines:
    \begin{itemize}
        \item The answer NA means that the paper poses no such risks.
        \item Released models that have a high risk for misuse or dual-use should be released with necessary safeguards to allow for controlled use of the model, for example by requiring that users adhere to usage guidelines or restrictions to access the model or implementing safety filters. 
        \item Datasets that have been scraped from the Internet could pose safety risks. The authors should describe how they avoided releasing unsafe images.
        \item We recognize that providing effective safeguards is challenging, and many papers do not require this, but we encourage authors to take this into account and make a best faith effort.
    \end{itemize}

\item {\bf Licenses for existing assets}
    \item[] Question: Are the creators or original owners of assets (e.g., code, data, models), used in the paper, properly credited and are the license and terms of use explicitly mentioned and properly respected?
    \item[] Answer: \answerYes{} % Replace by \answerYes{}, \answerNo{}, or \answerNA{}.
    \item[] Justification: any public datasets are properly cited in the text. Every asset has a \texttt{LICENSE.txt} file attached to it. Licenses are respected.
    \item[] Guidelines:
    \begin{itemize}
        \item The answer NA means that the paper does not use existing assets.
        \item The authors should cite the original paper that produced the code package or dataset.
        \item The authors should state which version of the asset is used and, if possible, include a URL.
        \item The name of the license (e.g., CC-BY 4.0) should be included for each asset.
        \item For scraped data from a particular source (e.g., website), the copyright and terms of service of that source should be provided.
        \item If assets are released, the license, copyright information, and terms of use in the package should be provided. For popular datasets, \url{paperswithcode.com/datasets} has curated licenses for some datasets. Their licensing guide can help determine the license of a dataset.
        \item For existing datasets that are re-packaged, both the original license and the license of the derived asset (if it has changed) should be provided.
        \item If this information is not available online, the authors are encouraged to reach out to the asset's creators.
    \end{itemize}

\item {\bf New assets}
    \item[] Question: Are new assets introduced in the paper well documented and is the documentation provided alongside the assets?
    \item[] Answer: \answerNA{} % Replace by \answerYes{}, \answerNo{}, or \answerNA{}.
    \item[] Justification: this work does not release new assets.
    \item[] Guidelines:
    \begin{itemize}
        \item The answer NA means that the paper does not release new assets.
        \item Researchers should communicate the details of the dataset/code/model as part of their submissions via structured templates. This includes details about training, license, limitations, etc. 
        \item The paper should discuss whether and how consent was obtained from people whose asset is used.
        \item At submission time, remember to anonymize your assets (if applicable). You can either create an anonymized URL or include an anonymized zip file.
    \end{itemize}

\item {\bf Crowdsourcing and research with human subjects}
    \item[] Question: For crowdsourcing experiments and research with human subjects, does the paper include the full text of instructions given to participants and screenshots, if applicable, as well as details about compensation (if any)? 
    \item[] Answer: \answerNA{} % Replace by \answerYes{}, \answerNo{}, or \answerNA{}.
    \item[] Justification: our paper does not involve crowdsourcing nor research with human subjects.
    \item[] Guidelines:
    \begin{itemize}
        \item The answer NA means that the paper does not involve crowdsourcing nor research with human subjects.
        \item Including this information in the supplemental material is fine, but if the main contribution of the paper involves human subjects, then as much detail as possible should be included in the main paper. 
        \item According to the NeurIPS Code of Ethics, workers involved in data collection, curation, or other labor should be paid at least the minimum wage in the country of the data collector. 
    \end{itemize}

\item {\bf Institutional review board (IRB) approvals or equivalent for research with human subjects}
    \item[] Question: Does the paper describe potential risks incurred by study participants, whether such risks were disclosed to the subjects, and whether Institutional Review Board (IRB) approvals (or an equivalent approval/review based on the requirements of your country or institution) were obtained?
    \item[] Answer: \answerNA{} % Replace by \answerYes{}, \answerNo{}, or \answerNA{}.
    \item[] Justification: our paper does not involve studies with subjects.
    \item[] Guidelines:
    \begin{itemize}
        \item The answer NA means that the paper does not involve crowdsourcing nor research with human subjects.
        \item Depending on the country in which research is conducted, IRB approval (or equivalent) may be required for any human subjects research. If you obtained IRB approval, you should clearly state this in the paper. 
        \item We recognize that the procedures for this may vary significantly between institutions and locations, and we expect authors to adhere to the NeurIPS Code of Ethics and the guidelines for their institution. 
        \item For initial submissions, do not include any information that would break anonymity (if applicable), such as the institution conducting the review.
    \end{itemize}

\item {\bf Declaration of LLM usage}
    \item[] Question: Does the paper describe the usage of LLMs if it is an important, original, or non-standard component of the core methods in this research? Note that if the LLM is used only for writing, editing, or formatting purposes and does not impact the core methodology, scientific rigorousness, or originality of the research, declaration is not required.
    %this research? 
    \item[] Answer: \answerNA{} % Replace by \answerYes{}, \answerNo{}, or \answerNA{}.
    \item[] Justification: no significant usage of LLMs merits mention as requested.
    \item[] Guidelines:
    \begin{itemize}
        \item The answer NA means that the core method development in this research does not involve LLMs as any important, original, or non-standard components.
        \item Please refer to our LLM policy (\url{https://neurips.cc/Conferences/2025/LLM}) for what should or should not be described.
    \end{itemize}

\end{enumerate}

\end{document}